\newtheorem*{rep@theorem}{\rep@title}
\newcommand{\newreptheorem}[2]{%
\newenvironment{rep#1}[1]{%
 \def\rep@title{#2 \ref{##1}}%
 \begin{rep@theorem}}%
 {\end{rep@theorem}}}
\def \E {\mathbb{E}}
\def \R {\mathbb{R}}
\def \P {\mathbb{P}}
\def \Rad {\mathfrak{R}}
\def \bx {{\bf x}}
\def \bz {{\bf z}}
\def \bw {{\bf w}}
\def \bu {{\bf u}}
\newcommand\numberthis{\addtocounter{equation}{1}\tag{\theequation}}
\newcommand{\ind}{\mathds{1}}    
\def \cE {\mathcal{E}}
\def \cF {\mathcal{F}}
\def \cH {\mathcal{H}}
\def \cP {\mathcal{P}}
\def \cS {\mathcal{S}}
\def \cV {\mathcal{V}}
\def \cW {\mathcal{W}}
\def \cX {\mathcal{X}}
\def \cY {\mathcal{Y}}
\def \abst {\perp}
\newcommand{\ldef}{\vcentcolon=}
\newcommand{\rdef}{=\vcentcolon}
\newcommand{\mycomment}[1]{}
\def \vc {\cV} 
\def \err {\cE}
\def \haterr {\widehat{\cE}}
\def \hatcov {\widehat{\cP}}
\def \lossz {\ell_{0-1}} 
\def \lossa {\ell_{\abst}}
\def \lossza {\ell_{0-1}^{\abst}}
\def \given { \mid }
\def \pmin {p_{\!_{0}}}
\def \subsetS {\mathrm{S}'}
\theoremstyle{plain}
\newtheorem{theorem}{Theorem}[section]
\newtheorem{lemma}[theorem]{Lemma}
\newtheorem{corollary}[theorem]{Corollary}
\theoremstyle{definition}
\newtheorem{definition}[theorem]{Definition}
\theoremstyle{remark}
\def \codeURL {\url{https://github.com/harit7/TBAL}}
\title{\textbf{Promises and Pitfalls of   Threshold-based Auto-labeling}}
\author{\textbf{Harit Vishwakarma} \\  \texttt{hvishwakarma@cs.wisc.edu} \\ \small{University of Wisconsin-Madison} \and \textbf{Heguang Lin} \\  \texttt{hglin@seas.upenn.edu} \\ \small{University of Pennsylvania} \\ \and  \textbf{Frederic Sala} \\ \texttt{fredsala@cs.wisc.edu} \\ \small{University of Wisconsin-Madison} \and \textbf{Ramya Korlakai Vinayak} \\ \texttt{ramya@ece.wisc.edu} \\ \small{University of Wisconsin-Madison}  }
\date{}
\begin{document}

\maketitle

\begin{abstract}
\vspace{5pt}
Creating large-scale high-quality labeled datasets is a major bottleneck in supervised machine learning workflows. Threshold-based auto-labeling (TBAL), where validation data obtained from humans is used to find a confidence threshold above which the data is machine-labeled, reduces reliance on manual annotation. TBAL is emerging as a widely-used solution in practice. Given the long shelf-life and diverse usage of the resulting datasets, understanding when the data obtained by such auto-labeling systems can be relied on is crucial. This is the first work to analyze TBAL systems and derive sample complexity bounds on the amount of human-labeled validation data required for guaranteeing the quality of machine-labeled data. Our results provide two crucial insights. First, reasonable chunks of unlabeled data can be automatically and accurately labeled by seemingly bad models. Second, a hidden downside of TBAL systems is potentially prohibitive validation data usage. Together, these insights describe the promise and pitfalls of using such systems. We validate our theoretical guarantees with extensive experiments on synthetic and real datasets.\footnote{\codeURL} \footnote{Appeared as a spotlight paper in the 37th Annual Conference on Neural Information Processing Systems (NeurIPS), 2023}
\end{abstract}

\section{Introduction}





Machine learning (ML) models with millions or even billions of parameters are used to obtain state-of-the-art performance in many applications, e.g., object identification~\citep{Redmon17}, machine translation~\citep{Vaswani17}, and fraud detection~\citep{Zeng21}. Such large-scale models require training on large-scale labeled datasets. As an outcome, the typical supervised ML workflow begins with the construction of a large-scale high quality dataset. Datasets with up to millions of labeled data points have played a pivotal role in the advancement of computer vision. However, collecting labeled data is an expensive and time consuming process.  A common approach is to rely on the services of crowdsourcing platforms such as Amazon Mechanical Turk (AMT) to get groundtruth labels.

Even with crowdsourcing ~\citep{raykar_learning_from_crowds, vinayak_triangle}, obtaining labels for the entire dataset is expensive. To reduce costs, data labeling systems that partially rely on using a model's predictions as labels have been developed. Such systems date back to teacher-less training \citep{Fralick67}. Modern examples include Amazon Sagemaker Ground Truth \citep{sgt} and others \citep{superbAI, Samsung-SDS, Airbus-active-labeling,venguswamy2021PetabyteScale-AL}. These approaches can be broadly termed  \emph{auto-labeling}.

Auto-labeling systems aim to label unlabeled data using predictions from ML models that are often trained on small amounts of human labeled data which can produce incorrect labels. The shelf life of datasets is longer than those of models, e.g., ImageNet 
continues to be a benchmark for many computer vision tasks~\citep{deng2009imagenet} fifteen years after its initial development. As a result, to reliably train new models on auto-labeled datasets and deploy them, we need a thorough understanding of how reliable the datasets output by these auto-labeling systems are. 
%
Unfortunately, many widely used commercial auto-labeling systems~\citep{sgt, Samsung-SDS} are largely opaque with limited public information on their functionality. 
It is therefore unclear whether the quality of the datasets obtained can be trusted. 
To address this, we study the high level workflow of a popular \emph{threshold-based} auto-labeling (TBAL) system (see Figure \ref{fig:al-system}). 
We emphasize that our goal is to understand such systems and their performance---not to promote them as a superior alternative to other approaches.
Our goal is: 

\begin{tcolorbox}[top=3pt,bottom=3pt,left=3pt,right=3pt, colback={Gray!8!White}]
\textbf{Goal.} Develop a fundamental understanding of TBAL systems. This is crucial since there is a lack of theoretical understanding of the reliability of these systems despite their wide adoption.


 \end{tcolorbox}
 
The TBAL systems we study (Figure~\ref{fig:al-system}) work iteratively. At a high level, in each iteration, the system trains a model on currently available human labeled data and decides to label certain parts of unlabeled data using the trained model by finding high-accuracy regions using validation data. It then collects human labels on a small portion of unlabeled data that is deemed helpful for training the current model in the next iteration.
The validation data is created by sampling i.i.d. points from the unlabeled pool and querying human labels for them. 
In addition to training data, the validation data is a major driver of the cost and accuracy of auto-labeling and will be a key component in our study. 

\begin{figure*}[t]
\centering
\includegraphics[trim= 20 30 20 30,clip ,width=0.95\textwidth]{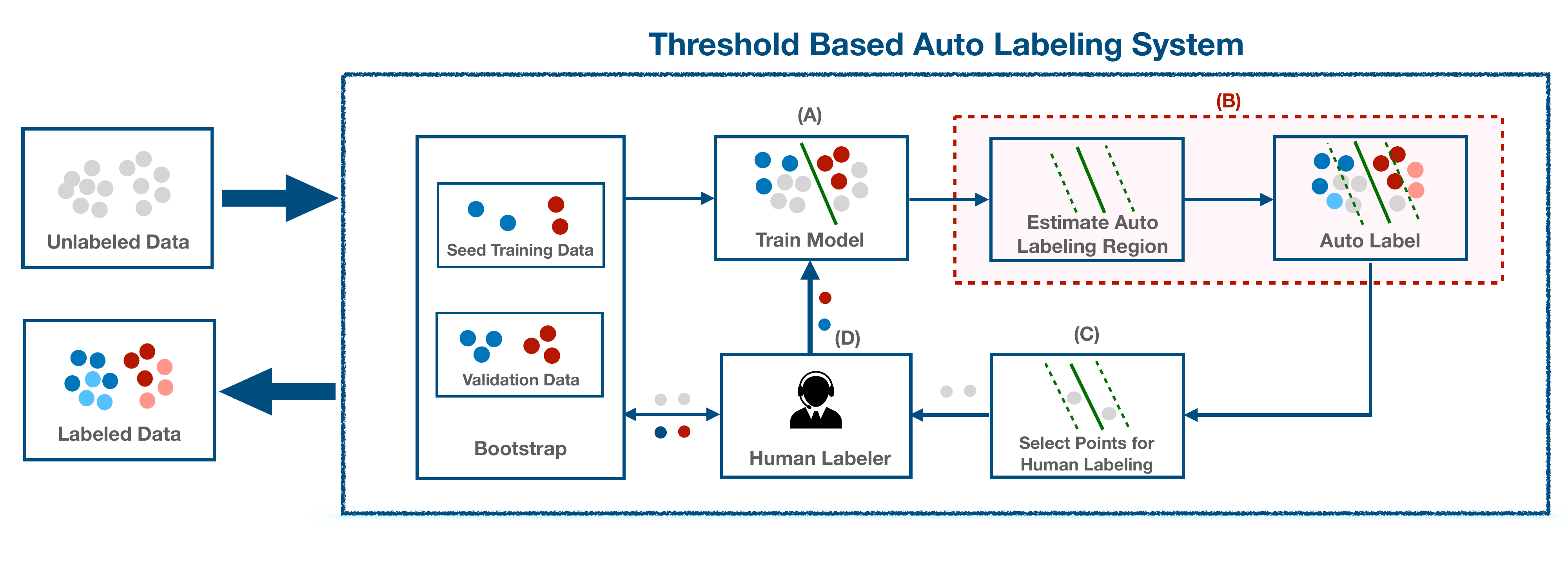}
\caption{{High-level workflow threshold-based auto-labeling (TBAL). Box (B) shows the key component estimating the auto-labeling region using validation data and auto-labeling points in it. 
}}
\label{fig:al-system}
\end{figure*}

\textbf{Our Contributions.} We study TBAL systems (Figure~\ref{fig:al-system}) and make the following contributions:
\begin{itemize}[leftmargin=18pt, topsep=0pt]
    \item Provide the \textbf{first theoretical characterization of TBAL systems}, developing tradeoffs between the quantity of manually labeled data and the quantity and quality of auto-labeled data (Section~\ref{sec:TheoreticalAnalysis}).
    \item Empirical results validating our theoretical understanding on real and synthetic data (Section~\ref{sec:experiments}).
\end{itemize}

Our results reveal \textbf{two important insights}. Promisingly, even poor quality models are capable of reliably labeling at least some data when we have access to sufficient validation data and a good confidence function that can accurately quantify the confidence of a given model on any data point. On the downside, in certain scenarios, the quantity of the validation data required to reach a certain quantity and quality of auto-labeled data can be high.

\section{Threshold-Based Auto-Labeling Algorithm}\label{sec:TBALdetails}

We begin with the problem setup and describe the TBAL algorithm that is inspired by the commercial systems \citep{sgt}. Then we provide experiments and theoretical analysis shedding light on the pros and cons of TBAL. We emphasize that TBAL is not our proposal and our goal is to \emph{understand} the effectiveness of such an auto-labeling system.
%

\subsection{Problem Setup}\label{sec:setup}

\paragraph{Notation.}
Let the instance and label spaces be $\mathcal{X}$ and $\mathcal{Y} = \{1,\ldots, k\}$. We assume that there is some \emph{deterministic} but unknown function $f^*:\mathcal{X} \mapsto \mathcal{Y}$ that assigns true label $y=f^*(\bx)$ to any $\bx\in \mathcal{X}$. We also assume that there is a \emph{noiseless} \emph{oracle} $\mathcal{O}$ that can provide the true label $y\in{\mathcal{Y}}$ for any given $\bx \in \mathcal{X}$. Let $X_{pool} \subseteq \mathcal{X}$ denote a sufficiently large pool of unlabeled data to be labeled.

The goal of an auto-labeling algorithm 
is to produce accurate labels $\tilde{y}_i \in \mathcal{Y}$ for points $ \bx_i \in X_{pool} $ while minimizing the number of queries to the oracle.   
Let $[m] := \{1, 2, \ldots , m\}$, $A\subseteq [N]$ be the set of indices of auto-labeled points, and $X_{pool}(A)$ be these points. The \emph{auto-labeling error} $\haterr(X_{pool}(A))$ and the \emph{coverage}  $\hatcov(X_{pool}(A))$ are defined as 
\begin{equation}
  \haterr(X_{pool}(A)) \ldef \frac{1}{N_a}\sum_{i \in A}  \mathds{1}(\tilde{y}_i \neq f^*(\bx_i))\ \text{  and  }\
   \hatcov(X_{pool}(A)) \ldef \frac{|A|}{N} = \frac{N_a}{N},
\end{equation}
where $N_a$ denotes the size of auto-labeled set $A$.
TBAL algorithm aims to auto-label the dataset so that $\haterr(X_{pool}(A)) \le \epsilon_a$ while maximizing coverage $\hatcov(X_{pool}(A))$ for any given $\epsilon_a \in (0,1)$.

\begin{algorithm}[H]
 \caption{Threshold-based Auto-Labeling (TBAL)}
 \begin{algorithmic}[1]
    \Require Unlabeled pool $X_{pool}$, auto labeling error threshold $\epsilon_a$,  seed data size $n_s$, batch size for active query $n_b$, labeled validation data pool $D_{val}$.
    \Ensure $D_{out} =\{(\bx_i,\tilde{y}_i): \forall \bx_i \in X_{pool} \}$ 
    
    \State  $X_u^{(1)} = X_{pool} ;  D_{val}^{(1)} = D_{val}$
    
    \State $D_{query}^{(1)} = \texttt{randomly\_query\_batch}(X_u^{(1)},n_s)$
    \State Remove queried points from $X_u^{(1)}$
    \State $D^{(0)}_{train} = \phi ; i=1 ; D_{out} = D_{out}^{(1)} = D_{query}^{(1)}$
    \While {$X_u^{(i)} \neq \phi $}
    
    \State $D^{(i)}_{train} = D^{(i-1)}_{train} \cup D^{(i)}_{query}$
    \State $\hat{h}_i = \texttt{empirical\_risk\_min}(\mathcal{H}, D_{train}^{(i)})$
    
        \State $\hat{t}_i \,= \text{Estimate Threshold}(X_u^{(i)},D_{val}^{(i)}, \epsilon_a,\hat{h}_i,n_0 )$
    \State $D_{auto}^{(i)} \,= \{ (\bx,\hat{h}_i(\bx)) : \bx \in X_u^{(i)} , g(\hat{h}_i,\bx)\ge \hat{t}_i \}$

    \State $X_u^{(i+1)} \!= \{ \bx : \bx \in X_u^{(i)}, g(\hat{h}_i,\bx) < \hat{t}_i \} $
    \State $D_{val}^{(i+1)}\! = \{ (\bx,y) \!: (\bx,y) \in D_{val}^{(i)} , g(\hat{h}_i,\bx) < \hat{t}_i  \} $
    
    \State $D_{query}^{(i+1)} = \texttt{active\_query\_batch}(\hat{h}_i, X_u^{(i+1)},n_b)$
    \State Remove queried points from $X_u^{(i+1)}$
    \State  $D_{out} = D_{out} \cup D_{auto}^{(i)} \cup D_{query}^{(i+1)}$
    \State $i = i +1$ 
    \EndWhile
 \end{algorithmic}
 \label{alg:main_algo}
 \end{algorithm}

\vspace{-5pt}
\begin{algorithm}[H]
\caption{Estimate Threshold}
  \begin{algorithmic}[1]
  \Require $X_u^{(i)}, D_{val}^{(i)}, \epsilon_a, \hat{h}_i, n_0$
  \Ensure  Threshold $\hat{t}_i$ 
    \State $T_i = \{  g(\hat{h}_i,\bx):  (\bx,y) \in D_{val}^{(i)}  \} $
    \For{ $t \in T_i$ } 
    \State $N_t = |\{ \bx \in X_v^{(i)}: g(\hat{h}_i,\bx)\ge t \}| $
    \EndFor
    \State $\tilde{T}_i = \{  t  : t \in T_i, N_t \ge n_0   \} \cup \{ \infty \} $
     
    \State $\hat{t}_i =\min \{ t \in \tilde{T}_i: \haterr_a(\hat{h}_i,t|X_v^{(i)}) + C_1\hat{\sigma}_i  \le \epsilon_a  \} $
  \end{algorithmic}
  
  \label{alg:gamma-estimate}
\end{algorithm}

\textbf{Hypothesis Class and Confidence Function.} 
A TBAL algorithm is given a fixed \emph{hypothesis space} $\mathcal{H}$ and a \emph{confidence function} $g:\mathcal{H}\times \mathcal{X}\mapsto T \subseteq \R^+$ that quantifies the confidence of $h \in \cH$ on any data point $\bx \in \mathcal{X}$. Confidence functions include prediction probabilities and margin scores. For example, when $\cH$ is a set of unit-norm homogeneous linear classifiers, i.e. $h_{\bw}(\bx) = \text{sign}(\bw^T\bx) $ with $\bw \in \{\bw \in \R^d: ||\bw||_2=1\}$, a reasonable confidence function is $g(h_{\bw},\bx) = |\bw^T\bx|$.


Note that the target $f^*$ might not be in the hypothesis space $\mathcal{H}$. Our analysis (Section~\ref{sec:TheoreticalAnalysis}) shows that the TBAL algorithm can work well, i.e., accurately label a  reasonable fraction of unlabeled data with simpler hypothesis classes that do not contain the target hypothesis $f^*$. We illustrate this with a simple example in Section~\ref{sec:autolabelingComparisons} and Figure~\ref{fig:circles-main}.
%
%
Note as well that the features $\bx$ could be raw features or representations from self-supervised techniques, pre-trained models, etc. 
We analyze TBAL in settings (i) with no assumptions on the features and (ii) when the features are linearly separable.

\vspace{0pt}
\subsection{Description of the algorithm}
The TBAL algorithm is given in Algorithm~\ref{alg:main_algo}. It starts with an unlabeled pool $X_{pool}$ and an auto-labeling error threshold $\epsilon$. For ease of exposition, the algorithm is given the labeled validation set $D_{val}$ of size $N_v$ separately. In practice, it is created by selecting points at random from $X_{pool}$. The algorithm starts with an initial batch of $n_s$ random data points and obtains oracle labels for these. 
The algorithm works in an iterative manner using the following steps. 
\begin{enumerate}[leftmargin=*, topsep=0pt, itemsep=0pt]
    \item  Oracle labeled data obtained in each iteration $i$ is added to the training pool $D_{train}^{(i)}$. It is used to train a model $\hat{h}_i$ by performing empirical risk minimization (ERM). 
    
    \item Find the region where $\hat{h}_i$ can auto-label accurately. The algorithm estimates a threshold $\hat{t}_i$ on the confidence score above which it can auto-label with the desired auto-labeling accuracy on the validation data (see Algorithm~\ref{alg:gamma-estimate}). Thresholds that have too little validation data are discarded, since their estimates are unreliable. The minimum threshold is found such that the sum of the estimated error $\haterr_a(\hat{h}_i,t|X_v^{(i)})$ (see eq.~\eqref{eq:auto_lbl_err}) and an upper confidence bound using the standard deviation of the estimated error, is at most the given auto-labeling error threshold. 
    \item Auto-label the points in the pool, $X_u^{(i)}$, which have confidence $g(\hat{h}_i, \bx) > \hat{t}_i$. These are added to the set $D_{out}$ and removed from the unlabeled pool. The validation points that fall in the auto-labeled region are also removed from the validation set so that in the next round the validation set and the unlabeled pool are from the same region and the same distribution. 
    Removing the auto-labeled points from $X_\text{pool}$ is a crucial step in the TBAL algorithm that enables it to focus only on the remaining unlabeled regions in the next iteration.
    \item If there are points left in $X_\text{pool}$, the algorithm selects points using some active querying strategy \citep{settles2009active}, 
    obtains human labels for them, and adds them to the training pool. Note that the auto-labeled data is not added to the training set.
\end{enumerate}
This process continues until there are no data points left to be labeled. The algorithm then outputs the labeled dataset, which is a mixture of human- and machine-labeled points.

\begin{figure*}[t]
\vspace{-6pt}
  \centering
  \mbox{
    \subfigure[\small{TBAL and AL+SC on Circles dataset} \label{fig:circles_scatter}]{\includegraphics[scale=0.3]{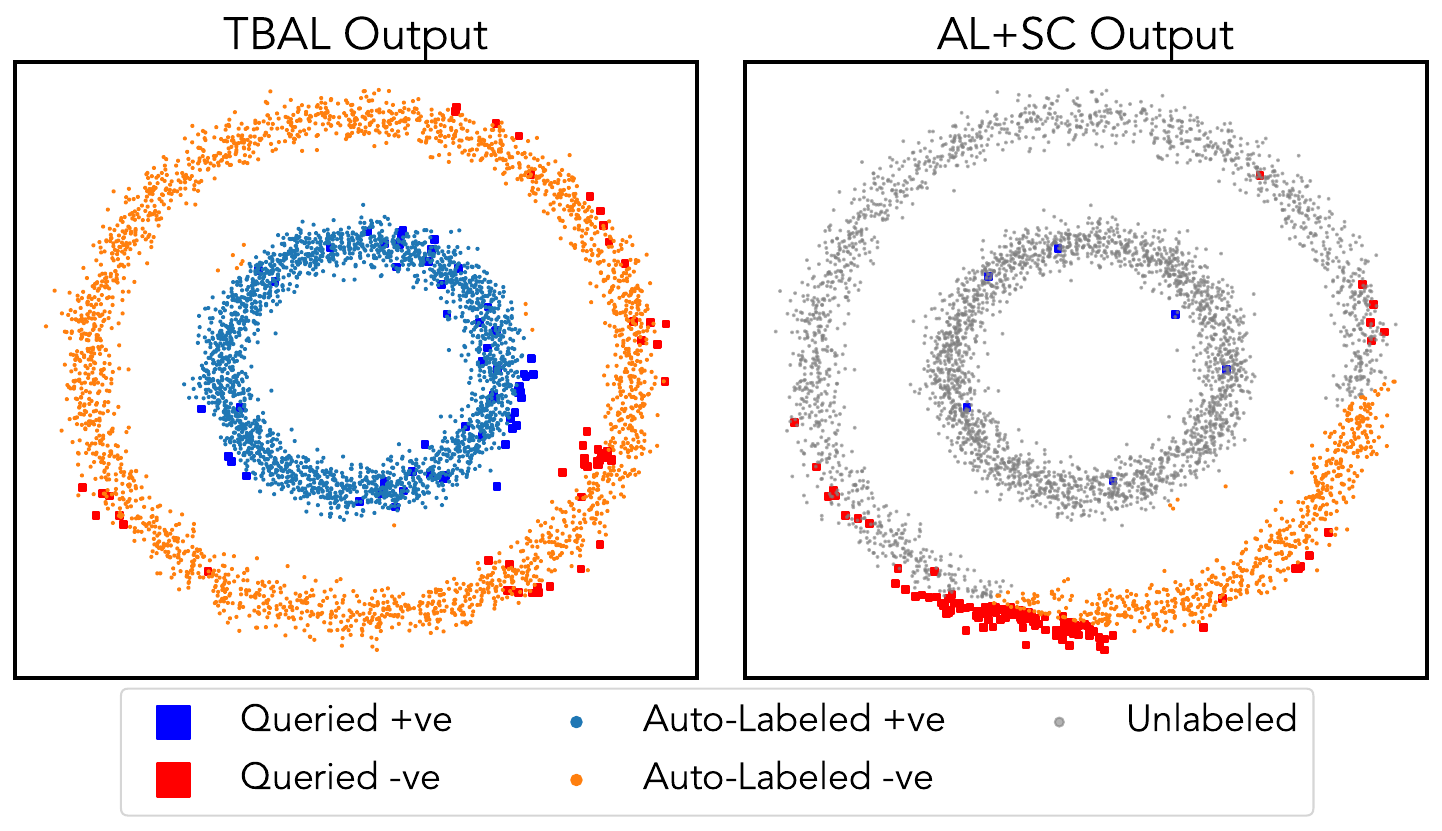}}\quad
    \subfigure[\small{Auto labeling performance of various methods} \label{fig:circles-plot}]{\includegraphics[scale=0.27]{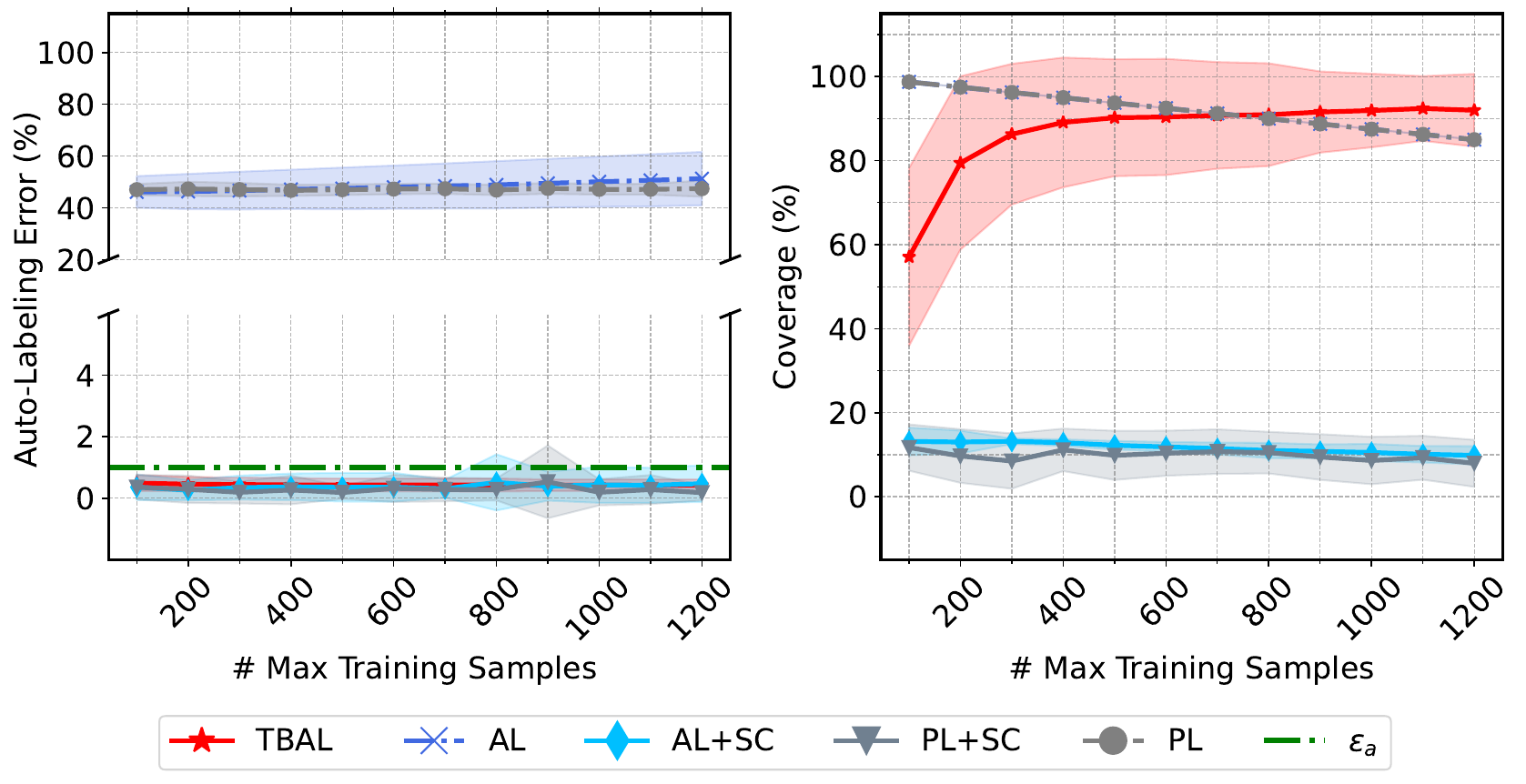}}
  }
  \vspace{0pt}
  \caption{ {Comparison of TBAL, active learning (AL) followed by selective classification (AL+SC) and passive learning (PL) followed b selective classification (PL+SC) on the Circles dataset (Sec.~\ref{sec:autolabelingComparisons}) using linear classifiers and confidence functions. (a) Samples auto-labeled, queried, and left unlabeled. (b) The auto-labeling error and coverage achieved by the algorithms. (50 trials.) }}
  \label{fig:circles-main}
  \vspace{-12pt}
\end{figure*}

\subsection{Comparison between Auto-Labeling, Active Learning and Selective Classification}\label{sec:autolabelingComparisons}
What is the difference between TBAL and methods such as active learning and selective classification?  

\emph{Active learning.} The goal of active learning~\citep{settles2009active} (AL) is to find the best model in hypothesis class $\cH$ by training with less labeled data compared to passive learning. This is usually achieved by obtaining labels for the most informative points. Note that the \emph{end goal is to output a model} from the function class whose predictions on new data are as good as the best model in the function class could.

\emph{Selective Classification.} The goal of selective classification (SC)~\citep{Yaniv2010Selective} is to find the best combination of the hypothesis and selection functions to minimize error and maximize coverage of selection regions.

\emph{Auto-Labeling.} The output of an auto-labeling procedure is a labeled dataset (not a model). When the hypothesis class is of lower complexity, it is often not possible to find a good classifier. The goal of an auto-labeling system is to label as much of the unlabeled data as accurately as possible with a given function class and with limited labeled data from humans. 

\textbf{Is active learning alone enough to auto-label data?} AL has been found to be effective in reducing the number of labels needed to learn 
versus passive learning, particularly in low-noise cases \citep{Hanneke2014Book}. Doing auto-labeling using AL followed by SC may be effective in such settings. However, in real-world scenarios, noise levels may be higher and the hypothesis class could be misspecified. 
In these instances, using the model learned through active learning to automatically label all data may result in a high number of errors.

We illustrate this difference between AL, SC, and auto-labeling through an example. Suppose the data consists of two concentric circles, one for each class, with the same number of points per class (Figure \ref{fig:circles_scatter}). This setting is not linearly separable. We run TBAL, AL, and AL followed by SC with an error tolerance of $\epsilon_a = 1\%$ and linear classifiers and confidence functions. 
The results are shown in Figure~\ref{fig:circles-main}. 
Note that the multiple optimal linear classifiers will all incur an error of $50\%$. AL algorithms can only output models that make at least $50\%$ error. If we naively use the output model for auto-labeling, we can obtain near full coverage but incur around $50\%$ auto-labeling error. If we use the model output by AL with threshold-based SC, labeling error is reduced. However, it can only label $\approx 25\%$ of the unlabeled data. In contrast, TBAL can label almost all of the data accurately (close to $100\%$ coverage) with less than $1\%$ auto-labeling error. 

\section{Theoretical Analysis}\label{sec:TheoreticalAnalysis}


The performance of the TBAL (Algorithm~\ref{alg:main_algo}) depends on many factors including the hypothesis class, the accuracy of the confidence function, the data sampling strategy, and the size of the training and validation data. In particular, the amount of validation data plays a critical role in determining the accuracy of the confidence function, which in turn affects the accuracy and coverage.

We derive bounds on the auto-labeling error and the coverage for Algorithm~\ref{alg:main_algo} in terms of the size of the validation data, the number of auto-labeled points $N_a^{(k)}$, and the Rademacher complexity of the extended hypothesis class $\cH^{T,g}$ induced by the confidence function $g$.
Our first result, Theorem~(\ref{thm:general_auto_err_full}), applies to general settings and makes no assumptions on the particular form of the hypothesis class, the data distribution, and the confidence function. We then instantiate and specialize the results for a specific setting in Section~\ref{sec:linearclassifier}. We introduce some notation to aid in stating our results, 
\begin{definition} \textit{(Hypothesis Class with Abstain)
\label{def:H-tg} The function $g$, set $T$ and $\cH$ induce an extended hypothesis class $\mathcal{H}^{T,g} \ldef \cH \times T$. Any function $(h,t) \in \mathcal{H}^{T,g}$ is defined as $(h,t)(\bx) = h(\bx)$ if  $g(h,\bx)\ge t$ and $\abst$ otherwise. Here $(h,t)(\bx)=\abst$ means $(h,t)$ abstains in classifying the point $\bx$.}
\end{definition}
  
\textbf{Error Definitions.} Let $\cS \subseteq \cX$ denote a non-empty sub-region of $\cX$ and  $\subsetS \subseteq \cS $ be a finite set of i.i.d. samples from $\cS$. The subset $\cS(h,t) \subseteq \cS$ denotes the regions where $(h,t)$ does not abstain i.e. $ \cS(h,t) \ldef \{ \bx \in \cS : (h,t)(\bx)\neq \abst \},$ and the conditional probability mass associated with it is $\P(h,t \given \cS) \ldef \P(\cS(h,t) \given \cS),\, $ and its empirical counterpart $  \widehat{\P}(h,t \given \subsetS) \ldef {|\subsetS(h,t)|}/{|\subsetS|}.$ We use $\P(\cS)$ to denote the probability mass of set $\cS$ and $\P(\cS' \given \cS)$ for the conditional probability of subset $\cS'\subseteq \cS$ given $\cS$. The population level and empirical auto-labeling errors are defined as follows:
\begin{align*}
  \err_a(h,t\given\cS)  &\ldef \E_{\bx\given\cS}[\lossza(h,t,\bx,y)]/{\P(h,t \given \cS)} ,  \\ 
  \haterr_a(h,t\given\subsetS)  &\ldef  (\textstyle \sum_{\bx_i \in \subsetS(h,t)}\lossza(h,t,\bx_i,y_i)) / |\subsetS(h,t)| \numberthis \label{eq:auto_lbl_err}
  \end{align*}
Here $\lossza(h,t,\bx,y) \ldef \lossz(h,\bx,y) \cdot \lossa(h,t,\bx)$ with $\lossz(h,\bx,y) \ldef \ind ( h(\bx) \neq y )$, and $\lossa(h,t,\bx) \ldef \ind ( g(h,\bx) \ge t ).$


\textbf{Rademacher Complexity.}
The Rademacher complexities for the function classes induced by the $\cH,T,g$ and the loss functions are defined as 
$\Rad_n\big(\cH^{T,g}\big) \ldef \Rad_n\big(\cH,\lossz\big) + \Rad_n\big(\cH^{T,g},\lossa\big)$. 
 Let $\hat{h}_i$ and $\hat{t}_i$ be the ERM solution and the auto-labeling threshold at epoch $i$. Let $\pmin \in (0,1)$ be a constant such that $\P(\hat{h}_i,\hat{t}_i \given \cX^{(i)}) \ge \pmin$ for all $i$. Let $X_v^{(i)}$ denote the validation set, and $n_v^{(i)}$ and $n_a^{(i)} $ the number of validation and auto-labeled points at epoch $i$. Let $\haterr_a(\hat{h}_i,\hat{t}_i \given X_v^{(i)})$ be the empirical conditional risk of $\hat{h}_i$ in the region where $g(\hat{h}_i,\bx) \ge \hat{t}_i$ evaluated on the validation data $X_v^{(i)}$.

We provide the following guarantees on the auto-labeling error and the coverage achieved by TBAL.
\begin{theorem}
\label{thm:general_auto_err_full}
(Overall Auto-Labeling Error and Coverage)
Let $k$ denote the number of rounds of the TBAL Algorithm~\ref{alg:main_algo}.
Let $n_v^{(i)}, n_a^{(i)}$ denote the number of validation and auto-labeled points at epoch $i$ and $n^{(i)} = |X^{(i)}|$. 
Let $X_{pool}(A_k)$ be the set of auto-labeled points at the end of round $k$. $N_a^{(k)} = \sum_{i=1}^k n_a^{(i)}$ be the total number of auto-labeled points. Then, for any $\delta \in (0,1)$, with probability at least $ 1-\delta$,
\begin{align*}
 \textstyle   &&  \textstyle \haterr \big(X_{pool}(A_k)\big)  
   \leq \sum_{i=1}^k \frac{n_a^{(i)}}{N_a^{(k)}}  \Big( \underbrace{ \textstyle \haterr_a \big(\hat{h}_i,\hat{t}_i \given X_v^{(i)}}_{(a)} \big) + \underbrace{ \textstyle \frac{4}{\pmin}\big (  \Rad_{n_v^{(i)}}\big(\cH^{T,g}\big) + \textstyle \frac{2}{\pmin}\sqrt{\frac{1}{n_v^{(i)}}\log(\frac{8k}{\delta}) } }_{(b)}
   \Big) \\ &&+ \underbrace{ \textstyle \frac{4}{\pmin}\Big (  \textstyle \sum_{i=1}^{k} \frac{n_a^{(i)}}{N_a^{(k)}}\Rad_{n_a^{(i)}}\big(\cH^{T,g}\big) + \sqrt{\frac{k}{N_a^{(k)}}  \log ( \frac{8k}{\delta}) } \Big )}_{(c)}, \quad and 
\end{align*}

\begin{align*}
 \hatcov \big({X_{pool}(A_k)} \big) \ge \textstyle \sum_{i=1}^k  \P \big( \cX^{(i)}(\hat{h}_i,\hat{t}_i ) \big )- 2 \Rad_{n^{(i)}}\big(\cH^{T,g} \big)   
 -\sqrt{\frac{2k^2}{N} \log \big(\frac{8k}{\delta}\big)} .
\end{align*}
\end{theorem}
\textbf{Discussion.} 
We interpret this result, starting with the auto-labeling error term $\haterr(X_{pool}(A_k)).$ The term (a) $\haterr_a(\hat{h}_i, \hat{t}_i|X_v^{(i)})$ is the empirical conditional error in the auto-labeled region computed on the validation data in $i$-th round, which is at most $\epsilon_a$. Thus, summing term (a) over all the rounds is at most $\epsilon_a$. 
The term (b) provides an upper bound on the excess error over the empirical estimate term (a) as a function of the Rademacher complexity of $\cH^{T,g}$ and the validation data used in each round. The last term (c) captures the variance in the overall estimate as a function of the total number of auto-labeled points and the Rademacher complexity of $\cH^{T,g}$. If we let $n_v^{(i)}\ge n_v$ i.e. the minimum validation points ensured in each round, then we can see the second term is $\mathcal{O}(\Rad_{n_v}(\cH^{T,g}))$   and the third term is $\mathcal{O}(\sqrt{{1}/{n_v}})$. 

Therefore, validation data of size $\mathcal{O}\left({1}/{\epsilon_a^2}\right)$ in each round is sufficient to get a $\mathcal{O}(\epsilon_a)$ bound on the excess auto-labeling error. The terms with Rademacher complexities suggest that it is better to use a hypothesis class and confidence function such that the induced hypothesis class has low Rademacher complexity. While such a hypothesis class might not be rich enough to include the target function, it would still be helpful for efficient and accurate auto-labeling of the dataset which can then be used for training richer models in the downstream task. The coverage term provides a lower bound on the empirical coverage $\hatcov({X_{pool}(A_k)})$ in terms of the true coverage of the sequence of estimated hypotheses $\hat{h}_i$ and threshold $\hat{t}_i$.

We note that the size of the validation data needed to guarantee the auto-labeling error in each round by Algorithm~\ref{alg:main_algo} is optimal up to $\log$ factors. This follows
by applying a result on the tail probability of the sum of independent random variables due to Feller~\citep{Feller1943}:
\begin{lemma}
\label{lem:lower-bound_err}
 Let $c_1,c_2$ and $\sigma >0 $. Let $\bx_i\in X$ be a set of $n$ i.i.d. points from $\cX$ with corresponding true labels $y_i$. Given $(h,t) \in \cH^{T,g}$, let  $\E \big[\big(\lossza(h,t,\bx_i,y_i) - \err(h,t|\cX)\big)^2\big]= \sigma_i^2 > \sigma^2$ for every $\bx_i$ 
  for $\sigma_i>0$ and let $\sum_{i}^n \sigma_i^2\ge c_1$ then for every $\epsilon \in [0, ({\sum_{i=1}^n \sigma_i^2})/{\sqrt{c_1}} ]$ with 
  $ n_v < {12\sigma^2}\log(4c_2)/{\epsilon^2}$,
  the following holds w.p. at least $1/4$,
  $
      \err_a(h,t |\cX) > \haterr_a(h,t|X) + \epsilon.
  $
\end{lemma}
Therefore, if a sufficiently large validation set is not used in each round, there is a constant probability of erroneously deciding on a threshold for auto-labeling.
 Such a requirement on validation data also applies to active learning if we seek to validate the output model.
Bypassing this requirement demands the use of approaches that are different from threshold-based auto-labeling and traditional validation techniques.
We note the possibility of using recently proposed \emph{active testing} techniques \citep{kossen_2020_testing}, a nascent approach to reducing validation data usage.

\subsection{Linear Classifier Setting}\label{sec:linearclassifier}

Next, we consider a simple setting where active learning is known to be optimal to see if TBAL can offer similar performance guarantees. To do so, we instantiate results from \ref{thm:general_auto_err_full} to homogeneous linear separators under the uniform distribution in the realizable setting.
Let $P_\bx$ be supported on the unit ball in $\R^d$,
$\mathcal{X} = \{\bx \in \R^d:  ||\bx||\le 1\}$. Let $\cW = \{ \bw \in \R^d : ||\bw||_2 = 1 \} = \mathbb{S}_d$, $\cH = \{\bx \mapsto \text{sign}(\langle \bw,\bx \rangle) \,  \forall \bw \in \cW \}$, the score function be given by  $g(h,\bx) = g(\bw,\bx) = |\langle \bw ,\bx \rangle | $, and set $T = [0,1]$. For simplicity, we will use $\cW$ in place of $\cH$.



    
\begin{corollary}(Overall Auto-Labeling Error and Coverage) \label{cor:linear-err-cov}
Let $\hat{\bw}_i,\hat{t}_i$ be the ERM solution and the auto-labeling threshold respectively at epoch $i$.  Let $n_v^{(i)}, n_a^{(i)}$ denote the number of validation and auto-labeled points at epoch $i$. Let the TBAL algorithm run for $k$-epochs. Then, for any $\delta \in (0,1)$, w.p. at least $ 1-\delta$,
\vspace{-3pt}
\begin{align*}
 \textstyle  &&\haterr \big(X_{pool}(A_k) \big) \nonumber 
   \leq   \textstyle \sum_{i=1}^k \frac{n_a^{(i)}}{N_a^{(k)}}  \bigg( \underbrace{\haterr_a \big(\hat{\bw}_i,\hat{t}_i \given X_v^{(i)} \big)}_{(a)}  +  \underbrace{ \textstyle \frac{4}{\pmin} \textstyle \sqrt{\frac{2}{n_v^{(i)}} \Big ( 2d \log \big ( \frac{e n_v^{(i)}}{d} \big )  + \log \big (\frac{8k}{\delta} \big ) \Big )} \bigg )}_{(b)}  \\
    && \textstyle +\underbrace{ \textstyle \frac{4}{\pmin}\bigg ( \sqrt{ \frac{2k}{N_a^{(k)}} \Big ( 2d  \log \big ( \frac{e N_a^{(k)}}{d} \big ) +\log \big( \frac{8k}{\delta} \big )  \Big )} \bigg )}_{c}, \quad and
\end{align*}
\vspace{-15pt}
  \begin{align*}
    \textstyle  \hatcov \big(X_{pool}(A_k) \big) \ge \textstyle  1 - \min_i \hat{t}_i\sqrt{4d/\pi}   \nonumber  -  2  k \sqrt{\frac{2}{N} \Big ( 2d \log \big ( \frac{e N}{d} \big ) + \log \big( \frac{8k}{\delta}\big) \Big )}.   
  \end{align*}
  \vspace{-5pt}
\end{corollary}
\vspace{-10pt}
These results imply that by ensuring the sum of the empirical validation error term (a) and the upper confidence interval to be less than $\epsilon_a$ in each round of the algorithm we can ensure that the overall auto-labeling error remains below $\epsilon_a$. Furthermore, by applying standard VC theory to the first round, we obtain that $\hat{t}_1 \leq 1/2$. Therefore, right after the first round, we are guaranteed to label at least half of the unlabeled pool. We empirically observe that TBAL has coverage at par with active learning while respecting the auto-labeling error constraint (See Figure \ref{fig:unit-ball-train}).


\begin{wrapfigure}{R}{0.52\textwidth}
\centering
\includegraphics[width=0.52\textwidth]{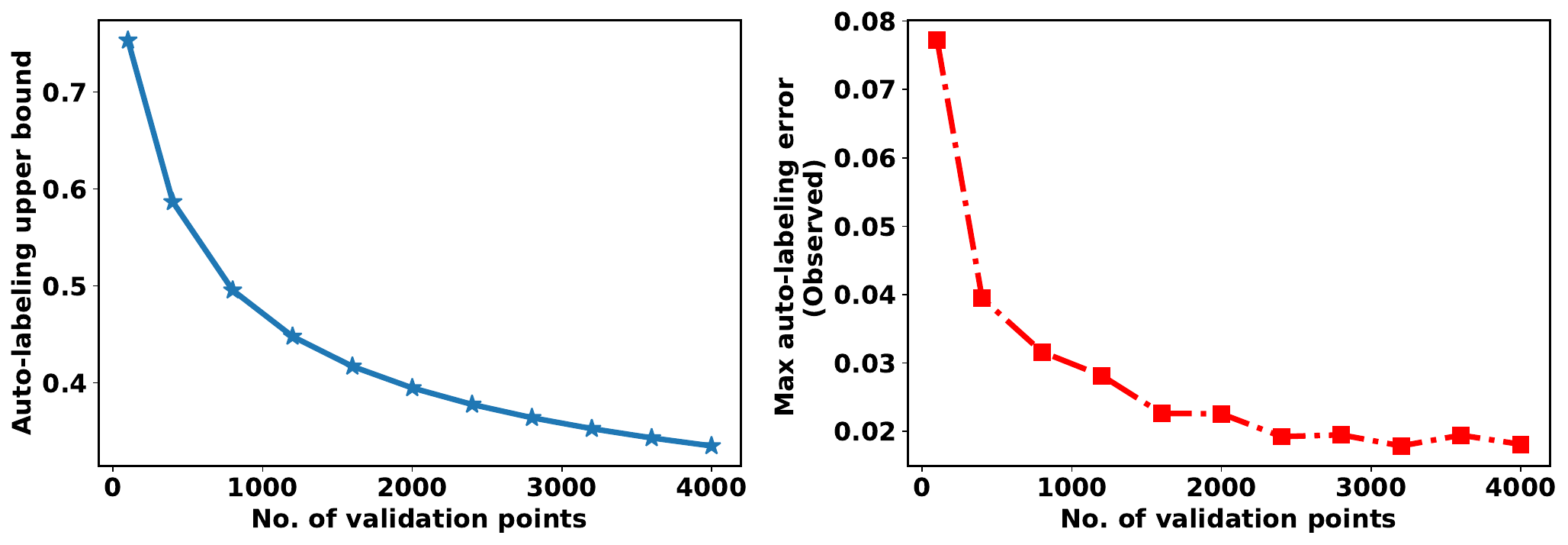}

\caption{{\textbf{Left:} Simplified upper bound from Corollary 3.4 (ignoring constants) on excess auto-labeling error for the Unit-Ball setting i.e. homogeneous linear classifier with $d=30$. \textbf{Right:} The worst observed auto-labeling error over 25 trials in the Unit-Ball experiment. }}
\label{fig:bounds_comparison}
\end{wrapfigure}
\textbf{Tightness of the Bounds.}
 We study this in the setting of the Unit-Ball experiment. The upper bound on excess risk in this setting is given in Corollary \ref{cor:linear-err-cov} which is an instantiation of our general results to this specific setting. We consider a simplified form of the upper bound by ignoring the constants to get a sense of the rate in terms of the validation data size. We compute this simplified upper bound for different amounts of validation data. We compare these with the maximum auto-labeling error observed over 25 runs of auto-labeling in the Unit-Ball setting with different random seeds for each validation data size. The results are in Figure \ref{fig:bounds_comparison}. As expected, we see that the worst-case error rate follows a similar rate as our upper bound but the upper bound is conservative. Next, we explain why this is the case.

Our upper bound is slightly conservative, as it is based on a uniform bound over all hypotheses in a given hypothesis class. Since the individual hypotheses whose excess auto-labeling error we need to bound are not known a priori we need to derive a bound on the number of validation samples using which we can guarantee that the excess auto-labeling error of any hypothesis (model) is small with high probability. Note that this is a conservative (worst-case) analysis to get an upper bound on the validation sample complexity. The upper bound has two parts: a) the Rademacher complexity of the hypothesis class and b) a term with the number of validation samples. We note that on the validation samples, it matches lower bounds order-wise (see Lemma \ref{lem:lower-bound_err}). This is the first analysis to provide these bounds based on uniform convergence without making any assumptions about the data distributions or hypothesis class. We provide further discussion on the role of Rademacher complexity in the Appendix \ref{sec:proof_general}.

\section{Experiments}\label{sec:experiments}
 We study the effectiveness of TBAL on synthetic and real datasets. 
 We validate our theoretical results and aim to understand the amount of labeled validation and training data required to achieve a certain auto-labeling error and coverage. 
 We also seek to understand whether our findings apply to real data---where labels may be noisy---along with how TBAL performs compared to common baselines.
 
 \textbf{Baselines.} We compare TBAL to the following methods: 
 
 \begin{enumerate}[leftmargin=16pt,label={\alph*)}, topsep=0pt]
     \item \textit{Passive Learning (PL)} queries a subset of the points randomly to train a model from a given model class and then uses it to predict the labels for the remaining unlabeled pool.
     \item \textit{Active Learning (AL)} (using margin-random query strategy, described below) trains a model from a model class and uses it to predict the labels for the remaining unlabeled pool.
     \item \textit{Passive Labeling + Selective Classification (PL+SC)} first performs passive learning to train a model from a given model class. Then it performs auto-labeling on the unlabeled data using threshold-based selective classification with the model output by passive learning. Only those unlabeled points that are deemed as fit to be labeled by the selection function are auto-labeled.
     \item  \textit{Active Learning + Selective Classification (AL+SC)} first performs active learning (using margin-random query strategy) to train a model from a given model class.
     It then performs auto-labeling using threshold-based selective classification with the model output by AL.  
 \end{enumerate} 
 For selective classification in the above methods, we use  Algorithm \ref{alg:gamma-estimate} to estimate the threshold and use it to perform auto-labeling. In experiments, we modify Algorithm \ref{alg:gamma-estimate} slightly---instead of estimating a single threshold for all classes, we estimate thresholds for each class separately.
 
\textbf{Active Querying Strategy.}\label{para:querystrategy}
We use the \emph{margin-random} query strategy for querying the next batch of training data. In this strategy, the algorithm first sorts the points based on the margin (uncertainty) score and then selects the top $C n_b$ ($C>1$) points from which $n_b$ points are picked at random. This is a simple and computationally efficient method that balances the exploration and exploitation trade-off. We note that other active-querying strategies exist; we use margin-random as our standard querying strategy to keep the focus on comparing auto-labeling---not active learning approaches. 

\begin{table}[t]
\centering
\scalebox{0.77} {
\centering
\begin{tabular}{r|cc|cc}
\toprule
\multirow{2}{*}{$\mathbf{N_v}$} & \multicolumn{2}{c}{\textbf{Error (\%)} } &  \multicolumn{2}{c}{\textbf{Coverage (\%)}} \\ 
\cmidrule{2-5}
& {\textbf{TBAL}} & {\textbf{AL+SC}} & {\textbf{TBAL}} & {\textbf{AL+SC}}  \\
\toprule 
100 & 3.10 \tiny{$\pm$1.80} & 0.68 \tiny{$\pm$0.81} & 71.43 \tiny{$\pm$8.86} & 96.95 \tiny{$\pm$1.01}   \\ 
 \midrule 
400 & 1.97 \tiny{$\pm$0.76} & 0.59 \tiny{$\pm$0.18} & 93.99 \tiny{$\pm$2.39} & 97.89 \tiny{$\pm$0.50}   \\ 
 \midrule 
800 & 1.64 \tiny{$\pm$0.50} & 0.66 \tiny{$\pm$0.19} & 96.26 \tiny{$\pm$1.33} & 98.06 \tiny{$\pm$0.53}   \\ 
 \midrule 
1200 & 1.39 \tiny{$\pm$0.39} & 0.67 \tiny{$\pm$0.19} & 96.67 \tiny{$\pm$0.84} & 98.10 \tiny{$\pm$0.45}   \\ 
 \midrule 
1600 & 1.33 \tiny{$\pm$0.30} & 0.70 \tiny{$\pm$0.19} & 97.13 \tiny{$\pm$0.45} & 98.16 \tiny{$\pm$0.44}   \\ 
 \midrule 
2000 & 1.28 \tiny{$\pm$0.34} & 0.71 \tiny{$\pm$0.21} & 97.15 \tiny{$\pm$0.54} & 98.20 \tiny{$\pm$0.34}\\ 
 \toprule 
 \end{tabular}

}
\hspace{20pt}
\scalebox{0.77}{
\centering
\begin{tabular}{r|cc|cc}
\toprule
\multirow{2}{*}{$\mathbf{N_v}$} & \multicolumn{2}{c}{\textbf{Error (\%)} } &  \multicolumn{2}{c}{\textbf{Coverage (\%)}} \\ 
\cmidrule{2-5}
& {\textbf{TBAL}} & {\textbf{AL+SC}} & {\textbf{TBAL}} & {\textbf{AL+SC}}  \\
\toprule 
100 & 3.10 \tiny{$\pm$1.80} & 0.68 \tiny{$\pm$0.81} & 71.43 \tiny{$\pm$8.86} & 96.95 \tiny{$\pm$1.01}   \\ 
 \midrule 
400 & 1.65 \tiny{$\pm$0.65} & 0.32 \tiny{$\pm$0.15} & 93.27 \tiny{$\pm$2.50} & 96.91 \tiny{$\pm$0.99}   \\ 
 \midrule 
800 & 1.08 \tiny{$\pm$0.47} & 0.24 \tiny{$\pm$0.16} & 96.01 \tiny{$\pm$1.16} & 96.31 \tiny{$\pm$1.36}   \\ 
 \midrule 
1200 & 0.78 \tiny{$\pm$0.27} & 0.17 \tiny{$\pm$0.11} & 96.82 \tiny{$\pm$0.84} & 95.96 \tiny{$\pm$1.40}   \\ 
 \midrule 
1600 & 0.65 \tiny{$\pm$0.20} & 0.13 \tiny{$\pm$0.08} & 96.93 \tiny{$\pm$0.57} & 95.70 \tiny{$\pm$1.38}   \\ 
 \midrule 
2000 & 0.54 \tiny{$\pm$0.16} & 0.21 \tiny{$\pm$0.11} & 97.23 \tiny{$\pm$0.42} & 96.36 \tiny{$\pm$1.13}\\ 
 \toprule 
 \end{tabular}
}
\vspace{5pt}
\caption{{\textbf{Unit-Ball.} Effect of variation of validation data size ($N_v$) with and without using a UCB on error estimates. We keep training data size $N_q$ fixed at $500$ and use error threshold $\epsilon_a=1\%$. We report the mean and std. deviation over $10$ runs with different random seeds. \textbf{Left}: with $C_1=0$. \textbf{Right}: with $C_1=0.25$}.}
\label{tab:unit_ball_val_vary_main}
\end{table}

\begin{table}[t]
\centering
\scalebox{0.77} {
\centering
\begin{tabular}{r|cc|cc}
\toprule
\multirow{2}{*}{$\mathbf{N_v}$} & \multicolumn{2}{c}{\textbf{Error (\%)} } &  \multicolumn{2}{c}{\textbf{Coverage (\%)}} \\ 
\cmidrule{2-5}
& {\textbf{TBAL}} & {\textbf{AL+SC}} & {\textbf{TBAL}} & {\textbf{AL+SC}}  \\
\toprule 
200 & 4.77 \tiny{$\pm$0.18} & 3.35 \tiny{$\pm$0.80} & 83.14 \tiny{$\pm$3.65} & 78.53 \tiny{$\pm$7.05}   \\ 
 \midrule 
400 & 4.57 \tiny{$\pm$0.26} & 3.53 \tiny{$\pm$0.73} & 90.70 \tiny{$\pm$3.11} & 86.39 \tiny{$\pm$5.11}   \\ 
 \midrule 
600 & 4.32 \tiny{$\pm$0.17} & 3.70 \tiny{$\pm$0.63} & 92.96 \tiny{$\pm$0.46} & 88.90 \tiny{$\pm$4.83}   \\ 
 \midrule 
800 & 4.66 \tiny{$\pm$0.20} & 3.84 \tiny{$\pm$0.70} & 92.42 \tiny{$\pm$0.89} & 88.67 \tiny{$\pm$3.88}   \\ 
 \midrule 
1000 & 4.67 \tiny{$\pm$0.16} & 3.90 \tiny{$\pm$0.68} & 92.89 \tiny{$\pm$0.91} & 89.79 \tiny{$\pm$3.09}\\ 
 \toprule 
 \end{tabular}

}
\hspace{20pt}
\scalebox{0.77}{
\centering
\begin{tabular}{r|cc|cc}
\toprule
\multirow{2}{*}{$\mathbf{N_v}$} & \multicolumn{2}{c}{\textbf{Error (\%)} } &  \multicolumn{2}{c}{\textbf{Coverage (\%)}} \\ 
\cmidrule{2-5}
& {\textbf{TBAL}} & {\textbf{AL+SC}} & {\textbf{TBAL}} & {\textbf{AL+SC}}  \\
\toprule 
200 & 2.28 \tiny{$\pm$0.21} & 3.11 \tiny{$\pm$0.86} & 68.24 \tiny{$\pm$6.20} & 57.77 \tiny{$\pm$13.09}   \\ 
 \midrule 
400 & 1.29 \tiny{$\pm$0.10} & 1.98 \tiny{$\pm$0.40} & 63.81 \tiny{$\pm$4.86} & 63.06 \tiny{$\pm$10.70}   \\ 
 \midrule 
600 & 1.41 \tiny{$\pm$0.20} & 1.81 \tiny{$\pm$0.22} & 69.64 \tiny{$\pm$3.98} & 62.92 \tiny{$\pm$9.20}   \\ 
 \midrule 
800 & 1.62 \tiny{$\pm$0.30} & 2.04 \tiny{$\pm$0.35} & 67.45 \tiny{$\pm$3.72} & 63.22 \tiny{$\pm$7.89}   \\ 
 \midrule 
1000 & 1.64 \tiny{$\pm$0.23} & 1.97 \tiny{$\pm$0.26} & 70.28 \tiny{$\pm$2.82} & 66.11 \tiny{$\pm$8.00}\\ 
 \toprule 
 \end{tabular}
}
\vspace{5pt}
\caption{{\textbf{IMDB.} Effect of variation of validation data size ($N_v$) with and without using a UCB on error estimates. We keep training data size $N_q$ fixed at $500$ and use error threshold $\epsilon_a=5\%$. We report the mean and std. deviation over 10 runs with different random seeds. \textbf{Left:} with $C_1=0$. \textbf{Right:} with $C_1=0.25$}.}
\label{tab:imdb_val_vary_main}
\end{table}

\begin{table}[h]
\centering
\scalebox{0.76} {
\centering
\begin{tabular}{r|cc|cc}
\toprule
\multirow{2}{*}{$\mathbf{N_v}$} & \multicolumn{2}{c}{\textbf{Error (\%)} } &  \multicolumn{2}{c}{\textbf{Coverage (\%)}} \\ 
\cmidrule{2-5}
& {\textbf{TBAL}} & {\textbf{AL+SC}} & {\textbf{TBAL}} & {\textbf{AL+SC}}  \\
\toprule 
2000 & 0.0 \tiny{$\pm$0.0} & 0.0 \tiny{$\pm$0.0} & 0.0 \tiny{$\pm$0.0} & 0.0 \tiny{$\pm$0.0}   \\ 
 \midrule 
4000 & 13.88 \tiny{$\pm$5.42} & 13.31 \tiny{$\pm$10.79} & 0.72 \tiny{$\pm$0.55} & 0.48 \tiny{$\pm$0.04}   \\ 
 \midrule 
6000 & 14.18 \tiny{$\pm$0.76} & 11.52 \tiny{$\pm$0.82} & 17.29 \tiny{$\pm$0.72} & 8.18 \tiny{$\pm$1.12}   \\ 
 \midrule 
8000 & 13.97 \tiny{$\pm$0.14} & 11.31 \tiny{$\pm$0.51} & 36.36 \tiny{$\pm$1.78} & 23.40 \tiny{$\pm$1.15}   \\ 
 \midrule 
10000 & 13.42 \tiny{$\pm$0.29} & 11.14 \tiny{$\pm$0.54} & 43.79 \tiny{$\pm$0.93} & 33.38 \tiny{$\pm$0.72}\\ 
 \toprule 
 \end{tabular}

}
\hspace{20pt}
\scalebox{0.76}{
\centering
\begin{tabular}{r|cc|cc}
\toprule
\multirow{2}{*}{$\mathbf{N_v}$} & \multicolumn{2}{c}{\textbf{Error (\%)} } &  \multicolumn{2}{c}{\textbf{Coverage (\%)}} \\ 
\cmidrule{2-5}
& {\textbf{TBAL}} & {\textbf{AL+SC}} & {\textbf{TBAL}} & {\textbf{AL+SC}}  \\
\toprule 
2000 & 0.0 \tiny{$\pm$0.0} & 0.0 \tiny{$\pm$0.0} & 0.0 \tiny{$\pm$0.0} & 0.0 \tiny{$\pm$0.0}   \\ 
 \midrule 
4000 & 10.50 \tiny{$\pm$6.01} & 7.37 \tiny{$\pm$4.57} & 0.47 \tiny{$\pm$0.05} & 0.48 \tiny{$\pm$0.06}   \\ 
 \midrule 
6000 & 10.61 \tiny{$\pm$0.62} & 7.71 \tiny{$\pm$1.03} & 10.16 \tiny{$\pm$1.10} & 4.31 \tiny{$\pm$1.10}   \\ 
 \midrule 
8000 & 9.90 \tiny{$\pm$0.63} & 6.80 \tiny{$\pm$0.77} & 25.84 \tiny{$\pm$1.57} & 14.43 \tiny{$\pm$2.01}   \\ 
 \midrule 
10000 & 8.97 \tiny{$\pm$0.36} & 6.87 \tiny{$\pm$0.48} & 32.19 \tiny{$\pm$1.34} & 21.96 \tiny{$\pm$1.35}\\ 
 \toprule 
 \end{tabular}
}
\vspace{10pt}
\caption{{\textbf{Tiny-ImageNet.}  Effect of variation of validation data size ($N_v$) with and without using a UCB on error estimates. We keep training data size $N_q$ fixed at $10k$ and use error threshold $\epsilon_a=10\%$. We report the mean and std. deviation over $5$ runs with different random seeds. \textbf{Left:} with $C_1=0$. \textbf{Right:} with $C_1=0.25$}.}
\label{tab:tiny_imgnet_val_vary_main}
\end{table} 

 \textbf{Datasets.} We use the following synthetic and real datasets. We also provide empirical results on MNIST and another synthetic dataset in the Appendix. For each dataset, we split the data into two sufficiently large pools. One is used as $X_{pool}$ on which auto-labeling algorithms are run and the other is used as $X_{val}$ from which the algorithms subsample validation data. 

 \begin{enumerate}[leftmargin=16pt,label={\alph*)},topsep=0pt]
 \item \textit{Unit-Ball} is a synthetic dataset of uniformly sampled points from the $d$-dimensional unit ball. The true labels are generated using a homogeneous linear separator with $\bw = [1/\sqrt{d},\ldots,1/\sqrt{d}]$. We use $d=30$ and generate $N=20K$ samples, out of which $16K$ are in $X_{pool}$ and $4K$ are in $X_{val}$. The dataset has just two classes but there is no margin between them.
 \item \textit{Tiny-ImageNet} \citep{tiny_imagenet} is a subset of the larger ImageNet \citep{deng2009imagenet} dataset, designed for image classification tasks. It consists of \emph{200 classes}, each with $500$ training images and 50 validation and test images. With a total of $100K$ images, Tiny ImageNet provides a diverse and challenging dataset. We use pre-computed embeddings of the images using CLIP \citep{radford2021learning}.
 \item \textit{IMDB Reviews} \citep{imdb} is a comprehensive collection of movie reviews, consisting of $50K$ individual reviews. It is a balanced dataset of positive and negative labels. We use the standard train set of size $25K$ and split it into $X_{pool}$ and $X_{val}$ of sizes $20K$ and $5K$ respectively. We compute embeddings of reviews using a pre-trained model \texttt{bge-large-en} \citep{bge_embedding} from the Massive Text Embedding Benchmark (MTEB) \citep{muennighoff2022mteb, hf_mteb}.
\item \textit{CIFAR-10} \citep{krizhevsky2009CIFAR-10} is an image dataset with $10$ classes. We randomly split the standard training set into $X_{pool}$ of size $40K$ and the validation pool of size $10K$. We use the raw features for training.

\end{enumerate}


 \textbf{Models and Training.}
 For the linear models, we use SVM with the usual hinge loss and train it to loss tolerance $10^{-5}$. To train a multi-layer perceptron (MLP) on the pre-computed embeddings of IMDB and Tiny-ImageNet we use  SGD with a learning rate of $0.05, 0.1$ respectively, and batch size of 64. To train the medium CNN we use SGD with a learning rate of $10^{-2}$, batch size 256, and momentum of 0.9. More details on model training are in the Appendix.
 
\textbf{The score function $g$.}
For SVMs we use the standard implementations of \citep{wu2003probability,platt1999SVM-Prob} in  \texttt{sklearn}   to get the prediction probabilities and use them as the score function. Neural networks use softmax output. 

 \mycomment{CIFAR-10 dataset (Figure \ref{fig:cifar10-mednet-exp}). We perform auto-labeling using a medium-sized CNN network with 6 convolution layers followed by 3 linear layers. We study the role of training data size (see Figure \ref{fig:cifar10-mednet-train}) and validation data size (see Figure \ref{fig:cifar10-mednet-val}). We observe that TBAL does better than AL+SC with more training data since AL could not converge to a classifier with an error < 10\% and used the full training budget. Due to differences in the dynamics, TBAL achieves better coverage. In Figure \ref{fig:cifar10-mednet-val} we see the effect of validation data size. As expected it impacts TBAL more, especially when the validation data is too little. \\
}

\subsection{Role of Validation Data}
The TBAL algorithm uses validation data to estimate the auto-labeling errors at various thresholds to determine the threshold for automatically labeling points accurately. Thus, it is crucial to have accurate estimates of the auto-labeling errors. Our analysis shows that to get such good estimates, large amounts of validation data are needed. 
 In this section, we study the effect of varying the amount of validation data on auto-labeling performance.

\textbf{Setup.}
We fix the maximum training data size $N_q$ and run the algorithm with different amounts of validation data. We also consider the two cases where the algorithm uses an upper confidence bound on the error estimate and where it does not.
We use the Unit-Ball, IMDB, Tiny-ImageNet, and datasets for this study with $N_q=500, 500,$ and $10K$ respectively, and the auto-labeling error thresholds  $\epsilon_a$ = 1\%, 5\%, 10\%, respectively. Initial seed data of size $n_s$ is 20\% of $N_q$ and query batch size $n_b$ is $5\%$ of $N_q$; $C=2$ for both AL and TBAL for both datasets. We give the same initial seed samples of size $n_s$ to all the methods to ensure they have the same starting point.


\begin{table}[t]
\centering
\scalebox{0.77}{
\begin{tabular}{r|cc|cc}
\toprule
\multirow{2}{*}{$\mathbf{N_q}$} & \multicolumn{2}{c}{\textbf{Error (\%)} } &  \multicolumn{2}{c}{\textbf{Coverage (\%)}} \\ 
\cmidrule{2-5}
& {\textbf{TBAL}} & {\textbf{AL+SC}} & {\textbf{TBAL}} & {\textbf{AL+SC}}  \\
\toprule 
200 & 1.67 \tiny{$\pm$0.29} & 2.15 \tiny{$\pm$0.45} & 73.30 \tiny{$\pm$3.49} & 57.17 \tiny{$\pm$11.09}   \\ 
 \midrule 
400 & 1.63 \tiny{$\pm$0.19} & 1.61 \tiny{$\pm$0.29} & 72.59 \tiny{$\pm$3.16} & 64.53 \tiny{$\pm$16.61}   \\ 
 \midrule 
600 & 1.67 \tiny{$\pm$0.21} & 1.83 \tiny{$\pm$0.30} & 71.38 \tiny{$\pm$2.15} & 70.50 \tiny{$\pm$5.68}   \\ 
 \midrule 
800 & 1.67 \tiny{$\pm$0.27} & 1.90 \tiny{$\pm$0.31} & 69.10 \tiny{$\pm$4.51} & 65.74 \tiny{$\pm$10.14}   \\ 
 \midrule 
1000 & 1.62 \tiny{$\pm$0.22} & 1.97 \tiny{$\pm$0.35} & 73.42 \tiny{$\pm$2.84} & 68.05 \tiny{$\pm$5.56}\\ 
 \toprule 
 \end{tabular}
}
\hspace{20pt}
\scalebox{0.77}{
\begin{tabular}{r|cc|cc}
\toprule
\multirow{2}{*}{$\mathbf{N_q}$} & \multicolumn{2}{c}{\textbf{Error (\%)} } &  \multicolumn{2}{c}{\textbf{Coverage (\%)}} \\ 
\cmidrule{2-5}
& {\textbf{TBAL}} & {\textbf{AL+SC}} & {\textbf{TBAL}} & {\textbf{AL+SC}}  \\
\toprule 
2000 & 9.22 \tiny{$\pm$1.04} & 7.42 \tiny{$\pm$0.71} & 17.51 \tiny{$\pm$1.16} & 9.33 \tiny{$\pm$0.66}   \\ 
 \midrule 
4000 & 9.30 \tiny{$\pm$0.38} & 6.97 \tiny{$\pm$0.39} & 25.01 \tiny{$\pm$1.20} & 14.25 \tiny{$\pm$1.71}   \\ 
 \midrule 
6000 & 9.12 \tiny{$\pm$0.22} & 6.85 \tiny{$\pm$0.26} & 28.06 \tiny{$\pm$0.75} & 17.51 \tiny{$\pm$0.36}   \\ 
 \midrule 
8000 & 9.21 \tiny{$\pm$0.14} & 7.38 \tiny{$\pm$0.53} & 30.88 \tiny{$\pm$0.64} & 21.18 \tiny{$\pm$0.90}   \\ 
 \midrule 
10000 & 8.95 \tiny{$\pm$0.23} & 7.10 \tiny{$\pm$0.26} & 32.31 \tiny{$\pm$1.21} & 22.34 \tiny{$\pm$0.61}\\ 
 \toprule 
 \end{tabular}
}
 \vspace{5pt}
\caption{Results for varying $N_q$, the maximum number of samples algorithm can use for training. \textbf{Left: IMDB}  with $N_v= 1000, C_1=0.25, \epsilon_a=5\%$. \textbf{Right: Tiny-ImageNet}  with $N_v=10K, C_1=0.25, \epsilon_a=10\%$. Mean and std. deviations are reported.}
\end{table}
\begin{figure}[t]
  \centering
  \mbox{
    \subfigure[{Unit-Ball: varying training samples size, validation samples size=$4K$, }$\epsilon_a=1\%, C_1=0.25$.
    \label{fig:unit-ball-train}]{\includegraphics[scale=0.33]{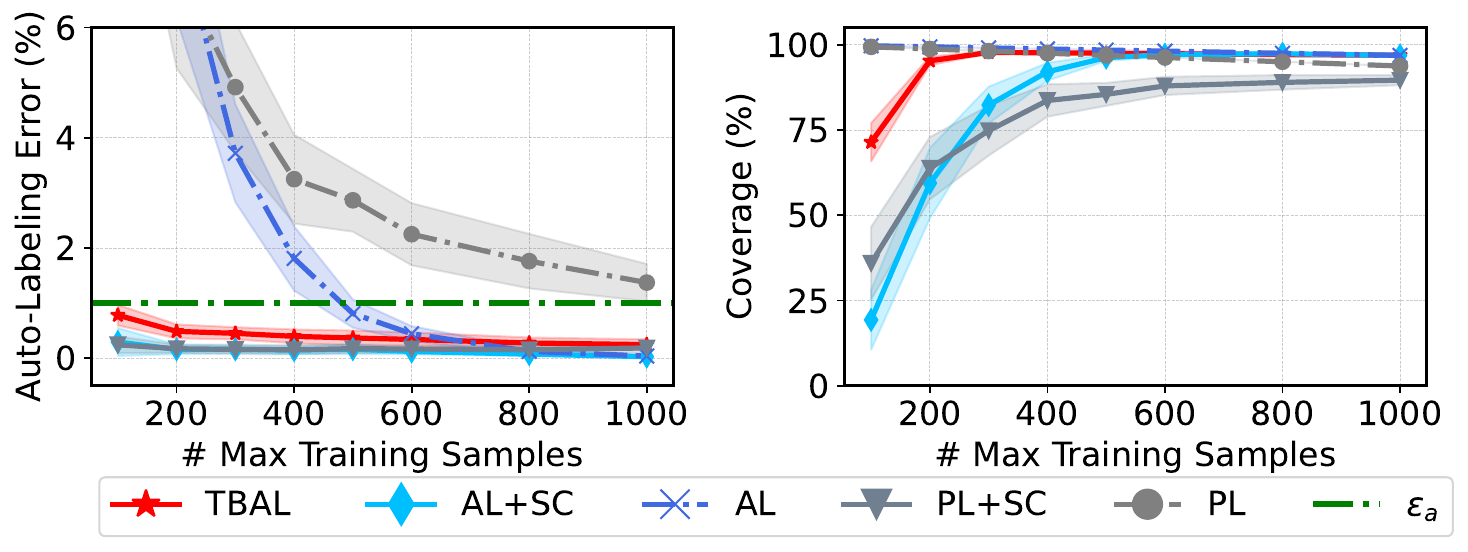}}
    
   \subfigure[ {CIFAR-10: varying training samples size,validation size=$10K$}, $\epsilon_a=10\%, C_1=0.25$.
   \label{fig:cifar10-mednet-exp}]{\includegraphics[scale=0.33]{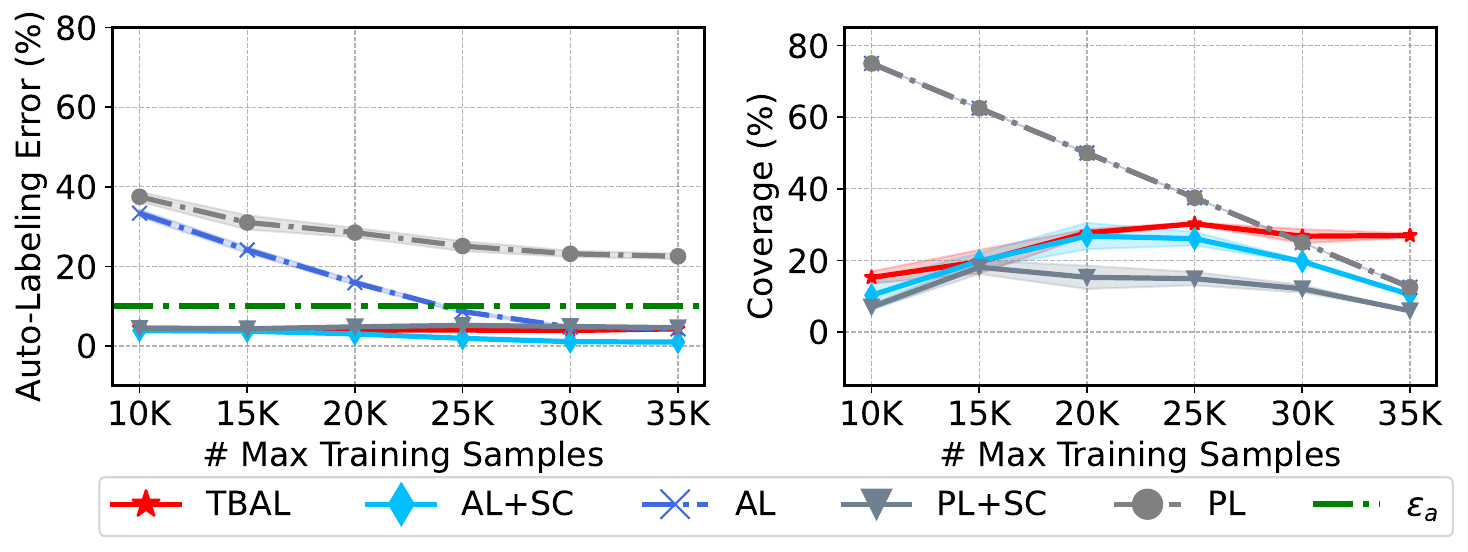}}
    
  }
  \caption{Results for varying $N_q$, the maximum number of samples algorithm can use for training while providing sufficient validation samples.}
  \label{fig:training-data-exp}
\end{figure}

\textbf{Results.}  Tables \ref{tab:unit_ball_val_vary_main},\ref{tab:imdb_val_vary_main} and \ref{tab:tiny_imgnet_val_vary_main} demonstrate the impact of validation data on the performance of TBAL and other algorithms. The auto-labeling error and coverage of TBAL and other methods are affected by the amount of validation data provided. When the validation data is insufficient, the auto-labeling error of TBAL increases. However, as more validation data is used, the auto-labeling error and coverage of TBAL improves. Providing too little $X_{val}$ can lead to incorrect estimates of the auto-labeling error, which in turn results in poor auto-labeling performance. This is further highlighted in our theoretical analysis as seen in Theorem~\ref{thm:general_auto_err_full}.
We also take a more nuanced look at the performance when the algorithm uses an upper confidence bound (with $C_1=0.25$) on the estimates and when it does not.
We see the effects of not using any upper confidence bound (i.e. $C_1=0$). The left side tables in Tables \ref{tab:unit_ball_val_vary_main},\ref{tab:imdb_val_vary_main} and \ref{tab:tiny_imgnet_val_vary_main}   show the results when $C_1=0$ and the right side tables show the results when $C_1=0.25$. These show that not using UCB leads to high auto-labeling error (i.e. not meeting the guarantees) even when there is a sufficient amount of validation data. This can happen with high coverage as well---yielding a dataset with large errors. On the other hand using UCB, i.e. $C_1=0.25$, the algorithm can keep the auto-labeling error below the given threshold but suffers in coverage. 

In the above Tables, we omitted AL, PL, and PL+SC for clarity. Full results with all baselines are in Appendix \ref{sec:detailed_results}.
\subsection{Role of Training Data Size}


The labels queried for model training also play an important role in the process while incurring costs to obtain. The next experiment focuses on the impact of training data on auto-labeling.

\textbf{Setup.} We limit the amount of training data the algorithm can use and record the resulting auto-labeling error and coverage. We ensure all algorithms have sufficiently large but equal amounts of validation data. We run on Unit-Ball, IMDB, Tiny-Imagenet, and CIFAR-10 datasets with the same values of $n_s$, $n_b$, and $C$ as in previous experiments.

\textbf{Results.}  Figures ~\ref{fig:unit-ball-train},  ~\ref{fig:cifar10-mednet-exp}, and ~\ref{fig:circles-plot} indicate that TBAL and methods utilizing selective classification (AL+SC, PL+SC) maintain a high level of accuracy, even in scenarios where minimal training samples are used. This is expected as the threshold estimation method (when used with sufficient validation data) will find auto-labeling thresholds such that the auto-labeling error does not exceed $\epsilon_a$. The impact of training data size can be seen clearly in the coverage achieved by the algorithms. As expected, with fewer training samples the model has low accuracy leading to low coverage. However, as more samples are acquired, a more accurate model within the function class is learned, resulting in increased coverage. The Appendix \ref{sec:additional-exp} has additional discussion and results.




\vspace{2pt}
\section{Related Works}\label{sec:extd_related_works}
In this section, we discuss works related to threshold-based auto-labeling and the underlying techniques, namely, active learning and selective classification. 


There is a rich body of work on active learning on empirical and theoretical fronts ~\citep{settles2009active, two-faces, hsu2010algorithms,Hanneke2014Book,HCAL2021,ren2020DeepAL}. In active learning, the goal is to learn the best model in the given function class with fewer labeled data than in classical passive learning. To this end, various active learning algorithms have been developed and analyzed, e.g., uncertainty sampling \citep{tong2001support, mussmannDataEff}, disagreement region based~\citep{cal94,hanneke2007discoef}, margin based~\citep{Balcan2007MarginBA,balcan2013active}, importance sampling based \citep{beygelzimer2009importance} and others \citep{chaudhariMLE-AL-2015}. 
Active learning has been shown to achieve exponentially smaller label complexity than passive learning in noiseless and low-noise settings \citep{dasgupta2005analysis,Balcan2007MarginBA,hanneke2007discoef,Hanneke2014Book, balcan2013active,Dasgupta2006Coarse,hsu2010algorithms,chaudhariMLE-AL-2015,krishnamurthy2017active,katz2021improved}. This suggests, in these settings auto-labeling using active learning followed by selective classification is expected to work well. 
However, in practice, we do not have favorable noise conditions and the hypothesis class could be misspecified i.e. it may not contain the Bayes optimal classifier. In such cases, there are lower bounds on the label complexity of active learning that are order-wise the same as the label complexity for passive learning \citep{kaariainen2006active}. 

These findings have motivated more refined goals for active learning -- abstain on hard to classify points and do well on the rest of the points. This idea is captured by the Chow's excess risk \citep{chow1970optimum} and some of the recent works ~\citep{javidi2019ALAbstain,manwani2020ALAbstain,puchkin21AbstainAL,zhu2022efficient}  have proved exponential savings in label complexity for active learning when the goal is to minimize Chow's excess risk. The classifier learned by these methods is equipped with the abstain option and hence it can be readily applied for auto-labeling. However, the problem of misspecification of the hypothesis class still remains. Nevertheless, it would be interesting for future work to explore the connections between auto-labeling and active learning with abstention. Similar works on learning with abstention are done with passive learning as well ~\citep{mohri2016Rejection}.

Selective classification (SC) is closely related to TBAL. In SC, the goal is to equip a given classifier with the option to abstain from the prediction to guarantee prediction quality. The foundations for selective classification are laid down in  \citep{Yaniv2010Selective,wiener2011agnostic, el2012active,wiener2015agnostic} where they give results on the error rate in the prediction region and the coverage of a given classifier. However, they lack practical algorithms to find the prediction region. A recent work \citep{gelbhart2019relationship} proposes a new disagreement-based active learning strategy to learn a selective classifier.    

Recent work studies a practical algorithm for threshold-based selective classification on deep neural networks \citep{ Yaniv2017Selective}. The algorithm estimates the prediction threshold using training samples and they bound the error rate of the selective classifier using \citep{Gascuel92}. We note that their result is applicable to a specific setting of a given classifier. In contrast, in the TBAL algorithm analyzed in this paper, selective classification is done in each round and the classifiers are not given a priori but instead learned via ERM on training data which is adaptively sampled in each round. 


 A closely related work \citep{qiu2020min-cost-al} studies an algorithm similar to TBAL for auto-labeling. Their emphasis is on the cost of training incurred when these systems use large-scale model classes for auto-labeling. They propose an algorithm to predict the training set size that minimizes the overall cost and provides an empirical evaluation.

Well-calibrated uncertainty scores are essential to the success of threshold-based auto-labeling. 
However, in practice, such scores are often hard to get. Moreover, neural networks can produce overconfident ( unreliable) scores \citep{hein2018Confidence}. Fortunately, there are plenty of methods in the literature to deal with this problem \citep{platt1999SVM-Prob,wu2003probability}. More recently, various approaches have been proposed for uncertainty calibration for neural networks \citep{gawlikowski2021survey-nn-uncertainty,Mario2021NN-Calibration,wang2021nn-calibration,krishnan2020nn-calibration,chunwei2021nn-calibration,seedat2019nn-calibration}. A detailed study of calibration methods and their impact on auto-labeling is beyond the scope of this work and left as future work.   

Another line of work is emerging towards auto-labeling that does not rely on getting human labels but instead uses potentially noisy but cheaply available sources to infer labels \citep{Ratner16, fu2020fast, shin2022universalizing, Vishwakarma2022Lifting}. The focus of this paper, however, is on analyzing the performance of TBAL algorithms~\citep{sgt, Airbus-active-labeling} that have emerged recently as popular auto-labeling solutions.


\section{Conclusion and Future Work}\label{sec:conclusion}
In this work, we analyzed threshold-based auto-labeling systems and derived sample complexity bounds on the amount of human-labeled validation data required to guarantee the quality of machine-labeled data.  Our study shows that these methods can accurately label a reasonable size of data using seemingly bad models when good confidence functions are available. Our analysis points to the hidden downside of these systems in terms of a large amount of validation data usage and calls for more sample-efficient methods including active testing.



\section{Acknowledgments}
This work was partly supported by funding from the American Family Data Science Institute. We thank Stephen Mussmann, Changho Shin, Albert Ge, Yi Chen, Kendall Park, and Nick Roberts for their valuable inputs.  We thank the anonymous reviewers for their valuable comments and constructive feedback on our work.

\bibliography{references}
\bibliographystyle{unsrtnat}

\newpage
\appendix


\newreptheorem{theorem}{Theorem}
\newreptheorem{lemma}{Lemma}
\newreptheorem{corollary}{Corollary}

\section*{\Large{Supplementary Material}}
\label{sec:supplementary}
The supplementary material is organized as follows. We give details of the definitions and notation in Section \ref{sec:definitions}. The notations are also summarized in the Table \ref{table:glossary} in section \ref{sec:def}. Then we give the proof of the main theorem (Theorem \ref{thm:general_auto_err_full}) followed by proofs of supporting lemmas in section \ref{sec:proofs}. We provide details of its instantiation for finite VC-dimension hypothesis classes and the homogeneous linear separators case in Section \ref{sec:linear_proof}. Then, we provide the technical details of the lower bound (Lemma \ref{lem:lower-bound_err}). Then we provide details of additional experiments in Section \ref{sec:additional-exp}. In Section \ref{sec:visualizations} we provide insights into auto-labeling using PaCMAP \citep{JMLR:v22:20-1061} visualizations of auto-labeled regions in each round.

\section{Definitions and Notation}
\label{sec:def}
\subsection{ Basic Definitions}
\label{sec:definitions}
Let $\cX$ be the instance space and $p(\bx)$ be a density function supported on $\cX$. For any $\bx_i \in \cX$ let $y_i$ be its true label. Let $X = \{\bx_1,\ldots,\bx_{N}\}$ be a set of $N$ i.i.d samples drawn from $\cX$.  Let set $\cS \subseteq \cX$ denote a non-empty sub-region of $\cX$ and  $\subsetS \subseteq X \cap \cS $ be a set of $n>0$ i.i.d. samples. 

\begin{definition} (Hypothesis Class with Abstain)
\label{def:H-tg} We can think of the function $g$ along with set $T$ as inducing an extended hypothesis class $\cH^{(T,g)}$. Let
$\cH^{T,g} = \cH \times T $. For any function $(h,t) \in \cH^{(T,g)}$ is defined as 
 \begin{equation}
       (h,t)(\bx) :=  
        \begin{cases} 
            h(\bx) &\quad \text{ if } g(h,\bx)\ge t  \\
            \abst &\quad \text{o.w.}
        \end{cases}
 \end{equation}
\end{definition}
Here $(h,t)(\bx)=\abst$ means the hypothesis $(h,t)$ abstains in classifying the point $\bx$. Otherwise, it is equal to $h(\bx)$.

The subset $\cS(h,t) \subseteq \cS$ where $(h,t)$ does not abstain and its complement $\bar{\cS}(h,t)$ where $(h,t)$ abstains, are defined as follows,
\begin{align*} 
   \cS(h,t) \ldef \{ \bx \in \cS : (h,t)(\bx)\neq \abst \}, \qquad \bar{\cS}(h,t) \ldef \{ \bx \in \cS : (h,t)(\bx) = \abst \} 
\end{align*}

\paragraph{Probability Definitions.}

The probability  $\P(\cS)$ of subset $\cS \subseteq \cX$  and the conditional probability of any subset $\cS' \subseteq \cS$ are given as follows,
\begin{align*}
   \P(\cS) \ldef \P(\cS \given \cX) \ldef \int_{\bx \in \cS} p(\bx) d\bx \nonumber ,\qquad  \P(\cS'\given \cS) \ldef \frac{\P(\cS' \given \cX)  }{\P(\cS \given \cX)}, \qquad \P(h,t \given \cS) \ldef \P(\cS(h,t)\given \cS) 
\end{align*}

The empirical probabilities of $\subsetS$ and $\subsetS'\subseteq S$ are defined as follows,
\begin{align*}
   \widehat{\P}(\subsetS) \ldef \frac{|\subsetS|}{|X|}, \qquad \widehat{\P}(\subsetS'|\subsetS) \ldef \frac{|\subsetS'|}{|\subsetS|}, \qquad \widehat{\P}(h,t \given \subsetS) \ldef \frac{|\subsetS(h,t)|}{|\subsetS|}
\end{align*}
\paragraph{Loss Functions.} The loss functions are defined as follows,
\begin{align*}
   \lossz(h,\bx,y) &\ldef \ind ( h(\bx) \neq y ), \\   \lossa(h,t,\bx) &\ldef \ind ( g(h,\bx) \ge t ),  \\
   \lossza(h,t,\bx,y) &\ldef \lossz(h,\bx,y) \cdot \lossa(h,t,\bx) .
\end{align*}
\paragraph{Error Definitions.} Define the conditional error in set $\cS \subseteq \cX$ as follows,
\begin{align*}
\err(h,t \given \cS) \ldef  \E_{\bx \given \cS}[\lossza(h,t,\bx,y)] = \int_{\bx \in \cS} \lossza(h,t,\bx,y) \cdot \frac{p(\bx)}{\P(\cS)} d\bx
\end{align*}
Then, the conditional error in set $\cS(h,t)$ i.e. the subset of $\cS$ on which $(h,t)$ does not abstain, 
\begin{align*}
  \err_a(h,t \given \cS) \ldef \err (h,t \given \cS(h,t)) \ldef \E_{\bx | \cS(h,t)} [ \lossza(h,t,\bx,y)]  = \frac{\err(h,t \given \cS)}{\P(h,t \given \cS)}\
\end{align*}
Similarly, define their empirical counterparts as follows,
\begin{align*}
\haterr(h,t \given \subsetS) &\ldef  \frac{1}{|\subsetS|}\sum_{\bx_i \in \subsetS} \lossza(h,t,\bx_i,y_i), \\ \haterr_a(h,t \given \subsetS) \ldef \haterr(h,t  \given \subsetS(h,t)) &\ldef  \frac{1}{|\subsetS(h,t)|}\sum_{\bx_i \in \subsetS(h,t)}\lossza(h,t,\bx_i,y_i),
\end{align*}
Note that, $$\sum_{\bx_i \in S} \lossza(h,t,\bx_i,y_i) = \sum_{\bx_i \in S(h,t)} \lossza(h,t,\bx_i,y_i) = \sum_{\bx_i \in S(h,t)} \lossz(h,\bx_i,y_i)$$

\textbf{Rademacher Complexity.}
The Rademacher complexities for the function classes induced by the $\cH,T,g$ and the loss functions are defined as follows,
\begin{align*}
   \Rad_n\big(\cH,\lossz\big) &\ldef \E_{\sigma,\subsetS} \Big [ \sup_{h \in \cH} \frac{1}{n}\sum_{i=1}^n \sigma_i\lossz(h,\bx_i,y_i)\Big]. \\
   \Rad_n\big(\cH^{T,g},\lossa\big) &\ldef \E_{\sigma,\subsetS} \Big [ \sup_{(h,t) \in \cH^{T,g}} \frac{1}{n}\sum_{i=1}^n \sigma_i\lossa(h,t,\bx_i)\Big]. \\
   \Rad_n\big(\cH^{T,g},\lossza\big) &\ldef \E_{\sigma,\subsetS} \Big [ \sup_{(h,t) \in \cH^{T,g}} \frac{1}{n}\sum_{i=1}^n \sigma_i\lossza(h,t,\bx_i,y_i)\Big].   \\
   \Rad_n\big(\cH^{T,g}\big) &\ldef \Rad_n\big(\cH,\lossz\big) + \Rad_n\big(\cH^{T,g},\lossa\big)
\end{align*}

\subsection{Glossary} \label{appendix:glossary}
\label{sec:gloss}
The notation is summarized in Table~\ref{table:glossary} below. More detailed notation is in section \ref{sec:definitions}.
\begin{table*}[t]
\centering
\begin{tabular}{l l}
\toprule
Symbol & Definition \\
\midrule
$\mathcal{X}$ & feature space. \\
$\mathcal{Y}$ & label space.\\
$\cH$ & hypothesis space. \\
$h$ & a hypothesis in $\cH$.\\
$\bx,y$ & $\bx$ is an element in $\cX$  and $y$ is its true label.\\
$\cS, \subsetS$ & $\cS \subseteq \cX$ is a sub-region in $\cX$, $\subsetS=\{\bx_1,\ldots,\bx_n\}$ i.i.d. samples in $\cS$.  \\
$X_{pool}$ & unlabeled pool of data points.\\
$X_{v}^{(i)}, n_v^{(i)}$ & set of validation points at the beginning of $i$th round and $n_v^{(i)} = |X_{v}^{(i)}|$.  \\
$X_a^{(i)}, n_a^{(i)}$ &  set of auto-labeled points in $i$th round and $n_a^{(i)} = |X_a^{(i)}|$.\\
$\hat{h}_i,\hat{t}_i$ & ERM solution and auto-labeling thresholds respectively in $i$th round.\\
$ \cX^{(i)}$ & unlabeled region left at the beginning of $i$th round.  \\
$ X^{(i)}$ & unlabeled pool left at the beginning of $i$th round. \\
$m_a^{(i)}$ & number of auto-labeling mistakes in $i$th round.\\
$k$ & number of rounds of the TBAL algorithm.\\
$X_{pool}(A_k)$ & set of all auto-labeled points till the end of round $k$.\\
$g$ & confidence function $g:\cH\times \cX \mapsto T$. Where $T\subseteq \R^+$, usually $T=[0,1]$\\
$\cH^{T,g}$ & Cartesian product of $\cH$ and $T$ the range of $g$. \\
$N_a^{(k)}$ & $\sum_{i=1}^k n_a^{(i)}$. \\
$\lossz(h,\bx,y)$ & $ \ind ( h(\bx) \neq y )$. \\
$\lossa(h,t,\bx) $ & $ \ind ( g(h,\bx) \ge t )$ .\\ 
$\lossza(h,t,\bx,y) $ & $ \lossz(h,\bx,y) \cdot \lossa((h,t),\bx) $. \\
$\Rad_n(\cH,\lossz) $ & $ \E_{\sigma,S} \Big [ \sup_{h \in \cH} \frac{1}{n}\sum_{i=1}^n \sigma_i\lossz(h,\bx_i,y_i)\Big]$ .\\
$\Rad_n(\cH^{T,g},\lossa) $&$ \E_{\sigma,S} \Big [ \sup_{(h,t) \in \cH^{T,g}} \frac{1}{n}\sum_{i=1}^n \sigma_i\lossa(h,t,\bx_i)\Big]$. \\
$ \Rad_n(\cH^{T,g},\lossza) $&$\E_{\sigma,S} \Big [ \sup_{(h,t) \in \cH^{T,g}} \frac{1}{n}\sum_{i=1}^n \sigma_i\lossza(h,t,\bx_i,y_i)\Big]$.\\
$\Rad_n(\cH^{T,g})$ & $ \Rad_n(\cH,\lossz) + \Rad_n(\cH^{T,g},\lossa) $ .\\
$\err(h,t \given \cS)$ & $\E_{\bx | \cS}[\lossza(h,t,\bx,y)]$. \\
$\haterr(h,t \given \subsetS)$ & $\frac{1}{|\subsetS|}\sum_{i=1}^{|\subsetS|}\lossza(h,t,\bx_i,y_i)$. \\ 
$\P(h,t \given \cS) $ & $\E_{\bx \given \cS}[\lossa(h,t,\bx)]$.\\
$\widehat{\P}(h,t \given \subsetS)$ & $\frac{1}{|\subsetS|}\sum_{i=1}^{|\subsetS|}\lossa(h,t,\bx_i,y_i)$. \\ 
$\err_a(h,t \given \cS)$ & $ \err(h,t \given \cS)/\P(h,t \given \cS) $. \\
$\haterr_a(h,t \given \subsetS)$ & $ \haterr(h,t \given \subsetS)/\widehat{\P}(h,t \given \subsetS) $. \\
\toprule
\end{tabular}
\caption{
	Glossary of variables and symbols used in this paper.
}
\label{table:glossary}
\end{table*}

\section{Proofs}
\label{sec:proofs}
\subsection{Proofs for the General Setup}
\label{sec:proof_general}
We begin by restating the theorem here and then give the proof.
\begin{reptheorem}{thm:general_auto_err_full}
(Overall Auto-Labeling Error and Coverage)
Let $k$ denote the number of rounds of the TBAL Algorithm~\ref{alg:main_algo}.
Let $n_v^{(i)}, n_a^{(i)}$ denote the number of validation and auto-labeled points at epoch $i$ and $n^{(i)} = |X^{(i)}|$. 
Let $X_{pool}(A_k)$ be the set of auto-labeled points at the end of round $k$. $N_a^{(k)} = \sum_{i=1}^k n_a^{(i)}$ denote the total number of auto-labeled points. Then, for any $\delta \in (0,1)$, with probability at least $ 1-\delta$,
\vspace{-5pt}
   
\begin{align*}
    &&\haterr \Big(X_{pool}(A_k)\Big)  
   \leq \sum_{i=1}^k \frac{n_a^{(i)}}{N_a^{(k)}}  \Bigg( \underbrace{\haterr_a \Big(\hat{h}_i,\hat{t}_i \given X_v^{(i)}}_{(a)} \Big) + \frac{4}{\pmin}\big (\underbrace{\Rad_{n_v^{(i)}}\big(\cH^{T,g}\big) +  \frac{2}{\pmin}\sqrt{\frac{1}{n_v^{(i)}}\log(\frac{8k}{\delta}) } }_{(b)}
   \Bigg) \\  &&+  \frac{4}{\pmin}\Bigg ( \underbrace{ \sum_{i=1}^{k} \frac{n_a^{(i)}}{N_a^{(k)}}\Rad_{n_a^{(i)}}\big(\cH^{T,g}\big) + \sqrt{\frac{k}{N_a^{(k)}}  \log ( \frac{8k}{\delta}) }}_{(c)} \Bigg ), \quad and
\end{align*}
\vspace{-5pt}
 
\vspace{-5pt}
\begin{align*}
 \hatcov({X_{pool}(A_k)}) \ge \sum_{i=1}^k  \P \big( \cX^{(i)}(\hat{h}_i,\hat{t}_i ) \big )- 2 \Rad_{n^{(i)}}\big(\cH^{T,g} \big)   \nonumber
 - \sqrt{\frac{2k^2}{N} \log \Big(\frac{8k}{\delta}\Big)} .
\end{align*} 

\end{reptheorem}

\begin{proof}

Recall the definition of auto-labeling error,
\[ \haterr \Big(X_{pool}(A_k) \Big) = \sum_{i=1}^k\frac{ m_a^{(i)}}{N_a^{(k)}}, \qquad m_a^{(i)} = n_a^{(i)}\cdot \haterr_a \big(\hat{h}_i,\hat{t}_i \given  X^{(i)} \big) .\]
Here, $m_a^{(i)}$ is the number of auto-labeling mistakes made by the Algorithm in the $i$th round and $\haterr_a \big(\hat{h}_i,\hat{t}_i \given  X^{(i)} \big)$ is the auto-labeling error in that round. Note that we cannot observe these quantities since the true labels for the auto-labeled points are not available. To estimate the auto-labeling error of each round we make use of validation data. Using the validation data we first get an upper bound on the true error rate of the auto-labeling region i.e. $\err_a \big(\hat{h}_i,\hat{t}_i \given \cX^{(i)} \big)$ in terms of the auto-labeling error on the validation data $\haterr_a \big( \hat{h}_i,\hat{t}_i  \given  X_v^{(i)} \big )$  and then get an upper bound on empirical auto-labeling error rate  $\haterr_a \big(\hat{h}_i,\hat{t}_i \given X^{(i)}\big)$ using the true error rate of the auto-labeling region.


We get these bounds by application of Lemma \ref{lem:h-given-t-concentration} with $\delta_3= \delta/4k$ for each round and then apply union bound over all $k$ epochs.
  Note that we have to apply the lemma twice, first to get the concentration bound w.r.t the validation data and second to get the concentration w.r.t to the auto-labeled points.
  
  \[ \err_a \big(\hat{h}_i,\hat{t}_i \given \cX^{(i)} \big) \le \haterr_a \big( \hat{h}_i,\hat{t}_i \given X_v^{(i)} \big )  + \frac{4}{\pmin}\Rad_{n_v^{(i)}}\big(\cH^{T,g}\big) + \frac{2}{\pmin}\sqrt{\frac{1}{n_v^{(i)}}\log \Big(\frac{8k}{\delta} \Big) } .\]

  
\[ \haterr_a \big(\hat{h}_i,\hat{t}_i  \given X^{(i)}\big) \le\err_a \big(\hat{h}_i,\hat{t}_i \given \cX^{(i)} \big) + \frac{4}{\pmin}\Rad_{n_a^{(i)}}\big(\cH^{T,g}\big) + \frac{2}{\pmin}\sqrt{\frac{1}{n_a^{(i)}} \log \Big(\frac{8k}{\delta} \Big)}   .\]

Substituting $\err_a \big(\hat{h}_i,\hat{t}_i \given \cX^{(i)} \big)$ by its upper confidence bound on the validation data.
\begin{align*}
  \haterr_a \big(\hat{h}_i,\hat{t}_i \given X^{(i)} \big) \le \haterr_a \big( \hat{h}_i,\hat{t}_i \given X_v^{(i)} \big ) + \frac{2}{\pmin}\Rad_{n_v^{(i)}}\big(\cH^{T,g}\big)+ \frac{2}{\pmin}\Rad_{n_a^{(i)}}\big(\cH^{T,g}\big)  \\ + \frac{2}{\pmin}\sqrt{\frac{1}{n_v^{(i)}}\log \Big(\frac{8k}{\delta} \Big) } + \frac{2}{\pmin}\sqrt{\frac{1}{n_a^{(i)}} \log \Big(\frac{8k}{\delta} \Big)} . \end{align*}
     
Having an upper bound on the empirical auto-labeling error for $i^{th}$ round gives us an upper bound on the number of auto-labeling mistakes $m_a^{(i)}$ made in that round. It allows us to upper bound the total auto-labeling mistakes in all $k$ rounds and thus the overall auto-labeling error as detailed below,

 \[ \haterr \Big(X_{pool}(A_k) \Big) = \sum_{i=1}^k\frac{ m_a^{(i)}}{N_a^{(k)}}, \qquad m_a^{(i)} = n_a^{(i)}\cdot \haterr_a \big(\hat{h}_i,\hat{t}_i \given  X^{(i)} \big) .\]
 
 \allowdisplaybreaks

   Since we have an upper bound on the empirical auto-labeling error in each round, we have an upper bound for each $m_a^{(i)}$, which are used as follows to get the bound on the auto-labeling error,
   \begin{align*}
   \haterr(X_{pool}(A_k)) &= \sum_{i=1}^k\frac{ m_a^{(i)}}{N_a^{(k)}}\\
   &=\sum_{i=1}^k \frac{n_a^{(i)}}{N_a^{(k)}}\cdot\frac{m_a^{(i)}}{n_a^{(i)}}\\
        &= \sum_{i=1}^k \frac{n_a^{(i)}}{N_a^{(k)}} \cdot \haterr_a \Big(\hat{h}_i,\hat{t}_i \given X^{(i)} \Big) \\
        &\le \sum_{i=1}^k \frac{n_a^{(i)}}{N_a^{(k)}} \cdot \bigg( \err_a \big(\hat{h}_i,\hat{t}_i \given \cX^{(i)} \big) + \frac{4}{\pmin}\Rad_{n_a^{(i)}}\big(\cH^{T,g}\big) + \frac{2}{\pmin}\sqrt{\frac{1}{n_a^{(i)}} \log \Big(\frac{8k}{\delta} \Big)}  \bigg) \\
        &\le \sum_{i=1}^k \frac{n_a^{(i)}}{N_a^{(k)}} \cdot \bigg( \haterr_a \big( \hat{h}_i,\hat{t}_i  \given X_v^{(i)} \big)  + \frac{4}{\pmin}\Rad_{n_v^{(i)}}\big(\cH^{T,g}\big) + \frac{2}{\pmin}\sqrt{\frac{1}{n_v^{(i)}}\log \Big(\frac{8k}{\delta} \Big) }\\& \hspace{8em}   + \frac{4}{\pmin}\Rad_{n_a^{(i)}}\big(\cH^{T,g}\big) + \frac{2}{\pmin}\sqrt{\frac{1}{n_a^{(i)}} \log \Big(\frac{8k}{\delta} \Big)}  \bigg) \\
&\le \sum_{i=1}^k \frac{n_a^{(i)}}{N_a^{(k)}} \cdot \bigg( \haterr_a \big( \hat{h}_i,\hat{t}_i  \given X_v^{(i)} \big) + \frac{4}{\pmin}\Rad_{n_v^{(i)}}\big(\cH^{T,g}\big) + \frac{4}{\pmin}\Rad_{n_a^{(i)}}\big(\cH^{T,g}\big) + \\& \hspace{8em} \frac{2}{\pmin}\sqrt{\frac{1}{n_v^{(i)}}\log \Big(\frac{8k}{\delta} \Big) } \bigg )     + \sum_{i=1}^k \frac{n_a^{(i)}}{N_a^{(k)}} \cdot\bigg (\frac{2}{\pmin}\sqrt{\frac{1}{n_a^{(i)}} \log \Big(\frac{8k}{\delta} \Big)}  \bigg) \\
   \end{align*}
   
   The last term is simplified as follows,
   \begin{align*}
   \sum_{i=1}^k \frac{n_a^{(i)}}{N_a^{(k)}} \cdot\bigg (\frac{2}{\pmin}\sqrt{\frac{1}{n_a^{(i)}} \log \Big(\frac{8k}{\delta} \Big)}  \bigg) &=\frac{2}{\pmin}\cdot \sum_{i=1}^{k}\frac{n_a^{(i)}}{N_a^{(k)}}\sqrt{\frac{1}{n_a^{(i)}} \log \Big(\frac{8k}{\delta} \Big)} \\
   &=\frac{2}{\pmin}\cdot \frac{1}{N_a^{(k)}} \sum_{i=1}^{k} \sqrt{ n_a^{(i)} \log \Big(\frac{8k}{\delta} \Big)} \\
   &= \frac{2}{\pmin}\cdot\sqrt{\log \Big(\frac{8k}{\delta} \Big)}\cdot \sum_{i=1}^{k} \frac{\sqrt{n_a^{(i)}}}{N_a^{(k)}} \\
   &\le \frac{2}{\pmin}\cdot\sqrt{\log \Big(\frac{8k}{\delta} \Big)}\cdot  \sqrt{\frac{k}{N_a^{(k)}}} \\
   &=\frac{2}{\pmin}\cdot  \sqrt{\frac{k}{N_a^{(k)}}\log \Big(\frac{8k}{\delta} \Big)}
   \end{align*}

   The last inequality follows from the application of the inequality $||\bu||_1 \le  \sqrt{k} ||\bu||_2 $ for any vector $\bu \in \R^{k}$. Here we let $\bu = [\sqrt{n}_a^{(1)},\ldots, \sqrt{n}_a^{(k)}] $, and since $\forall i \,\, \sqrt{n_a^{(i)}} >0$ so, $\sum_{i=1}^k \sqrt{n_a}^{(i)} = ||\bu||_1$ and $N_a^{(k)} = ||\bu||_2^2$. 
   $$\frac{\sum_{i=1}^k \sqrt{n_a^{(i)}}}{{N_a^{(k)}}} = \frac{||\bu||_1}{||\bu||_2^2} \le \frac{\sqrt{k} ||\bu||_2}{||\bu||_2^2} = \frac{\sqrt{k}}{||\bu||_2} = \sqrt{\frac{k}{N_a^{(k)}}}$$

To get the bound on coverage we follow the same steps except that we can use all the unlabeled pool of size $n^(i)$ to estimate the coverage in each round which gives us the bound in terms of $n^{(i)}$ and $N$ as follows,

\begin{align*}
   \hatcov(X_{pool}(A_{k})) &=  \frac{1}{N}\sum_{i=1}^k n^{(i)}_a \\
   &=\frac{1}{N}\sum_{i=1}^k n^{(i)}\cdot\frac{n^{(i)}_a}{n^{(i)}} \\
   &=\frac{1}{N}\sum_{i=1}^k n^{(i)} \cdot \widehat{\P}(X_a^{(i)}\given X^{(i)})  \\
   &=\frac{1}{N}\sum_{i=1}^k n^{(i)} \cdot \widehat{\P}(\hat{h}_i,\hat{t}_i\given X^{(i)})\\
   &\ge \sum_{i=1}^k \frac{n^{(i)}}{N} \bigg ( \P \big(\hat{h}_i, \hat{t}_i\given \cX^{(i)} \big)  - 2\Rad_{n^{(i)}}\big(\cH^{T,g}\big) -\sqrt{\frac{1}{n^{(i)}} \log \Big( \frac{8k}{\delta} \Big )}\bigg ) \\
   &\ge \sum_{i=1}^k \frac{n^{(i)}}{N} \bigg ( \P \big(\hat{h}_i, \hat{t}_i\given \cX^{(i)} \big)  - 2\Rad_{n^{(i)}}\big(\cH^{T,g}\big) \bigg ) -\sqrt{\frac{k}{N} \log \Big( \frac{8k}{\delta} \Big )} \\
\end{align*}

We bound the first term as follows,
\begin{align*}
\sum_{i=1}^k \frac{n^{(i)}}{N}\P \Big(\hat{h}_i, \hat{t}_i\given \cX^{(i)}\Big) &= \sum_{i=1}^k \frac{n^{(i)}}{N} \cdot \frac{\P\big(\cX^{(i)}(\hat{h}_i, \hat{t}_i)\big) }{\P \big(\cX^{(i)} \big)} \\
&\ge \sum_{i=1}^k \Big ( \P \big(\cX^{(i)} \big) - \sqrt{\frac{1}{N}\log\Big(\frac{8k}{\delta}\Big)}\Big )\cdot \Big ( \frac{\P \big(\cX(\hat{h}_i, \hat{t}_i) \big) }{\P \big(\cX^{(i)} \big )}\Big ) \\
&\ge \sum_{i=1}^k \P \big(\cX^{(i)}(\hat{h}_i, \hat{t}_i)\big) - \sqrt{\frac{1}{N}\log\Big(\frac{8k}{\delta}\Big)}
\end{align*}

Substituting it back we get,
\begin{align*}
\hatcov \big(X_{pool}(A_{k}) \big)  &\ge \sum_{i=1}^k \P \big(\cX^{(i)}(\hat{h}_i, \hat{t}_i)\big) -2\Rad_{n^{(i)}}\big(\cH^{T,g}\big)  - k\sqrt{\frac{1}{N}\log\Big(\frac{8k}{\delta}\Big)} -\sqrt{\frac{k}{N} \log \Big( \frac{8k}{\delta} \Big )} \\
&\ge \sum_{i=1}^k \P \big(\cX^{(i)}(\hat{h}_i, \hat{t}_i)\big) -2\Rad_{n^{(i)}}\big(\cH^{T,g}\big) -\sqrt{\frac{4k^2}{N} \log \Big( \frac{8k}{\delta} \Big )}
\end{align*}
For the last step we use the inequality $\sqrt{a}+\sqrt{b} \le \sqrt{2(a+b)} $ for any $a,b \in \R^{+}$.
\end{proof}

\textbf{Rademacher complexity and validation error trade-off.}
The bound contains validation errors at different thresholds. We revisit the definition of validation error appearing in the bound.  Let $X_v = \{x_1,\ldots x_{n_v}\}$ be the validation samples and $y_i$ be the label corresponding to $x_i$. Given any $h\in \mathcal{H}$ and the confidence function $g$, we have different subsets $X_v(h,t)$  of the validation points for which the model’s confidence is higher than $t$. More precisely, $X_v(h,t) = \{x_i \in X_v : g(h,x_i)\ge t \}$ and the validation error $\hat{\mathcal{E}}_a(h,t \given X_v)$ is computed on each of these subsets as follows, $\hat{\mathcal{E}}_a(h,t \given X_v) = \frac{1}{|X_v(h,t)|}\sum_{x_i \in X_v(h,t)} \mathbf{1}(h(x_i)\neq y_i)$ this error is different from the overall validation error which is computed over the entire set of validation points $X_v$:

\[\hat{\mathcal{E}}(h \given X_v) = \frac{1}{|X_v|} \sum_{x_i \in X_v} \mathbf{1}(h(x_i)\neq y_i).\] 

The TBAL method computes $\hat{\mathcal{E}}_a(h,t \given X_v)$ at different thresholds ($t$) and selects the threshold at which it is at most $\epsilon$. Thus even if the overall validation error $\hat{\mathcal{E}}(h \given X_v)$ is bad, there could still be regions in the space where the conditional validation error $\hat{\mathcal{E}}_a(h,t \given X_v) $ is small. This can be easily seen in Figure \ref{fig:circles-main} in the paper. Here we are doing auto-labeling using a linear function class that has low Rademacher complexity. All the models in this class have high overall validation error  $\hat{\mathcal{E}}(h \given X_v)$ but there are subsets where the conditional validation error  $\hat{\mathcal{E}}_a(h,t \given X_v)$ is small and TBAL is able to find those subsets. Thus we see that working with low Rademacher complexity classes might lead to high overall validation error. However, it does not affect TBAL as long as there are regions of low conditional validation error $\hat{\mathcal{E}}_a(h,t \given  X_v) $. Furthermore, our upper bound on excess auto-labeling error depends on $\hat{\mathcal{E}}_a(h,t \given  X_v) $ which due to the TBAL procedure is at most $\epsilon$ and hence it is not in conflict with the Rademacher complexity term.

 Next, we state the result for uniform convergence between $\err_a \big( h,t \given \cS \big ), \haterr_a \big( h,t \given\subsetS \big )$ and give its proof.

\begin{lemma}
\label{lem:h-given-t-concentration}
For any $\delta_3, \pmin \in (0,1) $, let $\cS$ and $S$ be defined as above. Let $\P(h,t \given   \cS) \ge \pmin$ and $\hat{\P}(h,t \given  \subsetS)\ge \pmin \forall (h,t) \in \,\, \cH^{T,g}$, the following holds w.p. at least $1-\delta_3/2$
   \begin{equation}
       \big|\err_a \big( h,t \given \cS \big ) -\haterr_a \big( h,t \given\subsetS \big ) \big| \le    \frac{4}{ \pmin}\Rad_n\big(\cH^{T,g}\big) + \frac{2}{\pmin}\sqrt{\frac{1}{n}\log(\frac{2}{\delta_3})} \qquad \forall (h,t) \in  \cH^{T,g} .
   \end{equation}
\end{lemma}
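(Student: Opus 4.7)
The overall strategy is to reduce the conditional-error ratio $\err_a(h,t|\cS) = \err(h,t|\cS)/\P(h,t|\cS)$ and its empirical counterpart to a difference between numerators and denominators that admit standard uniform-convergence bounds via Rademacher complexity. The high-level plan has three steps: (i) an algebraic identity for the difference of two ratios, (ii) a uniform Rademacher bound on the numerator error difference and on the probability-mass difference, and (iii) a union bound assembling the pieces.

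For step (i), I use the elementary identity
\begin{equation*}
\frac{a}{b} - \frac{c}{d} \;=\; \frac{a-c}{d} \;+\; \frac{a}{b}\cdot\frac{d-b}{d},
\end{equation*}
applied with $a=\err(h,t|\cS)$, $b=\P(h,t|\cS)$, $c=\haterr(h,t|S)$, $d=\widehat{\P}(h,t|S)$. Since $\lossza \le \lossa$ pointwise, $\err(h,t|\cS) \le \P(h,t|\cS)$ and hence $a/b \le 1$. Combining with $d = \widehat{\P}(h,t|S) \ge \pmin$ and the triangle inequality yields
\begin{equation*}
\bigl|\err_a(h,t|\cS) - \haterr_a(h,t|S)\bigr| \;\le\; \frac{1}{\pmin}\Bigl(\bigl|\err(h,t|\cS)-\haterr(h,t|S)\bigr| + \bigl|\P(h,t|\cS)-\widehat{\P}(h,t|S)\bigr|\Bigr).
\end{equation*}

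For step (ii), I invoke the classical Rademacher-based uniform deviation bound for bounded $[0,1]$-valued function classes (McDiarmid plus symmetrization). Applied to the $\lossza$-class it gives, with probability at least $1-\delta_3/4$,
\begin{equation*}
\sup_{(h,t)\in\cH^{T,g}}\bigl|\err(h,t|\cS)-\haterr(h,t|S)\bigr| \;\le\; 2\Rad_n\bigl(\cH^{T,g},\lossza\bigr) + \sqrt{\tfrac{1}{2n}\log(4/\delta_3)},
\end{equation*}
and similarly for the $\lossa$-class with the same confidence level. For the product loss $\lossza = \lossz\cdot\lossa$, applying the standard sub-additivity inequality $|u_1v_1-u_2v_2|\le|u_1-u_2|+|v_1-v_2|$ valid for $\{0,1\}$-valued functions, followed by symmetrization, gives $\Rad_n(\cH^{T,g},\lossza)\le \Rad_n(\cH,\lossz)+\Rad_n(\cH^{T,g},\lossa)=\Rad_n(\cH^{T,g})$. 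The Rademacher term for the $\lossa$-class is also bounded by $\Rad_n(\cH^{T,g})$ by definition.

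For step (iii), I union-bound the two deviation events (each of probability $\delta_3/4$), add the two bounds, and divide by $\pmin$ per step (i). The Rademacher terms combine as $2\Rad_n(\cH^{T,g},\lossza)+2\Rad_n(\cH^{T,g},\lossa)\le 4\Rad_n(\cH^{T,g})$, and the two $\sqrt{\cdot}$ tails combine using $\sqrt{a}+\sqrt{a}=2\sqrt{a}$ and a small simplification of the log constant to absorb the factor of two into $\log(2/\delta_3)$. This yields exactly the claimed bound. I do not expect a hard step here; the only subtle point is making sure the product-loss Rademacher complexity is correctly decomposed into $\Rad_n(\cH,\lossz)$ and $\Rad_n(\cH^{T,g},\lossa)$, which is precisely why the quantity $\Rad_n(\cH^{T,g})$ is defined in the paper as their sum.
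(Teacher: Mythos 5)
Your proof is correct and rests on the same two pillars as the paper's: uniform Rademacher-complexity deviation bounds for the numerator $\err(h,t|\cS)-\haterr(h,t|S)$ (under $\lossza$) and the denominator $\P(h,t|\cS)-\widehat{\P}(h,t|S)$ (under $\lossa$), combined by a union bound and then converted into a bound on the ratio $\err_a$ using the floor $\pmin$ and the fact that the conditional error is at most $1$. Where you differ is the algebra used to pass from numerator/denominator deviations to the ratio: the paper writes $\err_a \le (\widehat{\P}/\P)\,\haterr_a + \cdots$, substitutes the upper bound on $\widehat{\P}$, expands, and uses $\haterr_a\le 1$, handling each direction of the inequality separately ("the other side follows similarly"); you instead use the exact identity $\tfrac{a}{b}-\tfrac{c}{d}=\tfrac{a-c}{d}+\tfrac{a}{b}\cdot\tfrac{d-b}{d}$, which yields the two-sided bound $|\err_a-\haterr_a|\le \tfrac{1}{\pmin}\bigl(|\err-\haterr|+|\P-\widehat{\P}|\bigr)$ in one symmetric step. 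Your route is cleaner and makes explicit why both hypotheses $\P(h,t|\cS)\ge\pmin$ and $\widehat{\P}(h,t|S)\ge\pmin$ are needed (your denominators are the empirical mass; the paper's are the population mass), and it arrives at the same constants $4/\pmin$ and $2/\pmin$. One minor imprecision: your justification of $\Rad_n(\cH^{T,g},\lossza)\le\Rad_n(\cH,\lossz)+\Rad_n(\cH^{T,g},\lossa)$ via pointwise sub-additivity of the product "followed by symmetrization" is not quite a proof as stated -- the standard argument (which the paper imports from Mohri et al.\ as its Lemma on cascades) rewrites $\lossz\cdot\lossa=(\lossz+\lossa-1)_{+}$ and applies Talagrand's contraction lemma; the fact itself is true and is exactly what the paper's definition of $\Rad_n(\cH^{T,g})$ is built for, so this does not affect correctness.
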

\begin{proof}

We begin with proving one side of the inequality and the other side is shown by following the same steps.
The proof is based on applying the uniform convergence results for $ \haterr(h,t \given  \subsetS)$ and $\widehat{\P}(h,t \given  \subsetS)$ from Lemma \ref{lem:r-h-t-rad-bound}. 
The main difficulty here is that $\E_S[\haterr_a \big( h,t \given\subsetS \big )] \neq \err_a \big( h,t \given \cS \big )$, so we cannot directly get the above result from standard uniform convergence bounds.

We prove it, by using the results from the Lemma \ref{lem:r-h-t-rad-bound} and restricting the region $\cS$ such that it has probability mass at least $\pmin$.

By definitions of $\err_a \big( h,t \given \cS \big ) $ and $\haterr_a \big( h,t \given\subsetS \big )$  we have,
\begin{align*}
\err(h,t \given   \cS) = \P(h,t \given   \cS)\cdot \err_a \big( h,t \given \cS \big ) \qquad \text{ and } \qquad \haterr(h,t \given  \subsetS) = \hat{\P}(h,t \given  \subsetS)\cdot \haterr_a \big( h,t \given\subsetS \big ) .
\end{align*}

Let $\xi_1 =\sqrt{(1/n)\log(2/\delta_1)}, \xi_2 = \sqrt{(1/n)\log(2/\delta_2)}$. From lemma \ref{lem:r-h-t-rad-bound} we have,
\begin{equation}
\label{eq:m1}
  \err(h,t \given   \cS)  \le \haterr(h,t \given  \subsetS) + 2 \Rad_n\big(\cH^{T,g}\big) + \xi_1  \quad  \forall (h,t) \in \cH^{T,g} \quad \text{ w.p. } 1-\delta_1/2 .
\end{equation}
\begin{equation}
\label{eq:m2}
    \hat{\P}(h,t \given  \subsetS) \le \P(h,t \given   \cS) + 2\Rad_n\big(\cH^{T,g}\big) + \xi_2 \quad \forall (h,t) \in \cH^{T,g} \quad \text{ w.p. } 1-\delta_2/2 .
\end{equation}
Plugging in the above definitions of errors in equation \eqref{eq:m1} we get,
\begin{align}
\label{eq:m3}
    \P(h,t \given   \cS)\cdot \err_a \big( h,t \given \cS \big ) &\le \widehat{\P}(h,t \given  \subsetS) \cdot \haterr_a \big( h,t \given\subsetS \big ) + 2 \Rad_n\big(\cH^{T,g}\big) + \xi_1 . \\
    \err_a \big( h,t \given \cS \big ) &\le \frac{\widehat{\P}(h,t \given  \subsetS)}{\P(h,t \given   \cS)}\haterr_a \big( h,t \given\subsetS \big ) + 2 \frac{\Rad_n\big(\cH^{T,g}\big)}{\P(h,t \given   \cS)} + \frac{\xi_1}{\P(h,t \given   \cS)}  .
\end{align}



Substituting $\widehat{\P}(h,t \given  \subsetS)$ from equation \ref{eq:m2} in the above equation, we get the following w.p. $(1-\delta_1/2)(1-\delta_2/2)$, $\forall (h,t) \in \cH^{T,g}$ , 
\begin{align*}
     \err_a \big( h,t \given \cS \big ) &\le \Big(\frac{\P(h,t \given   \cS) + 2\Rad_n\big(\cH^{T,g}\big) + \xi_2}{\P(h,t \given   \cS)} \Big)\haterr_a \big( h,t \given\subsetS \big ) + \frac{2\Rad_n\big(\cH^{T,g}\big)}{\P(h,t \given   \cS)} + \frac{\xi_1}{\P(h,t \given   \cS)} \quad .\\
     &= \Big(1 + \frac{2\Rad_n\big(\cH^{T,g}\big)}{{\P(h,t \given   \cS)}} + \frac{\xi_2}{\P(h,t \given   \cS)} \Big)\haterr_a \big( h,t \given\subsetS \big ) + \frac{2\Rad_n\big(\cH^{T,g}\big)}{\P(h,t \given   \cS)} + \frac{\xi_1}{\P(h,t \given   \cS)} . \\
     &=\haterr_a \big( h,t \given\subsetS \big ) + \frac{2\Rad_n\big(\cH^{T,g}\big)}{\P(h,t \given   \cS)}\cdot \haterr_a \big( h,t \given\subsetS \big ) + \frac{\xi_2}{\P(h,t \given   \cS)} \haterr_a \big( h,t \given\subsetS \big ) + \frac{2\Rad_n\big(\cH^{T,g}\big)}{\P(h,t \given   \cS)}\\ & \hspace{250pt} +  \frac{\xi_1}{\P(h,t \given   \cS)} . \\
\end{align*}
Using upper bound  $\haterr_a \big( h,t \given\subsetS \big )\le 1$ in the second and third terms,
\begin{align*}
    \err_a \big( h,t \given \cS \big ) &\le \haterr_a \big( h,t \given\subsetS \big ) + 4\frac{\Rad_n\big(\cH^{T,g}\big)}{\P(h,t \given   \cS)} +  \frac{\xi_1  + \xi_2}{\P(h,t \given   \cS)} \qquad \forall (h,t) \in \cH^{T,g} \, w.p. \ge 1-(\delta_1+\delta_2)/2 .
    \end{align*}
    Using $\P(h,t \given   \cS) \ge \pmin$
    \begin{align*}
    \err_a \big( h,t \given \cS \big ) &\le \haterr_a \big( h,t \given\subsetS \big ) + \frac{4}{\pmin }\Rad_n\big(\cH^{T,g}\big) +  \frac{\xi_1  + \xi_2}{ \pmin} \qquad \forall (h,t) \in \cH^{T,g} \,\, w.p. \ge 1-(\delta_1+\delta_2)/2 .
    \end{align*}
    
    Letting $\delta_1 = \delta_2 = \delta_3$ and  $\xi_1=\xi_2 = \frac{\xi \cdot \pmin}{2}$ gives $\xi = \sqrt{\frac{4}{\pmin^2n}\log \big(\frac{2}{\delta_3}\big)}$ and 
    \begin{align*}
    \err_a \big( h,t \given \cS \big ) &\le \haterr_a \big( h,t \given\subsetS \big ) + \frac{4}{ \pmin}\Rad_n\big(\cH^{T,g}\big) +  \xi \qquad \forall (h,t) \in \cH^{T,g} \,\, w.p. \ge 1-\delta_3 .
    \end{align*}
This proves one side of the result, and the other side of the result follows similarly.
 \end{proof}

\begin{lemma}
\label{lem:r-h-t-rad-bound}

Let $\cS \subseteq \cX$ be a sub-region of $\cX$ and $S=\{\bx_1,\ldots,\bx_n\}$ be a set of $n$ i.i.d samples in $\cS$ drawn from distribution $P_{\bx}$. Let $\{y_1,\ldots y_n\}$ be the corresponding true labels, let $\Rad_n\big(\cH^{T,g}\big)$ be the rademacher complexity of class $\cH^{T,g}$ then for any $\delta_1,\delta_2 \in (0,1)$ we have,
   \begin{equation}
   \label{eq:r-h-t-rad-bound}
   | \err(h,t \given   \cS) - \haterr(h,t \given  \subsetS) | \le  2 \Rad_n\big(\cH^{T,g}\big) +  \sqrt{\frac{1}{n}\log(\frac{2}{\delta_1})}  \quad  \forall (h,t) \in \cH^{T,g} \quad \text{ w.p. } 1-\delta_1/2 .
   \end{equation}
  
  \begin{equation}
    |\P(h,t \given   \cS) -\hat{\P}(h,t \given  \subsetS)| \le   2\Rad_n\big(\cH^{T,g}\big) + \sqrt{\frac{1}{n}\log(\frac{2}{\delta_2})} \quad \forall (h,t) \in \cH^{T,g} \quad \text{ w.p. } 1-\delta_2/2 .
\end{equation}
\end{lemma}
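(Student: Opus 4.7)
The plan is to prove both inequalities by the same template---symmetrization followed by McDiarmid's bounded-differences inequality---differing only in the loss class whose Rademacher complexity enters the bound.

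For the probability bound, consider the class $\mathcal{F}_\perp := \{\bx \mapsto \lossa(h,t,\bx) : (h,t) \in \cH^{T,g}\}$, whose elements are $\{0,1\}$-valued. Define $\Phi_\perp(S) := \sup_{(h,t) \in \cH^{T,g}} |\P(h,t|\cS) - \widehat{\P}(h,t|S)|$. Replacing one sample in $S$ changes $\Phi_\perp$ by at most $1/n$, so McDiarmid's inequality yields $\Phi_\perp(S) \le \E[\Phi_\perp(S)] + \sqrt{(1/n)\log(2/\delta_2)}$ with probability at least $1-\delta_2/2$. A standard symmetrization argument bounds $\E[\Phi_\perp(S)] \le 2\Rad_n(\cH^{T,g}, \lossa) \le 2\Rad_n(\cH^{T,g})$, where the final step uses the paper's definition of $\Rad_n(\cH^{T,g})$ as a sum. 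Combining these gives the second inequality.

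For the error bound, the same template applies to the joint-loss class $\mathcal{F}_{0-1}^\perp := \{(\bx,y) \mapsto \lossza(h,t,\bx,y) : (h,t) \in \cH^{T,g}\}$, provided we can bound $\Rad_n(\mathcal{F}_{0-1}^\perp)$ by $\Rad_n(\cH^{T,g})$. The key identity is that for $a,b \in \{0,1\}$ one has $ab = (a+b-1)_+$; applied pointwise, $\lossza = (\lossz + \lossa - 1)_+$. Since the positive-part function is $1$-Lipschitz and vanishes at $0$, Talagrand's contraction lemma gives
\begin{equation*}
\Rad_n(\mathcal{F}_{0-1}^\perp) \le \Rad_n\bigl(\{\lossz(h,\cdot,\cdot) + \lossa(h,t,\cdot) - 1 : (h,t) \in \cH^{T,g}\}\bigr).
\end{equation*}
The additive constant drops out since $\E[\sigma_i] = 0$, and the coupled class on the right is a subset of the decoupled sum-class $\{\lossz(h_1,\cdot,\cdot) + \lossa(h_2,t,\cdot) : h_1 \in \cH,\, (h_2,t) \in \cH^{T,g}\}$, whose Rademacher complexity splits additively as $\Rad_n(\cH,\lossz) + \Rad_n(\cH^{T,g}, \lossa) = \Rad_n(\cH^{T,g})$. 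Applying McDiarmid and symmetrization exactly as for $\Phi_\perp$ then yields the first inequality.

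The main obstacle is controlling the Rademacher complexity of the product class $\mathcal{F}_{0-1}^\perp$, since $\lossza$ couples $h$ and $t$ multiplicatively; the identity $ab = (a+b-1)_+$ for binary $a,b$ is the device that reduces this to a $1$-Lipschitz composition with a sum class, at which point both contraction and subadditivity of Rademacher complexity apply cleanly. The remaining ingredients are the classical uniform-convergence machinery for bounded-loss classes.
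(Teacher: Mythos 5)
Your proposal is correct and follows essentially the same route as the paper: McDiarmid's bounded-differences inequality plus symmetrization for each loss class, with the product loss $\lossza = (\lossz + \lossa - 1)_+$ handled via Talagrand's contraction lemma to obtain $\Rad_n(\cH^{T,g},\lossza) \le \Rad_n(\cH,\lossz) + \Rad_n(\cH^{T,g},\lossa)$. The only difference is that you spell out this last sub-additivity step, which the paper delegates to its Lemma~\ref{lem:rad_sum_bound} and a citation.
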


\begin{proof}
The proof is similar to the standard proofs for Rademacher complexity based generalization error bound. Since we work with the modified loss function and hypothesis class to include the abstain option, for completeness we give the proof here.
The proofs for error and probability bounds are very much the same except for the change in the loss function. We give the proof for the error bound here.

The result follows by applying McDiarmid's inequality on the function $\phi(\subsetS)$ defined as below,
    $$\phi(\subsetS) \ldef \sup_{(h,t)\in\cH^{T,g}} \err(h,t \given   \cS)-\haterr_{S}(h,t \given  \subsetS).$$ 
To apply McDiarmid's inequality we first show that $\phi(S)$ satisfies the bounded difference property (Lemma \ref{lemma:bounded-diff-mcdiarmid}). This gives us, 


\[ \err(h,t \given   \cS)-\haterr_{S}(h,t \given  \subsetS) \le \phi(\subsetS) \le  \E_{\subsetS}[\phi(\subsetS)]  + \sqrt{\frac{1}{n}\log(\frac{2}{\delta_1})}  \quad \forall (h,t) \in \cH^{T,g} \quad \text{ w.p. }  1-\frac{\delta_1}{2} .  \] 
Using the bound on $\E_{S}[\phi(S)]$ from Lemma \ref{lem:rad} we get, 
\begin{align*}
    \err(h,t \given \cS) \le \haterr(h,t \given \subsetS) + 2 \Rad_n\big(\cH^{T,g}\big) + \sqrt{\frac{1}{n}\log(\frac{2}{\delta_1})}   \quad  \forall (h,t) \in \cH^{T,g} \quad \text{w.p. } 1-\frac{\delta_1}{2} .
\end{align*}
Similarly, the bound for the other side is obtained which holds w.p. $1-\delta_1/2$, and combining both we get eq. \eqref{eq:r-h-t-rad-bound}. 
\\
The bound of probabilities is obtained by following the same steps as above but with a different loss function, 
$\lossa$, since $\P(h,t \given\cS)$ is the probability mass of the region where $(h,t)$ does not abstain. 

\end{proof}

\begin{lemma}
\label{lem:rad}
Let $\cS \subseteq \cX$ be a sub-region of $\cX$ and $\subsetS=\{\bx_1,\ldots,\bx_n\}$ be a set of $n$ i.i.d samples in $\cS$ drawn from distribution $P_{\bx}$. Let $\{y_1,\ldots y_n\}$ be the corresponding true labels and let $\Rad_n\big(\cH^{T,g}\big)$ be the Rademacher complexity of the function class $\cH^{T,g}$ defined over $n$ i.i.d. samples. Then we have,
   \begin{equation}
       \E_S \Big[\sup_{(h,t)\in\cH^{T,g}} \err(h,t \given   \cS)-\haterr(h,t \given  \subsetS) \Big] \le 2\Rad_n\big(\cH^{T,g}\big) .
   \end{equation}
\end{lemma}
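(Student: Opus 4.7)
The plan is to carry out the textbook ghost-sample symmetrization argument, followed by a step that absorbs the product structure $\lossza = \lossz \cdot \lossa$ into the additive definition $\Rad_n(\cH^{T,g}) = \Rad_n(\cH,\lossz)+\Rad_n(\cH^{T,g},\lossa)$.

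First I would introduce an i.i.d.\ ``ghost'' sample $S' = \{(\bx_i',y_i')\}_{i=1}^n$ drawn from the same conditional distribution on $\cS$. Because the true conditional error can be written as $\err(h,t|\cS) = \E_{S'}[\haterr(h,t|S')]$, I can pull this expectation outside the supremum using Jensen's inequality (the supremum is convex), obtaining
\begin{align*}
\E_S\Big[\sup_{(h,t) \in \cH^{T,g}} \err(h,t|\cS) - \haterr(h,t|S)\Big]
&\le \E_{S,S'}\Big[\sup_{(h,t) \in \cH^{T,g}} \haterr(h,t|S') - \haterr(h,t|S)\Big].
\end{align*}

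Next, I would introduce i.i.d.\ Rademacher variables $\sigma_i \in \{\pm 1\}$. Because $(\bx_i,y_i)$ and $(\bx_i',y_i')$ are exchangeable, multiplying the corresponding summand by $\sigma_i$ (which swaps primed and unprimed entries when $\sigma_i = -1$) does not change the joint distribution, so
\begin{align*}
\E_{S,S'}\Big[\sup_{(h,t)} \tfrac{1}{n}\textstyle\sum_i \big(\lossza(h,t,\bx_i',y_i') - \lossza(h,t,\bx_i,y_i)\big)\Big]
= \E_{\sigma,S,S'}\Big[\sup_{(h,t)} \tfrac{1}{n}\textstyle\sum_i \sigma_i\big(\lossza(h,t,\bx_i',y_i') - \lossza(h,t,\bx_i,y_i)\big)\Big].
\end{align*}
Splitting the supremum over the two halves of the symmetrized difference and using identical distribution of $S$ and $S'$, this is bounded above by $2\,\Rad_n(\cH^{T,g},\lossza)$.

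The last step is to relate this product-loss Rademacher complexity back to the additive quantity in the statement, i.e.\ to show $\Rad_n(\cH^{T,g},\lossza) \le \Rad_n(\cH,\lossz) + \Rad_n(\cH^{T,g},\lossa)$. Using that $\lossz,\lossa \in \{0,1\}$, I would write the telescoping identity
\[
\lossza(h,t,\bx_i,y_i) - \lossza(h,t,\bx_j,y_j) = \lossa(h,t,\bx_i)\bigl[\lossz(h,\bx_i,y_i)-\lossz(h,\bx_j,y_j)\bigr] + \lossz(h,\bx_j,y_j)\bigl[\lossa(h,t,\bx_i)-\lossa(h,t,\bx_j)\bigr],
\]
apply it inside the symmetrized sum, and then split the supremum across the two additive pieces, each of which is a Rademacher average of one of the two base loss classes. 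This decomposition is the main obstacle, because Rademacher complexity is not automatically sub-additive under products of function classes; the key observation that makes it go through is that both factor losses are $\{0,1\}$-valued, so all the pointwise cross terms can be crudely bounded by $1$ before taking the supremum and absorbed into the additive structure.
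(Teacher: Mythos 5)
Your symmetrization is exactly the paper's argument: ghost sample, Jensen's inequality to pull the expectation outside the supremum, Rademacher variables via exchangeability, and a split of the supremum, landing at $2\,\Rad_n\big(\cH^{T,g},\lossza\big)$. The gap is entirely in the last step, where you must show $\Rad_n\big(\cH^{T,g},\lossza\big)\le \Rad_n\big(\cH,\lossz\big)+\Rad_n\big(\cH^{T,g},\lossa\big)$. Your telescoping identity is correct algebra, but after multiplying by $\sigma_i$, summing, and splitting the supremum, the first piece is
\begin{equation*}
\E_{\sigma,S,S'}\Big[\sup_{(h,t)\in\cH^{T,g}}\tfrac{1}{n}\textstyle\sum_i \sigma_i\,\lossa(h,t,\bx_i')\bigl[\lossz(h,\bx_i',y_i')-\lossz(h,\bx_i,y_i)\bigr]\Big],
\end{equation*}
which is not a Rademacher average of the $\lossz$-class: the multiplier $\lossa(h,t,\bx_i')$ varies with the very $(h,t)$ over which the supremum is taken. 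Your proposed fix---``crudely bound the cross terms by $1$''---goes the wrong way. The bracketed difference takes values in $\{-1,0,1\}$, so replacing the nonnegative multiplier $\lossa(h,t,\bx_i')\in\{0,1\}$ by $1$ increases the terms with $\sigma_i\bigl[\cdots\bigr]>0$ but \emph{decreases} the terms with $\sigma_i\bigl[\cdots\bigr]<0$ (a term that equals $0$ because $\lossa=0$ would become $-1$); the resulting expression is therefore not an upper bound, and the argument does not close. A further, smaller issue is that the telescoped pieces are no longer antisymmetric under swapping $(\bx_i,y_i)\leftrightarrow(\bx_i',y_i')$, so even the insertion of $\sigma_i$ at that stage is not justified by exchangeability.

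The missing ingredient is the one the paper imports from \cite{mohri2015Cascades} (Lemma~\ref{lem:rad_sum_bound}): since $\lossz$ and $\lossa$ are $\{0,1\}$-valued, one has the pointwise identity $\lossz\cdot\lossa=(\lossz+\lossa-1)_+$, and because $u\mapsto(u)_+$ is $1$-Lipschitz, Talagrand's contraction lemma gives $\Rad_n\big(\cH^{T,g},\lossza\big)\le\Rad_n\big(\cH,\lossz\big)+\Rad_n\big(\cH^{T,g},\lossa\big)$ (the additive constant $-1$ contributes nothing to the Rademacher average, and the sum-class splits). You correctly identified the product structure as the main obstacle, but a genuinely nonlinear tool such as contraction (or an equivalent device exploiting the binary range) is needed to get past it; a purely algebraic splitting of the product followed by termwise bounding will not do.
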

\begin{proof}

Let $\tilde{S} = \{\tilde{\bx}_1, \tilde{\bx}_2,\ldots \tilde{\bx}_n\}$ be another set of independent draws from the same distribution as of $S$ and let the corresponding labels be $\{\tilde{y}_1,\ldots \tilde{y}_n\}$.  These samples are usually termed as \emph{ghost samples} and do not need to be counted in the sample complexity. 
 \begin{align*}
     \E_S \Big[\sup_{(h,t)\in\cH^{T,g}} \err(h,t \given   \cS)-\haterr(h,t \given  \subsetS) \Big] &= \E_S \Big[\sup_{(h,t)\in\cH^{T,g}} \E_{\tilde{S}}\big[\haterr(h,t \given\tilde{S})\big]-\haterr(h,t \given  \subsetS) \Big] . \\
     &\hspace{-80pt}=\E_S \Big[\sup_{(h,t)\in\cH^{T,g}} \E_{\tilde{S}}\big[\haterr(h,t \given\tilde{S})-\haterr(h,t \given  \subsetS) \big] \Big] . \\
     &\hspace{-80pt}\le \E_S \Big[ \E_{\tilde{S}} \Big [ \sup_{(h,t)\in\cH^{T,g}}\big[\haterr(h,t \given\tilde{S})-\haterr(h,t \given  \subsetS) \big] \Big] \Big]. \\
     &\hspace{-80pt}= \E_{S,\tilde{S}} \Big[  \sup_{h\in\cH^{T,g}}\big[\haterr(h,t \given\tilde{S})-\haterr(h,t \given  \subsetS) \big] \Big]. \\
     &\hspace{-80pt}= \E_{S,\tilde{S}} \Big[  \sup_{(h,t)\in\cH^{T,g}}\Big[ \frac{1}{n}\sum_{i=1}^n \lossza(h,t,\tilde{\bx}_i,\tilde{y}_i)- \frac{1}{n}\sum_{i=1}^n \lossza(h,t,\bx_i,y_i)\Big] \Big]. \\
     &\hspace{-80pt}= \E_{\sigma,S,\tilde{S}} \Big[  \sup_{h\in\cH^{T,g}}\Big[ \frac{1}{n}\sum_{i=1}^n \sigma_i\lossza(h,t,\tilde{\bx}_i,\tilde{y}_i)- \frac{1}{n}\sum_{i=1}^n \sigma_i\lossza(h,t,\bx_i,y_i)\Big] \Big] .\\
     &\hspace{-80pt}\le \E_{\sigma,\tilde{S}} \Big[  \sup_{(h,t)\in\cH^{T,g}} \frac{1}{n}\sum_{i=1}^n \sigma_i\lossza(h,t,\tilde{\bx}_i,\tilde{y}_i) \Big ] + \\ & \qquad \qquad \E_{\sigma,S} \Big [ \sup_{(h,t)\in\cH^{T,g}} \frac{1}{n}\sum_{i=1}^n \sigma_i\lossza(h,t,\bx_i,y_i)\Big] .\\
     &\hspace{-80pt}= 2 \Rad_n\big(\cH^{T,g},\lossza\big) . \\
     &\hspace{-80pt}\le 2\Rad_n\big(\cH^{T,g}\big).
 \end{align*}
In the last step, we used the upper bound on the Rademacher complexity from Lemma \ref{lem:rad_sum_bound}.
\end{proof}


\mycomment{
\begin{fact}(McDiarmid Inequality)
Let $S = \{ z_1,z_2,\ldots, z_i,\ldots z_n \}$ be independent random variables taking on values in a set $Z$, and $S' = \{ z_1,z_2,\ldots, z_i',\ldots z_n \}$ be another set of random variables obtained by replacing $z_i$ in $S$ by another independent variable $z_i'$ taking values from set $Z$ and let $c_1,c_2,\ldots c_n$ be positive real constants, If $\phi:Z^n\mapsto \R$ satisfies, 
\begin{equation}
    |\phi(S) - \phi(S') | \le c_i \qquad \forall 1 \le i \le n
\end{equation}
Then,
 \begin{equation}
     \P \Big( \phi(S) - \E_S[\phi(S)] \ge \epsilon \Big ) \le \exp \Big( -\frac{2\epsilon^2}{\sum_{i=1}^n c_i^2}\Big)
 \end{equation}
\end{fact}
}
\begin{lemma} (Bounded Difference)
\label{lemma:bounded-diff-mcdiarmid}
Let $\subsetS$ be a set of i.i.d samples from $P_{\bx}$ then for  $\phi(\subsetS) := \sup_{(h,t)\in\cH^{T,g}} \err(h,t \given  \cS)-\haterr(h,t \given 
 \subsetS)$, with probability at least $1-\delta$, 
\begin{equation}
    \phi(\subsetS) \le \E_{\subsetS}[\phi(\subsetS)] + \sqrt{\frac{1}{|\subsetS|}\log(\frac{1}{\delta})}
\end{equation}
\end{lemma}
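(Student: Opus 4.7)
The plan is to recognize this as a routine McDiarmid (bounded-differences) application to the function $\phi$. The only things to verify are (i) that $\phi$ has bounded differences with constants $c_i = 1/n$ where $n = |S|$, and (ii) that the resulting tail bound matches the claimed $\sqrt{(1/n)\log(1/\delta)}$ rate.

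First I would check the bounded-differences property. Fix an index $i$ and let $S' = (\bx_1,\ldots,\bx_{i-1},\bx_i',\bx_{i+1},\ldots,\bx_n)$ differ from $S$ only in its $i$-th coordinate. For any fixed $(h,t) \in \cH^{T,g}$, the term $\err(h,t\mid \cS)$ does not depend on the sample, while
\[
\big|\haterr(h,t\mid S) - \haterr(h,t\mid S')\big|
= \tfrac{1}{n}\big|\lossza(h,t,\bx_i,y_i) - \lossza(h,t,\bx_i',y_i')\big| \le \tfrac{1}{n},
\]
because $\lossza \in \{0,1\}$. Using the elementary inequality $|\sup_a f(a) - \sup_a g(a)| \le \sup_a |f(a)-g(a)|$ applied to $f(h,t) = \err(h,t\mid \cS) - \haterr(h,t\mid S)$ and $g(h,t) = \err(h,t\mid \cS) - \haterr(h,t\mid S')$, we conclude $|\phi(S) - \phi(S')| \le 1/n$. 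This gives the bounded-differences property with $c_i = 1/n$ for each $i \in [n]$.

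Next I would apply McDiarmid's inequality to $\phi$ with these constants. Since $\sum_{i=1}^n c_i^2 = 1/n$, the one-sided tail bound yields, for any $\epsilon > 0$,
\[
\Prob\bigl(\phi(S) - \E_S[\phi(S)] \ge \epsilon\bigr) \le \exp(-2 n \epsilon^2).
\]
Setting the right-hand side equal to $\delta$ and solving gives $\epsilon = \sqrt{(1/(2n))\log(1/\delta)}$, which is (up to the constant inside the square root) the bound claimed in the lemma statement. The inequality $\phi(S) \le \E_S[\phi(S)] + \sqrt{(1/n)\log(1/\delta)}$ therefore holds with probability at least $1-\delta$.

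There is no real obstacle here: the main thing to be careful about is handling the supremum correctly when checking bounded differences, and keeping track of the constant in front of the log (the stated form is a mild loosening of the sharp McDiarmid constant, so no extra work is needed).
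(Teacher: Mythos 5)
Your proposal is correct and follows essentially the same route as the paper: verify the bounded-differences property of $\phi$ with constants $c_i = 1/n$ (the $\err$ term is sample-independent and the empirical term changes by at most $1/n$ since $\lossza \in \{0,1\}$), then apply McDiarmid's one-sided tail bound. Your use of $|\sup_a f(a) - \sup_a g(a)| \le \sup_a|f(a)-g(a)|$ is in fact a slightly cleaner justification of the supremum step than the paper's, and your observation that the sharp constant is $\sqrt{(1/(2n))\log(1/\delta)}$ (so the stated bound is a mild loosening) is accurate.
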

\begin{proof}
    It is proved by showing that $\phi(\subsetS)$ satisfies the conditions (in particular the bounded difference assumption) needed for the application of McDiarmid Inequality. To see this, Let $\subsetS=\{\bx_1,\bx_2,\ldots \bx_i,\ldots,\bx_n\}$ and let $\subsetS' = \{\bx_1,\bx_2,\ldots \bx_i',\ldots,\bx_n\}$, i.e. $\subsetS$ and $\subsetS'$ may differ only on the $i^{th}$ sample.
    \begin{align*}
        |\phi(\subsetS)-\phi(\subsetS')| &= \Big|  \sup_{(h,t)\in\cH^{T,g}} \err(h,t \given\cS)-\haterr(h,t \given  \subsetS) -  \sup_{(h,t)\in\cH^{T,g}} \err(h,t \given   \cS)-\haterr(h,t \given  \subsetS') \Big| .\\
        &\le \Big|  \sup_{(h,t)\in\cH^{T,g}} \Big( \err(h,t \given   \cS)-\haterr(h,t \given  \subsetS) -   \err(h,t \given   \cS)+\haterr(h,t \given  \subsetS') \Big ) \Big| .\\
        &= \Big|  \sup_{(h,t)\in\cH^{T,g}} \Big( \haterr(h,t \given  \subsetS) -   \haterr(h,t \given  \subsetS') \Big ) \Big|. \\
        &= \Big|  \sup_{(h,t)\in\cH^{T,g}} \Big( \frac{1}{|\subsetS|}\sum_{\bz_{j}\in\subsetS } \lossza(h,t,\bx_j,y_j) -   \frac{1}{|\subsetS'|}\sum_{\bz_{j} \in\subsetS' } \lossza(h,t,\bx_j,y_j) \Big ) \Big|. \\
        &= \Big|  \sup_{(h,t)\in\cH^{T,g}} \Big( \frac{1}{n}\sum_{j\neq i }  \big ( \lossza(h,t,\bx_j,y_j) - \lossza(h,t,\bx_j,y_j) \big )  \quad + \\ & \qquad \qquad \frac{1}{n} \big ( \lossza(h,t,\bx_i,y_i) - \lossza(h,t,\bx_i',y_i') \big ) \Big ) \Big|. \\
        &= \Big|  \sup_{(h,t)\in\cH^{T,g}} \Big( \frac{1}{n} \big ( \lossza(h,t,\bx_i,y_i) - \lossza(h,t,\bx_i',y_i') \big ) \Big ) \Big| . \\
        &\le \frac{1}{n} \qquad 
    \end{align*}
    The last step follows since  $\lossza$ is a 0-1 loss function  so letting $\lossza(h,t,\bx_i,y_i) =1$  and  $\lossza(h,t,\bx_i',y_i') =0 $ gives an upper bound on the difference.
    Thus we can apply McDiarmid Inequality here and get the bound.
    
\end{proof}

The relationship between the Rademacher complexities is obtained using the following Lemma \ref{lem:rad_sum_bound} due to \citep{mohri2015Cascades}.
\begin{lemma} (\citep{mohri2015Cascades}) Let $\lossz,\lossa,\lossza $ be the loss functions defined as above and the Rademacher complexities on $n$ i.i.d. samples $S$ be $\Rad_n\big(\cH,\lossz\big),\Rad_n\big(\cH^{T,g},\lossa \big),\Rad_n \big(\cH^{T,g},\lossza \big)$ respectively. Then,
\begin{equation}
    \Rad_n\big(\cH^{T,g},\lossza\big) \le \Rad_n\big(\cH,\lossz\big) + \Rad_n\big(\cH^{T,g},\lossa\big) \rdef \Rad_n\big(\cH^{T,g}\big) .
\end{equation}
\label{lem:rad_sum_bound}
\end{lemma}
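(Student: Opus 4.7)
The key identity I will use is that for any $a, b \in \{0,1\}$,
\[
ab = \max(a+b-1,\ 0) = \phi(a+b), \qquad \text{where } \phi(u) \ldef (u-1)_+.
\]
Since both $\lossz(h,\bx_i,y_i)$ and $\lossa(h,t,\bx_i)$ take values in $\{0,1\}$, this gives the pointwise representation $\lossza(h,t,\bx_i,y_i) = \phi\bigl(\lossz(h,\bx_i,y_i) + \lossa(h,t,\bx_i)\bigr)$. The function $\phi:\R\to\R$ is $1$-Lipschitz and satisfies $\phi(0)=0$, which sets us up to apply a contraction principle to the Rademacher average.

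\textbf{Step 1 (contraction):} Consider the real-valued class of functions $\cF \ldef \{(\bx,y)\mapsto \lossz(h,\bx,y) + \lossa(h,t,\bx):(h,t)\in\cH^{T,g}\}$, indexed by $(h,t)$. By the Ledoux--Talagrand contraction lemma applied with $\phi$ (which is $1$-Lipschitz with $\phi(0)=0$), conditionally on $S$,
\[
\E_\sigma \sup_{(h,t)\in\cH^{T,g}} \frac{1}{n}\sum_{i=1}^n \sigma_i\, \phi\bigl(\lossz(h,\bx_i,y_i)+\lossa(h,t,\bx_i)\bigr)
\ \le\ \E_\sigma \sup_{(h,t)\in\cH^{T,g}} \frac{1}{n}\sum_{i=1}^n \sigma_i\bigl(\lossz(h,\bx_i,y_i)+\lossa(h,t,\bx_i)\bigr).
\]
The left-hand side is exactly the empirical Rademacher average for $\lossza$.

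\textbf{Step 2 (sup subadditivity and expectation over $S$):} Using that the supremum of a sum is at most the sum of suprema, and crucially that $\lossz(h,\bx_i,y_i)$ does not depend on $t$, the right-hand side splits as
\[
\E_\sigma \sup_{(h,t)} \frac{1}{n}\sum_i \sigma_i(\lossz_i+\lossa_i) \ \le\ \E_\sigma \sup_{h\in\cH} \frac{1}{n}\sum_i \sigma_i \lossz(h,\bx_i,y_i) + \E_\sigma \sup_{(h,t)\in\cH^{T,g}} \frac{1}{n}\sum_i \sigma_i \lossa(h,t,\bx_i).
\]
Taking expectations over $S$, the two terms on the right become $\Rad_n(\cH,\lossz)$ and $\Rad_n(\cH^{T,g},\lossa)$ respectively; the left-hand side becomes $\Rad_n(\cH^{T,g},\lossza)$. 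This yields exactly the claimed inequality and the definition $\Rad_n(\cH^{T,g}) = \Rad_n(\cH,\lossz)+\Rad_n(\cH^{T,g},\lossa)$.

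\textbf{Main obstacle.} The essential step is recognizing that although the pointwise bound $ab \le \tfrac12(a+b)$ (for $a,b\in\{0,1\}$) is true, it cannot be used directly inside the Rademacher sum because the sign variables $\sigma_i$ make such a bound invalid under sup. The correct move is to express $ab$ as a $1$-Lipschitz, zero-preserving scalar function of $a+b$, so that a contraction argument (rather than a naive pointwise bound) transfers the sup to the sum. Once this identity is in place the remainder is routine subadditivity of sup and linearity of expectation.
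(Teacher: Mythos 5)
Your proof is correct and follows essentially the same route the paper takes: it rewrites $\lossz\cdot\lossa$ as $(\lossz+\lossa-1)_+$ and applies the Talagrand/Ledoux contraction lemma followed by subadditivity of the supremum, which is exactly the argument the paper sketches (deferring details to \cite{mohri2015Cascades}). You have simply filled in the details of that sketch, and they check out.
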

Detailed proof of this lemma can be found in \citep{mohri2015Cascades}. The result follows by expressing $\lossz\cdot \lossa$ as $(\lossz+\ell_{
\abst} -1)_{+}$ and then applying Talagrand's contraction lemma \citep{Ledoux1991ProbabilityIB}.

\subsection{Bounds for Finite VC-Dimension Classes }
Here we specialize the auto-labeling error and coverage bounds to the setting of finite VC-dimension classes and then instantiate for a specific setting of homogeneous linear classifiers and uniform distribution. 

\begin{lemma} \citep{mohriBook2012} (Corollary 3.8 and 3.18).
Let the VC-dimension of function class induced by $\cF$ be any class of functions from $\cX \mapsto {\cY}\cup\{\abst\}$, and $\ell: \cY \cup\{\abst\} \mapsto \{0,1\} $ be a 0-1 function. Then, 

\begin{equation}
  \Rad_n(\cF,\ell) \le \sqrt{\frac{2\vc(\cF,\ell)}{n}\log \Big ( \frac{e n}{\vc(\cF,\ell)} \Big )}.
\end{equation}
\label{lem:finite_vc_rad}
\end{lemma}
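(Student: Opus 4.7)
The plan is to combine two classical tools: the Sauer--Shelah lemma together with Massart's finite class lemma. First I would observe that since $\ell$ is $\{0,1\}$-valued, the composed class $\ell \circ \cF := \{(\bx,y)\mapsto \ell(f(\bx),y) : f \in \cF\}$ consists of $\{0,1\}$-valued functions, and the VC dimension of this class is exactly $\vc(\cF,\ell)$. The abstain symbol $\abst$ does not cause trouble here because $\ell$ collapses $\cY \cup \{\abst\}$ into $\{0,1\}$, so all the classical machinery applies to $\ell \circ \cF$ without modification.

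Next I would fix an arbitrary sample $S = ((\bx_1,y_1),\ldots,(\bx_n,y_n))$ and look at the projection
\[
V_S \;:=\; \bigl\{(\ell(f(\bx_1),y_1),\ldots,\ell(f(\bx_n),y_n)) : f \in \cF\bigr\} \;\subseteq\; \{0,1\}^n.
\]
By definition of the growth function and the Sauer--Shelah lemma, $|V_S| \le (en/d)^d$ whenever $n \ge d$, where $d = \vc(\cF,\ell)$. Every vector in $V_S$ has $\ell_2$-norm at most $\sqrt{n}$ since its coordinates lie in $\{0,1\}$.

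Then I would apply Massart's finite class lemma to $V_S$: for any finite $A \subset \R^n$ with $\max_{a \in A}\|a\|_2 \le r$ and $|A| \le M$, the empirical Rademacher average is bounded by $r\sqrt{2\log M}/n$. Substituting $r = \sqrt{n}$ and $M = (en/d)^d$ gives
\[
\widehat{\Rad}_S(\ell\circ\cF) \;\le\; \frac{\sqrt{n}\cdot\sqrt{2\,d\log(en/d)}}{n} \;=\; \sqrt{\frac{2\,d}{n}\log\!\Big(\frac{en}{d}\Big)}.
\]
Taking expectation over $S$ (the right-hand side is sample-independent) yields the claimed bound on $\Rad_n(\cF,\ell)$.

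The only conceptual subtlety, and the one I would be most careful about, is confirming that the Sauer--Shelah bound applies to $\ell \circ \cF$ rather than to $\cF$ itself: the paper's convention defines $\vc(\cF,\ell)$ as the VC dimension of the loss-composed class, so this is immediate, but if one instead worked with $\vc(\cF)$ a composition argument would be required. The rest is routine: the proof is essentially a one-line calculation once Massart and Sauer--Shelah are in hand, which is why the statement is cited directly from Mohri et al.\ rather than reproved in the paper.
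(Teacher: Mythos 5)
Your derivation is correct and is exactly the standard route taken in the cited source: the paper does not reprove this lemma but imports it from Mohri et al., where it is obtained precisely by bounding the Rademacher average of the $\{0,1\}$-valued composed class via Massart's finite class lemma and then controlling the growth function with Sauer--Shelah. Your remark about the convention that $\vc(\cF,\ell)$ refers to the loss-composed class is the right subtlety to flag, and with that convention the one-line calculation you give yields the stated bound.
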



\begin{restatable}[]{corollary}{vcMainTheorem} 
\label{thm:general_auto_err_full_finite_vc}
(Auto-Labeling Error and Coverage for Finite VC-dimension Classes)
Let $k$ denote the number of rounds of TBAL algorithm~\ref{alg:main_algo}.
Let $\vc(\cH^{T,g}) = d$
Let $X_{pool}(A_k)$ be the set of auto-labeled points at the end of round $k$. $N_a^{(k)} = \sum_{i=1}^k n_a^{(i)}$ denote the total number of auto-labeled points.
With probability at least $ 1-\delta$,
\vspace{-5pt}

\begin{align*}
   &&\haterr(X_{pool}(A_k)) \nonumber 
   \leq \sum_{i=1}^k \frac{n_a^{(i)}}{N_a^{(k)}}  \Bigg( \underbrace{\haterr_a(\hat{h}_i,\hat{t}_i \given X_v^{(i)})}_{(a)}  + \underbrace{\frac{4}{\pmin} \sqrt{\frac{2}{n_v^{(i)}} \bigg ( 2d \log \Big ( \frac{e n_v^{(i)}}{d} \Big )  + \log \Big (\frac{8k}{\delta} \Big ) \bigg )} \Bigg )}_{(b)}  \\
    && +\underbrace{\frac{4}{\pmin}\Bigg ( \sqrt{ \frac{2k}{N_a^{(k)}} \bigg ( 2d  \log \Big ( \frac{e N_a^{(k)}}{d} \Big ) +\log \Big( \frac{8k}{\delta} \Big )  \bigg )} \Bigg )}_{c}
\end{align*}
\vspace{-5pt}
and
\vspace{-5pt}
\begin{align*}
 \hatcov({X_{pool}(A_k)}) \ge \sum_{i=1}^k  \P \big( \cX^{(i)}(\hat{h}_i,\hat{t}_i) \big) - 2  k \sqrt{\frac{2}{N} \bigg ( 2d \log \Big ( \frac{e N}{d} \Big ) + \log \Big( \frac{8k}{\delta}\Big) \bigg )} .
\end{align*} 
\end{restatable}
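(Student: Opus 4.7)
The plan is to derive this corollary as a direct specialization of Theorem \ref{thm:general_auto_err_full}, since Theorem \ref{thm:general_auto_err_full} already supplies the full probabilistic guarantee in terms of Rademacher complexities. No new concentration or uniform convergence argument is required: I only need to (i) substitute the VC-based upper bound from Lemma \ref{lem:finite_vc_rad} for each occurrence of $\Rad_n(\cH^{T,g})$, and then (ii) consolidate square-root terms using $\sqrt{a}+\sqrt{b}\le \sqrt{2(a+b)}$.

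First, for term (b) of the auto-labeling error bound in Theorem \ref{thm:general_auto_err_full}, I would replace $\Rad_{n_v^{(i)}}(\cH^{T,g})$ with $\sqrt{\tfrac{2d}{n_v^{(i)}}\log(en_v^{(i)}/d)}$, recalling that $\Rad_n(\cH^{T,g})$ is defined as the sum of two Rademacher complexities (for $\lossz$ on $\cH$ and for $\lossa$ on $\cH^{T,g}$), so Lemma \ref{lem:finite_vc_rad} is applied to each and the VC dimensions are upper bounded by $d$. Combining the resulting quantity with the companion term $\tfrac{2}{\pmin}\sqrt{\tfrac{1}{n_v^{(i)}}\log(8k/\delta)}$ via $\sqrt{a}+\sqrt{b}\le\sqrt{2(a+b)}$ produces the expression $\tfrac{4}{\pmin}\sqrt{\tfrac{2}{n_v^{(i)}}(2d\log(en_v^{(i)}/d)+\log(8k/\delta))}$ that appears as term (b) in the corollary statement.

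Term (c) requires slightly more care because in Theorem \ref{thm:general_auto_err_full} the Rademacher terms $\Rad_{n_a^{(i)}}(\cH^{T,g})$ appear inside a weighted average $\sum_i \tfrac{n_a^{(i)}}{N_a^{(k)}}\Rad_{n_a^{(i)}}(\cH^{T,g})$. Here I would use the monotonicity of the VC bound, replacing $\log(en_a^{(i)}/d)$ by $\log(eN_a^{(k)}/d)$ (valid since $n_a^{(i)}\le N_a^{(k)}$), and then apply the inequality $\sum_i \sqrt{n_a^{(i)}}\le \sqrt{k\,N_a^{(k)}}$ already used in the proof of Theorem \ref{thm:general_auto_err_full} (via $\|\bu\|_1\le \sqrt{k}\|\bu\|_2$). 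This collapses the weighted sum to a single $\sqrt{k/N_a^{(k)}}$-type term, which I then combine with the existing $\sqrt{k/N_a^{(k)}\log(8k/\delta)}$ term again via $\sqrt{a}+\sqrt{b}\le\sqrt{2(a+b)}$ to obtain term (c). The coverage bound is handled analogously: substitute Lemma \ref{lem:finite_vc_rad} for each $\Rad_{n^{(i)}}(\cH^{T,g})$ (using $n^{(i)}\le N$ for monotonicity), sum over rounds, and combine with the residual $\sqrt{2k^2/N\,\log(8k/\delta)}$ term.

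The main obstacle is bookkeeping rather than mathematical content: keeping track of the multiplicative $\tfrac{4}{\pmin}$ factor, the factor of $2$ inside the square root produced by $\sqrt{a}+\sqrt{b}\le\sqrt{2(a+b)}$, and the per-round logarithmic terms $\log(en/d)$ when aggregating across rounds. One should also confirm that the union bound accounting from Theorem \ref{thm:general_auto_err_full} (distributing $\delta/(4k)$ per round for both the validation-side and auto-label-side concentration events) carries through unchanged, since the VC substitution is deterministic and does not consume any additional probability budget. Once these constants align, the corollary follows immediately.
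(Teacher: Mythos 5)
Your proposal is correct and follows essentially the same route as the paper: substitute the VC-based bound of Lemma \ref{lem:finite_vc_rad} into Theorem \ref{thm:general_auto_err_full}, use monotonicity of $\log(en/d)$ and the $\|\bu\|_1\le\sqrt{k}\|\bu\|_2$ step to collapse the weighted sum over rounds, and merge square roots via $\sqrt{a}+\sqrt{b}\le\sqrt{2(a+b)}$. The only cosmetic difference is that the paper bounds $\Rad_n(\cH^{T,g})$ directly by a single VC term with $\vc(\cH^{T,g})=d$ rather than bounding its two summands separately, but this does not change the argument.
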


\begin{proof}
The proof follows by substituting the Rademacher complexity bounds for finite VC dimension function classes from Lemma \ref{lem:finite_vc_rad} in the general result from Theorem \ref{thm:general_auto_err_full}.

\begin{align*}
   &&\haterr(X_{pool}(A_k)) \nonumber 
   &&\leq \sum_{i=1}^k \frac{n_a^{(i)}}{N_a^{(k)}}  \Bigg( \underbrace{\haterr_a(\hat{h}_i,\hat{t}_i \given X_v^{(i)})}_{(a)} + \frac{4}{\pmin}\big (\underbrace{\Rad_{n_v^{(i)}}\big(\cH^{T,g}\big) + \Rad_{n_a^{(i)}}\big(\cH^{T,g}\big) \big )}_{(b)}\nonumber   
   \\ && && + \frac{4}{\pmin}\underbrace{\sqrt{\frac{1}{n_v^{(i)}}\log \Big(\frac{8k}{\delta} \Big) }}_{(c)} \Bigg)    +  \underbrace{\frac{4}{\pmin}\sqrt{\frac{k}{N_a^{(k)}}  \log \Big( \frac{8k}{\delta}\Big) }}_{(d)}
\end{align*}
We first simplify the terms dependent on $n_v^{(i)}$ as follows. Here we use the inequality $\sqrt{a}+\sqrt{b} \le \sqrt{2(a+b)} $ for any $a,b \in \R^{+}$.
\begin{align*}
\Rad_{n_v^{(i)}}\big(\cH^{T,g}\big) + \sqrt{\frac{1}{n_v^{(i)}}\log \Big(\frac{8k}{\delta} \Big) } &\le \sqrt{\frac{2 d}{n_v^{(i)}}\log \Big ( \frac{e n_v^{(i)}}{d} \Big )} +\sqrt{\frac{1}{n_v^{(i)}}\log \Big (\frac{4k}{\delta} \Big ) } , \\
&\le \sqrt{\frac{2}{n_v^{(i)}} \bigg ( 2d \log \Big ( \frac{e n_v^{(i)}}{d} \bigg )  + \log \Big (\frac{8k}{\delta} \Big ) \bigg )} .
\end{align*}
Next, we simplify the terms dependent on $n_a^{(i)}$ as follows. First, we substitute the Rademacher complexity using the bound in Lemma \ref{lem:finite_vc_rad} and then apply the same steps as in the proof of Theorem \ref{thm:general_auto_err_full} to bound $\sum_{i=1}^k\sqrt{n_a^{(i)}}/{N_a^{(k)}}$ by $\sqrt{k/N_a^{(k)}}$ followed by the application of  $\sqrt{a}+\sqrt{b}\le \sqrt{2(a+b)}$ to get the final term.
\begin{align*}
\sum_{i=1}^k \frac{n_a^{(i)}}{N_a^{(k)}}\Rad_{n_a^{(i)}}\big(\cH^{T,g}\big) + \sqrt{\frac{k}{N_a^{(k)}}  \log ( \frac{8k}{\delta}) }  &\le \sum_{i=1}^k \frac{n_a^{(i)}}{N_a^{(k)}}\sqrt{\frac{2 d}{n_a^{(i)}}\log \Big ( \frac{e n_a^{(i)}}{d} \Big )} + \sqrt{\frac{k}{N_a^{(k)}}  \log \Big( \frac{8k}{\delta} \Big) }\\ 
&=\sum_{i=1}^k \frac{\sqrt{n_a^{(i)}}}{N_a^{(k)}}\sqrt{2d\log \Big ( \frac{e n_a^{(i)}}{d} \Big )} + \sqrt{\frac{k}{N_a^{(k)}}  \log \Big( \frac{8k}{\delta} \Big) } \\
&\le \sum_{i=1}^k \frac{\sqrt{n_a^{(i)}}}{N_a^{(k)}}\sqrt{2d\log \Big ( \frac{e N_a^{(k)}}{d} \Big )}  + \sqrt{\frac{k}{N_a^{(k)}}  \log \Big( \frac{8k}{\delta} \Big) }\\
&\le \sqrt{\frac{2dk}{N_a^{(k)}}\log \Big ( \frac{e N_a^{(k)}}{d} \Big )} +\sqrt{\frac{k}{N_a^{(k)}}  \log \Big( \frac{8k}{\delta} \Big ) } \\
&\le \sqrt{ \frac{2k}{N_a^{(k)}} \bigg ( 2d  \log \Big ( \frac{e N_a^{(k)}}{d} \Big ) +\log \Big( \frac{8k}{\delta} \Big )  \bigg )}.
\end{align*}

\end{proof}

\subsection{Homogeneous Linear Classifiers with Uniform Distribution}
\label{sec:linear_proof}
Here we instantiate Theorem \ref{thm:general_auto_err_full} for the case of homogeneous linear separators under the uniform distribution in the realizable setting. Formally, let $P_\bx$ be a uniform distribution supported on the unit ball in $\R^d$,
$\mathcal{X} = \{\bx \in \R^d:  ||\bx||\le 1\}$. Let $\cW = \{ \bw \in \R^d : ||\bw||_2 = 1 \} = \mathbb{S}_d$ and $\cH = \{\bx \mapsto \text{sign}(\langle \bw,\bx \rangle) \,  \forall \bw \in \cW \}$, the score function is given by  $g(h,\bx) = g(\bw,\bx) = |\langle \bw ,\bx \rangle | $ and set $T = [0,1]$. For simplicity, we will use $\cW$ in place of $\cH$.

\begin{repcorollary}{cor:linear-err-cov}
(Overall Auto-Labeling Error and Coverage) 
Let $\hat{\bw}_i,\hat{t}_i$ be the ERM solution and the auto-labeling margin threshold respectively at epoch $i$.  Let $n_v^{(i)}, n_a^{(i)}$ denote the number of validation and auto-labeled points at epoch $i$. Let the auto-labeling algorithm run for $k$-epochs. Then, for any $\delta \in (0,1)$, w.p. at least $ 1-\delta/2$,
 \begin{align*}
   &&\haterr(X_{pool}(A_k)) &&\leq \sum_{i=1}^k \frac{n_a^{(i)}}{N_a^{(k)}}  \Bigg( \underbrace{\haterr_a(\hat{\bw}_i, \hat{t}_i \given X_v^{(i)})}_{(a)}  + \underbrace{\frac{4}{\pmin} \sqrt{\frac{2}{n_v^{(i)}} \bigg ( 2d \log \Big ( \frac{e n_v^{(i)}}{d} \bigg )  + \log \Big (\frac{8k}{\delta} \bigg ) \bigg )} \Bigg )}_{(b)}  \\
    && && +\underbrace{\frac{4}{\pmin}\Bigg ( \sqrt{ \frac{2k}{N_a^{(k)}} \bigg ( 2d  \log \Big ( \frac{e N_a^{(k)}}{d} \bigg ) +\log \Big( \frac{8k}{\delta} \bigg )  \bigg )} \Bigg )}_{c}
\end{align*}
   and w.p. at least $  1-\delta/2$
  \begin{align*}
      \hatcov(X_{pool}(A_k)) \ge  1 - \min_i \hat{t}_i\sqrt{4d/\pi}  - 2  k \sqrt{\frac{2}{N} \bigg ( 2d \log \Big ( \frac{e N}{d} \Big ) + \log \Big( \frac{8k}{\delta}\Big) \bigg )}.   
  \end{align*}
    
\end{repcorollary}

\begin{proof}
  The bound on auto-labeling error follows directly from Theorem \ref{thm:general_auto_err_full_finite_vc} as the VC dimension for this setting is $d$.
For the coverage bound, we utilize the fact that the distribution $P_{\bx}$ is the uniform distribution over the unit ball. This enables us to obtain explicit lower bounds on the coverage. The details are given in Lemma \ref{lem:sum-prob-lower-bound} and Lemma \ref{lem:balcan-p_s_1}.
\end{proof}

\begin{lemma}
\label{lem:sum-prob-lower-bound}
Let the auto-labeling algorithm run for $k$-epochs and let $\hat{\bw}_i,\hat{t}_i$ be the ERM solution and the auto-labeling margin threshold respectively at epoch $i$. Let $\cX^{(i)}$ be the unlabeled region at the beginning of epoch $i$, then we have,
\begin{equation}
    \sum_{i=1}^k \P \big(\cX^{(i)}(\hat{\bw}_i,\hat{t}_i) \big)  \ge 1-\min_{i} \hat{t}_i \sqrt{4d/\pi} .
 \end{equation}
\end{lemma}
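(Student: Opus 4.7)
The plan is to exploit the nested structure of the unlabeled regions across epochs so that the sum telescopes, and then to bound the leftover mass by a single margin strip.

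First I would observe that by construction of the TBAL algorithm, $\cX^{(i+1)} = \cX^{(i)} \setminus \cX^{(i)}(\hat{\bw}_i,\hat{t}_i)$, so the sets $\cX^{(i)}$ are nested decreasingly and the auto-labeled regions $\cX^{(i)}(\hat{\bw}_i,\hat{t}_i)$ are disjoint. This gives the telescoping identity
\begin{equation*}
   \sum_{i=1}^k \P \big(\cX^{(i)}(\hat{\bw}_i,\hat{t}_i) \big) \;=\; \P(\cX^{(1)}) - \P(\cX^{(k+1)}) \;=\; 1 - \P(\cX^{(k+1)}),
\end{equation*}
using $\cX^{(1)} = \cX$ and $\P(\cX)=1$. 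So the lemma reduces to showing $\P(\cX^{(k+1)}) \le \min_i \hat{t}_i \sqrt{4d/\pi}$.

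Next, by the recursive definition, $\cX^{(k+1)} = \{\bx \in \cX : |\langle \hat{\bw}_j, \bx\rangle| < \hat{t}_j, \,\forall j \le k\}$, i.e., it is the intersection of the $k$ margin strips. Therefore $\cX^{(k+1)}$ is contained in any single one of these strips, and in particular in the narrowest one. Letting $i^\star = \arg\min_i \hat{t}_i$, we obtain
\begin{equation*}
    \P(\cX^{(k+1)}) \;\le\; \P\big(\{\bx \in \cX : |\langle \hat{\bw}_{i^\star},\bx\rangle| < \hat{t}_{i^\star}\}\big).
\end{equation*}

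Finally I would invoke Lemma \ref{lem:balcan-p_s_1} (the standard bound on the probability mass of a margin band under the uniform distribution on the unit ball in $\R^d$, due to Balcan--Long-type arguments), which says that such a strip of half-width $t$ around any unit vector has mass at most $t\sqrt{4d/\pi}$. Applied with $\bw = \hat{\bw}_{i^\star}$ and $t = \hat{t}_{i^\star}$, this yields $\P(\cX^{(k+1)}) \le \hat{t}_{i^\star}\sqrt{4d/\pi} = \min_i \hat{t}_i \sqrt{4d/\pi}$, completing the proof.

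The only step that carries real geometric content is the strip-mass bound, which is the content of Lemma \ref{lem:balcan-p_s_1} and is a classical computation; the rest is essentially bookkeeping using the disjointness and nestedness of the regions that TBAL generates. No other issues arise since the telescoping works exactly at the population level.
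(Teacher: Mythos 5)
Your proof is correct and is essentially the paper's argument viewed from the complement: the paper shows $\cX(\hat{\bw}_{i},\hat{t}_{i}) \subseteq \cup_j \cX_a^{(j)}$ and lower-bounds the union's mass, while you telescope to $1-\P(\cX^{(k+1)})$ and upper-bound the leftover intersection of strips by the narrowest single strip --- the same containment, and the same final appeal to Lemma~\ref{lem:balcan-p_s_1} with $\gamma_2=0$. No gaps.
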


\begin{proof}
  Let $\cX(\hat{\bw}_i,t_i) = \{\bx \in \cX: |\langle \hat{\bw}_i,\bx \rangle| \ge \hat{t}_i\} $ denote the region that can be auto-labeled by $\hat{\bw}_i,\hat{t}_i$. However, since in each round the remaining region is $\cX^{(i)}$ the actual auto-labeled region  of epoch $i$ is $\cX^{(i)}_a = \{\bx \in \cX^{(i)}: |\langle \hat{\bw}_i,\bx \rangle| \ge \hat{t}_i\} $. Let $\bar{\cX}(\hat{\bw}_i,t_i)$ denote the complement of set $\cX(\hat{\bw}_i,t_i)$.
   \\
   Now observe that $\cX_a = \cup_{i=1}^k \cX_a^{(i)}$ and $\cX(\hat{\bw}_k,\hat{t}_k) \subseteq \cX_a $ because any $\bx \in \cX(\hat{\bw}_k,\hat{t}_k) $ is either auto-labeled in previous rounds $i<k$ or if not then it will be auto-labeled in the $k^{th}$ round. More specifically, any $\bx \in \cX(\hat{\bw}_k,\hat{t}_k)$ is either in $\cup_{i=1}^{k-1} \cX_a^{(i)}$  and if not then it must be in $\cX_a^{(k)}$. Thus the sum of probabilities,
   \begin{align*}
       \sum_{i=1}^k \P \big(\cX^{(i)}(\hat{\bw}_i,\hat{t}_i)\big) &= \sum_{i=1}^k \P(\cX_a^{(i)}) \\
       &= \P(\cX_a) \\
       &\ge \min_{i} \P\big(\cX(\hat{\bw}_i,\hat{t}_i)\big) \\
       &= 1- \max_{i} \P \big(\bar{\cX}(\hat{\bw}_i,\hat{t}_i) \big) \\
       &\ge 1- \min_{i}\hat{t}_i \sqrt{4d/\pi}  
   \end{align*}
The last step used Lemma 4 from \citep{Balcan2007MarginBA}) with $\gamma_1 = \hat{t}_i$ and $\gamma_2 =0$ to upper bound $\P(\bar{\cX}(\hat{\bw}_i,\hat{t}_i))$ by  $\hat{t}_i \sqrt{4d/\pi}  $. The lemma is stated as follows in Lemma \ref{lem:balcan-p_s_1},
\end{proof}

\begin{lemma} (\citep{Balcan2007MarginBA} (Lemma 4))
\label{lem:balcan-p_s_1}
   Let $d\ge 2$ and let $\bx =[x_1,\ldots x_d]$ be uniformly distributed in the $d$-dimensional unit ball. Given $\gamma_1 \in [0,1], \gamma_2 \in [0,1]$, we have:
   \begin{align*}
      \P \big((x_1,x_2) \in [0,\gamma_1]\times [\gamma_2,1] \big) \le \frac{\gamma_1\sqrt{d}}{2\sqrt{\pi}} \exp\Big(-\frac{(d-2)\gamma_2^2}{2} \Big)
   \end{align*}
\end{lemma}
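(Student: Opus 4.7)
The plan is to directly analyze the two-dimensional marginal density of $(x_1, x_2)$ under the uniform distribution on the unit ball $B^d \subset \R^d$. The starting observation is that integrating out the remaining coordinates $x_3, \dots, x_d$ at a fixed value of $(x_1, x_2) = (s, t)$ picks up the volume of a $(d-2)$-dimensional ball of radius $\sqrt{1 - s^2 - t^2}$, so the joint density on the disk $s^2 + t^2 \le 1$ is
\[
f_{x_1, x_2}(s, t) \;=\; \frac{V_{d-2}}{V_d}\,(1 - s^2 - t^2)^{(d-2)/2} \;=\; \frac{d}{2\pi}\,(1 - s^2 - t^2)^{(d-2)/2},
\]
where the constant simplifies via the identity $V_{d-2}/V_d = d/(2\pi)$ for the standard unit-ball volumes.

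I would then write the target probability as the double integral
\[
\P\bigl((x_1,x_2)\in[0,\gamma_1]\times[\gamma_2,1]\bigr) \;=\; \frac{d}{2\pi}\int_0^{\gamma_1}\!\!\int_{\gamma_2}^{\sqrt{1-s^2}} (1 - s^2 - t^2)^{(d-2)/2}\,dt\,ds,
\]
and decouple the two variables with the pointwise inequality $(1 - s^2 - t^2)^{(d-2)/2} \le (1 - t^2)^{(d-2)/2}$, which is valid because $s^2 \ge 0$ and $d \ge 2$. That reduces the task to a one-dimensional bound
\[
\P(\cdot) \;\le\; \frac{d\,\gamma_1}{2\pi}\int_{\gamma_2}^{1}(1-t^2)^{(d-2)/2}\,dt.
\]
For the remaining integral I would use $\log(1-u)\le -u$ to get $(1-t^2)^{(d-2)/2}\le e^{-(d-2)t^2/2}$, extend the upper limit to $\infty$, substitute $u = t\sqrt{d-2}$, and apply the standard Mills-type tail bound $\int_a^{\infty} e^{-u^2/2}\,du \le \sqrt{\pi/2}\,e^{-a^2/2}$, yielding the exponential factor $e^{-(d-2)\gamma_2^2/2}$ together with a prefactor that scales like $1/\sqrt{d-2}$.

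The main obstacle will be extracting the exact constant $\sqrt{d}/(2\sqrt{\pi})$ claimed in the statement. The naive combination above produces the prefactor $d\,\gamma_1/(2\sqrt{2\pi(d-2)})$, whose ratio to the target is $\sqrt{d/(2(d-2))}$; this is $\le 1$ precisely for $d \ge 4$, so for those $d$ the chain of inequalities closes directly. The small-dimensional cases require separate handling: for $d = 2$ the density on the disk is simply $1/\pi$, and a bare area bound $\P \le \gamma_1/\pi \le \gamma_1\sqrt{2}/(2\sqrt{\pi})$ gives the result (and $e^{-(d-2)\gamma_2^2/2} = 1$, so the exponential factor is trivial); for $d = 3$ a sharper argument on $\int_{\gamma_2}^1 (1-t^2)^{1/2}\,dt$, for instance by treating the exponential tail more carefully or by directly evaluating, is needed to squeeze the prefactor into $\sqrt{3}/(2\sqrt{\pi})$. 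So the bulk of the proof is a standard volume-slicing density computation plus a Gaussian tail inequality, and the genuine technical care lies in matching the stated constant at low dimensions where the Gaussian approximation is weakest.
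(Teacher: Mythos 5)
The paper does not actually prove this lemma: it is imported verbatim from Balcan et al.\ (Lemma~4 of the cited reference), so there is no internal proof to compare against. Your plan is the standard argument for this kind of projection bound and is essentially the one used in the original source: volume slicing gives the marginal density $\frac{d}{2\pi}(1-s^2-t^2)^{(d-2)/2}$ of $(x_1,x_2)$ on the unit disk, the pointwise chain $(1-s^2-t^2)^{(d-2)/2}\le(1-t^2)^{(d-2)/2}\le e^{-(d-2)t^2/2}$ decouples the rectangle, and a Gaussian tail bound finishes. Your bookkeeping is also right: the generic chain yields the prefactor $\frac{d\gamma_1}{2\sqrt{2\pi(d-2)}}$, which beats the stated $\frac{\gamma_1\sqrt d}{2\sqrt\pi}$ exactly when $d\ge4$, and you correctly isolate $d=2$ (where the crude area bound $\gamma_1/\pi\le\gamma_1/\sqrt{2\pi}$ suffices and the exponential factor is $1$) and $d=3$ as the cases needing separate care. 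The only genuinely unfinished step is $d=3$, which you flag but do not close; it does close. What is needed there is $\int_{\gamma_2}^1\sqrt{1-t^2}\,dt\le\sqrt{\pi/3}\,e^{-\gamma_2^2/2}$, and for instance $\int_{\gamma_2}^1\sqrt{1-t^2}\,dt\le\sqrt2\int_{\gamma_2}^1\sqrt{1-t}\,dt=\frac{2\sqrt2}{3}(1-\gamma_2)^{3/2}$, together with the fact that $(1-\gamma_2)^{3/2}e^{\gamma_2^2/2}$ is decreasing on $[0,1]$ (its logarithmic derivative is $\gamma_2-\frac{3}{2(1-\gamma_2)}<0$), reduces the claim to $\frac{2\sqrt2}{3}\le\sqrt{\pi/3}$, which holds. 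With that case filled in, your argument is a complete and self-contained proof of a statement the paper only cites.
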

\subsection{Lower Bound}

\begin{replemma}{lem:lower-bound_err}
     Let $c_1,c_2$ and $\sigma >0 $. Let $\bx_i\in X$ be a set of $n$ i.i.d. points from $\cX$ with corresponding true labels $y_i$. Given $(h,t) \in \cH^{T,g}$, let  $\E \big[\big(\lossza(h,t,\bx_i,y_i) - \err(h,t \given\cX)\big)^2\big]= \sigma_i^2 > \sigma^2$ for every $\bx_i$ 
  for $\sigma_i>0$ and let $\sum_{i}^n \sigma_i^2\ge c_1$ then for every $\epsilon \in [0, \frac{\sum_{i=1}^n \sigma_i^2}{\sqrt{c_1}} ]$ with 
  $ n_v < \frac{12\sigma^2}{\epsilon^2}\log(4c_2)  $
  the following holds w.p. at least $1/4$,
  $
      \err_a(h,t \given  \cX) > \haterr_a(h,t \given X) + \epsilon.
  $
\end{replemma}

\begin{proof}
It follows by application of Feller's result stated in lemma \ref{lem:feller}.
\end{proof}

\begin{lemma}(Feller, Lower Bound on Tail Probability of Sum of Independent Random Variables)
\label{lem:feller}
      There exists positive universal constants $c_1$ and $c_2$ such that for any set of independent random variables $X_1,\ldots,X_m$ satisfying $E[X_i]=0$ and $|X_i|\le M$, for every $i\in\{1,\ldots,m\}$, if $\sum_{i=1}^m \E[(X)_i^2]\ge c_1$, then for every $\epsilon \in [0, \frac{\sum_{i=1}^m \E[(X_i)^2]\}}{M\sqrt{c_1}}]$
      \begin{equation}
        \P(\sum_{i=1}^m X_i > \epsilon) \ge c_2 \exp \Big ( \frac{-\epsilon^2}{12\sum_{i=1}^m \E[(X_i)^2]}\Big ) .
      \end{equation}
  \end{lemma}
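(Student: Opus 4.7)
The plan is to establish Feller's lower tail bound for sums of independent, bounded, mean-zero random variables. Since Chernoff-style arguments give upper bounds on $\P(S > \epsilon)$, a matching lower bound requires inverting the argument via an exponential change of measure (the Cramer/Esscher tilt) combined with a second-moment concentration under the tilted measure.

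First, write $S = \sum_{i=1}^m X_i$, $V = \sum_{i=1}^m \E[X_i^2]$, and $\psi(\lambda) = \log \E[e^{\lambda S}]$. I would define the tilted measure $\mathbb{Q}$ by $d\mathbb{Q}/d\P = e^{\lambda S - \psi(\lambda)}$ and choose $\lambda = \epsilon/V$, so that $\E_\mathbb{Q}[S] = \psi'(\lambda) \approx \epsilon$. Using $|X_i| \le M$ together with $\E[X_i]=0$, a Taylor expansion around $0$ yields $\psi(\lambda) \le \tfrac{1}{2}\lambda^2 V + O((\lambda M)^3 V)$ and $\psi''(\lambda) \le V(1 + O(\lambda M))$ whenever $|\lambda M| < 1$. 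The hypothesis $\epsilon \le V/(M\sqrt{c_1})$ forces $\lambda M \le 1/\sqrt{c_1}$, so taking $c_1$ sufficiently large makes the cubic error a small fraction of the leading $\tfrac{1}{2}\lambda^2 V$ term and keeps the variance of $S$ under $\mathbb{Q}$ close to $V$.

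Next, I would apply Chebyshev under $\mathbb{Q}$ to the centered variable $S - \psi'(\lambda)$ to conclude $\mathbb{Q}(\epsilon < S < \epsilon + C\sqrt{V}) \ge 1/2$ for an absolute constant $C$. Converting back through the Radon--Nikodym derivative,
\begin{equation*}
\P(S > \epsilon) = \E_\mathbb{Q}\bigl[e^{-\lambda S + \psi(\lambda)} \mathbf{1}\{S > \epsilon\}\bigr] \ge e^{-\lambda(\epsilon + C\sqrt{V}) + \psi(\lambda)} \cdot \mathbb{Q}(\epsilon < S < \epsilon + C\sqrt{V}) .
\end{equation*}
Substituting $\lambda = \epsilon/V$ and the bound on $\psi$ from the previous step, the exponent becomes $-\epsilon^2/(2V) - O(\epsilon/\sqrt{V})$, and the lower-order correction is absorbed by replacing the ideal Gaussian constant $2$ with the looser $12$ in the denominator.

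The main obstacle is the constant accounting: the cubic Taylor remainder in $\psi$, the Chebyshev slack, and the $\lambda C\sqrt{V}$ correction must all simultaneously fit inside the gap between $2V$ (the pure Gaussian variance) and the claimed $12V$. The two hypotheses $V \ge c_1$ and $\epsilon \le V/(M\sqrt{c_1})$ are precisely tuned so that each of these error terms becomes an arbitrarily small fraction of $\epsilon^2/V$ once $c_1$ is chosen large, leaving universal constants $c_1$ and $c_2$ for which the inequality holds in the stated clean form. A local Berry--Esseen argument, comparing the distribution of $S/\sqrt{V}$ to a standard normal up to error $O(M/\sqrt{V})$, would give an alternative and more quantitative route to the same conclusion.
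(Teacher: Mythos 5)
The paper does not actually prove this lemma; it is quoted as a classical result of Feller and used as a black box, so your exponential-tilting argument is being measured against a citation rather than against a proof. The tilting strategy is indeed the standard route to such lower bounds, and most of your sketch is sound. The fatal problem is the last step. Writing $V=\sum_i \E[X_i^2]$, your change of measure yields $\P(S>\epsilon)\gtrsim \exp\big(-\tfrac{\epsilon^2}{2V}(1+o(1)) - C\epsilon/\sqrt{V}\big)$, and you propose to absorb the corrections ``by replacing the ideal Gaussian constant $2$ with the looser $12$ in the denominator.'' That replacement goes the wrong way: enlarging the denominator from $2V$ to $12V$ makes the exponent less negative and hence makes the claimed bound \emph{larger}, so it cannot absorb anything in a \emph{lower} bound. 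To absorb the Taylor remainder, the Chebyshev slack, and the $\lambda C\sqrt V$ term you must shrink the bound, i.e.\ move the coefficient of $\epsilon^2/V$ upward from $1/2$ (to $1$, say), not downward to $1/12$.

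In fact no proof can rescue the statement in the form printed here, because it is false. Take $X_i$ i.i.d.\ Rademacher, so $M=1$ and $V=m$, and take $\epsilon$ at the top of the admissible range, $\epsilon = m/\sqrt{c_1}$. Hoeffding gives $\P(\sum_i X_i>\epsilon)\le \exp(-m/(2c_1))$, while the claimed lower bound is $c_2\exp(-m/(12c_1))$; for large $m$ the former is exponentially smaller than the latter, contradicting the claim for any fixed $c_2>0$. The version of Feller's theorem used in the learning-theory literature has the constant on the other side of the fraction, i.e.\ an exponent of the form $-c\,\epsilon^2/\sum_i\E[X_i^2]$ with $c\ge 1/2$; with that correction (which only changes constants in Lemma~\ref{lem:lower-bound_err}) your argument does go through, provided you also (i) over-tilt slightly so that the tilted mean $\psi'(\lambda)$ exceeds $\epsilon$ by a multiple of $\sqrt V$ before applying Chebyshev, and (ii) record the cubic remainder as $O(\lambda^3 M V)$ (via $\E|X_i|^3\le M\,\E[X_i^2]$) rather than $O((\lambda M)^3 V)$, since it is the former that the hypothesis $\lambda M\le 1/\sqrt{c_1}$ controls relative to $\lambda^2 V$.
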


\newpage 
\section{Additional Experiments}
\label{sec:additional-exp}
In this section, we discuss additional experiments on the role of hypothesis class in auto-labeling datasets and experiments for studying the role of confidence function in auto-labeling. Finally, we visualize PaCMAP embeddings of the CIFAR-10 and MNIST data points to get a sense of auto-labeling regions in various rounds of the algorithm.

\subsection{Additional Experiments on Role of the Hypothesis Class}
First, we provide details of the datasets,

\textbf{XOR} is a synthetic dataset. Recall that it is created by uniformly drawing points from 4 circles, each centered at the corners of a square of with side length 4 centered at the origin. Points in the diagonally opposite balls belong to the same class. We generate a total of $N=10,000$ samples, out of which we keep $8,000$ in $X_{pool}$ and $2,000$ in the validation pool ${X_{val}}$.
   
\textbf{MNIST}~\citep{deng2012mnist} is a standard image dataset of hand-written digits. We randomly split the standard training set into $X_{pool}$ and the validation pool $X_{val}$ of sizes 48,000 and 12,000 respectively. While training a linear classifier on this dataset we flatten the $28\times 28$ images to vectors of size 784.


\begin{figure*}[h]
  \centering
  \mbox{
    \subfigure[Output of TBAL and AL+SC on XOR dataset \label{fig:xor-scatter}]{\includegraphics[scale=0.25]{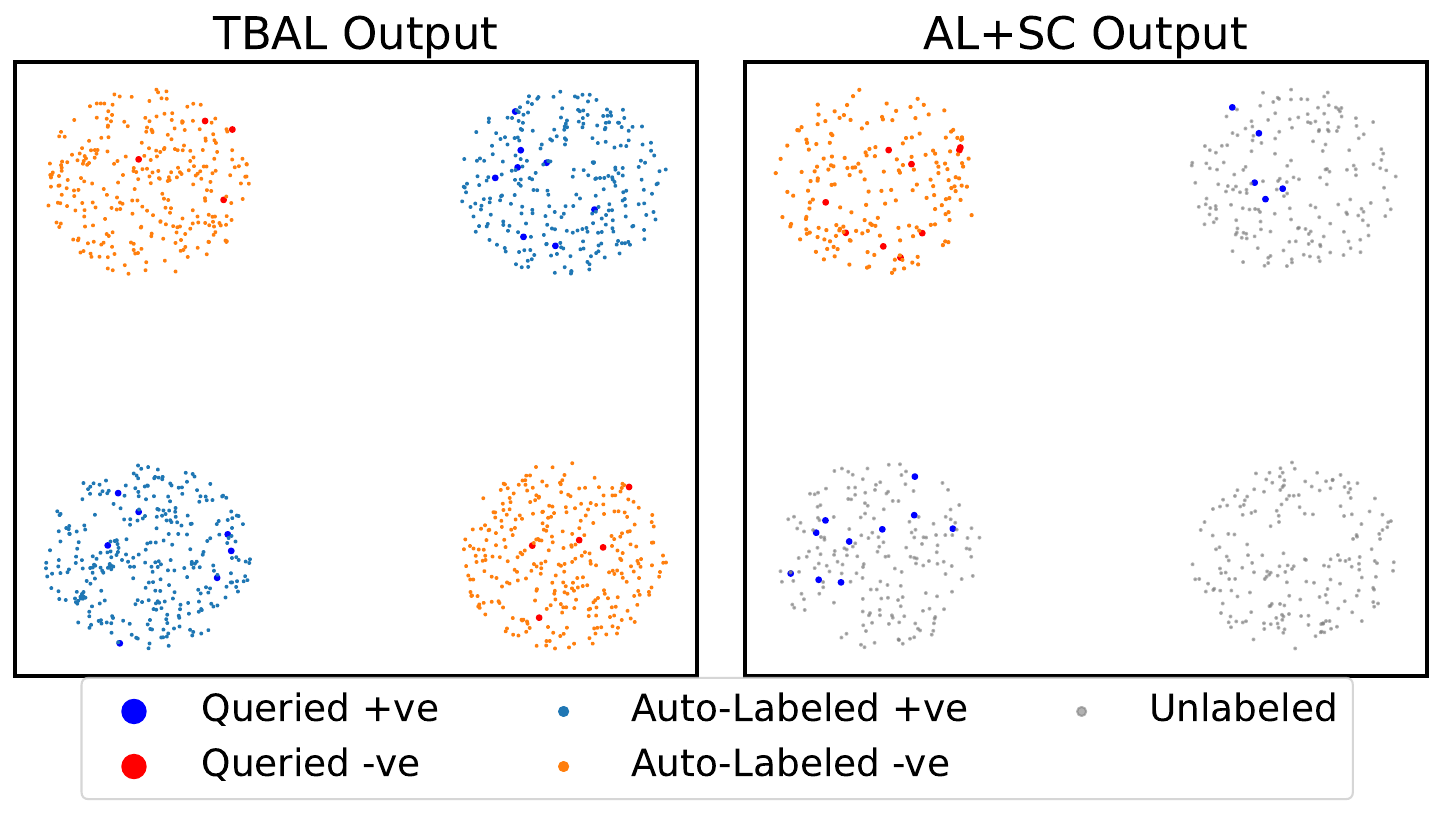}}\quad
    \subfigure[Auto labeling performance of various methods \label{fig:xor-plot}]{\includegraphics[scale=0.225]{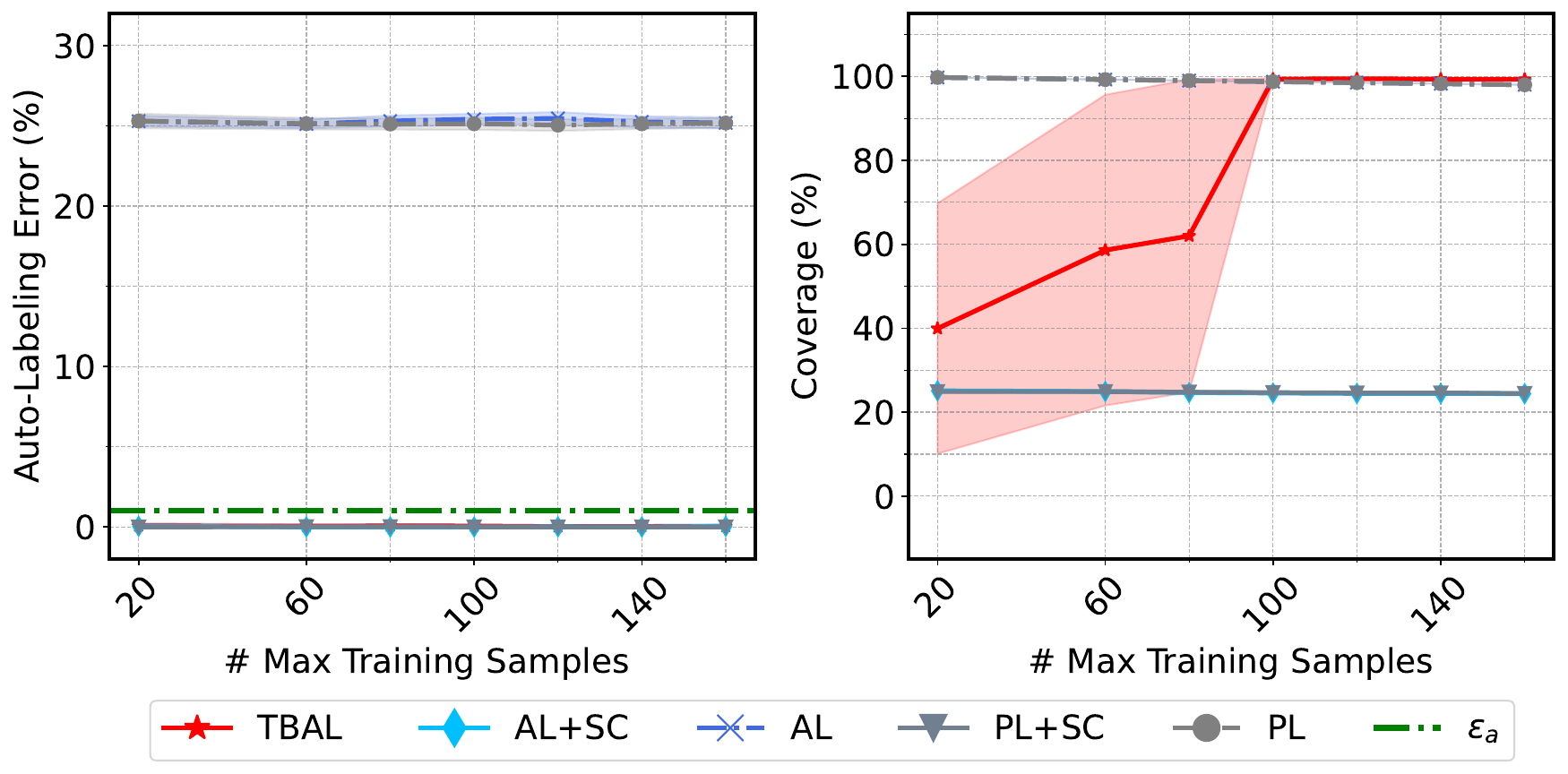}}
  }
  \caption{ Comparison of Threshold-Based Auto-Labeling (TBAL) and Active-Learning followed by Selective Classification (AL+SC) on XOR-dataset. Left figure (a) shows samples that were auto-labeled, queried, and left unlabeled by these methods. Right figure (b) shows the auto-labeling error and coverage achieved. The lines show the mean and the shaded region shows 1-standard deviation estimated over 10 trials with different random seeds. }
  \label{fig:xor-main}
\end{figure*}
\textbf{XOR Experiment.}
We run the TBAL algorithm~\ref{alg:main_algo} with an error tolerance of $\epsilon_a = 1\%$. we use 20\% of $N_q$ as seed training data and keep query size $n_b$ as $5\%$ of $N_q$. We compare it with active learning and active learning followed by selective classification. The given function class and selective classifier are both linear for all the algorithms. The results are shown in Figure~\ref{fig:xor-main}. 
Clearly, there is no linear classifier that can correctly classify this data. 
We note that there are multiple optimal classifiers in the function class of linear classifiers and they will all incur an error of $25\%$. So, active learning algorithms can only output models that make at least $25\%$ error. If we naively use the output model for auto-labeling, we can obtain near full coverage but incur $25\%$ auto-labeling error. If we use the model output by active learning with threshold-based selective classification, then it can attain lower error in labeling. However, it can only label $\approx 25\%$ of the unlabeled data. In contrast, the TBAL algorithm can label almost all of the data accurately, i.e., attain close to $100\%$ coverage, with an error close to $1\%$ auto-labeling error. 

\textbf{MNIST Experiment.}
 For training LeNet \citep{lecun1998LeNet} we use SGD with a learning rate of 0.1, batch size of 32, and train for 20 epochs. We use auto-labeling error threshold $\epsilon_a=5\%$. We use 20\% of $N_q$ as seed training data and keep query size $n_b$ as $5\%$ of $N_q$. The results are presented in Figure \ref{fig:mnist-exp} we observe that TBAL using less powerful models can still yield highly accurate datasets with a significant fraction of points labeled by the models. This confirms the notion that bad models can still provide good datasets.

\begin{figure*}[t]
  \centering
  \mbox{
    \subfigure[Auto-labeling MNIST data using a linear classifier.  The validation size used here is 12k.  \label{fig:mnist-linear}]{\includegraphics[scale=0.225]{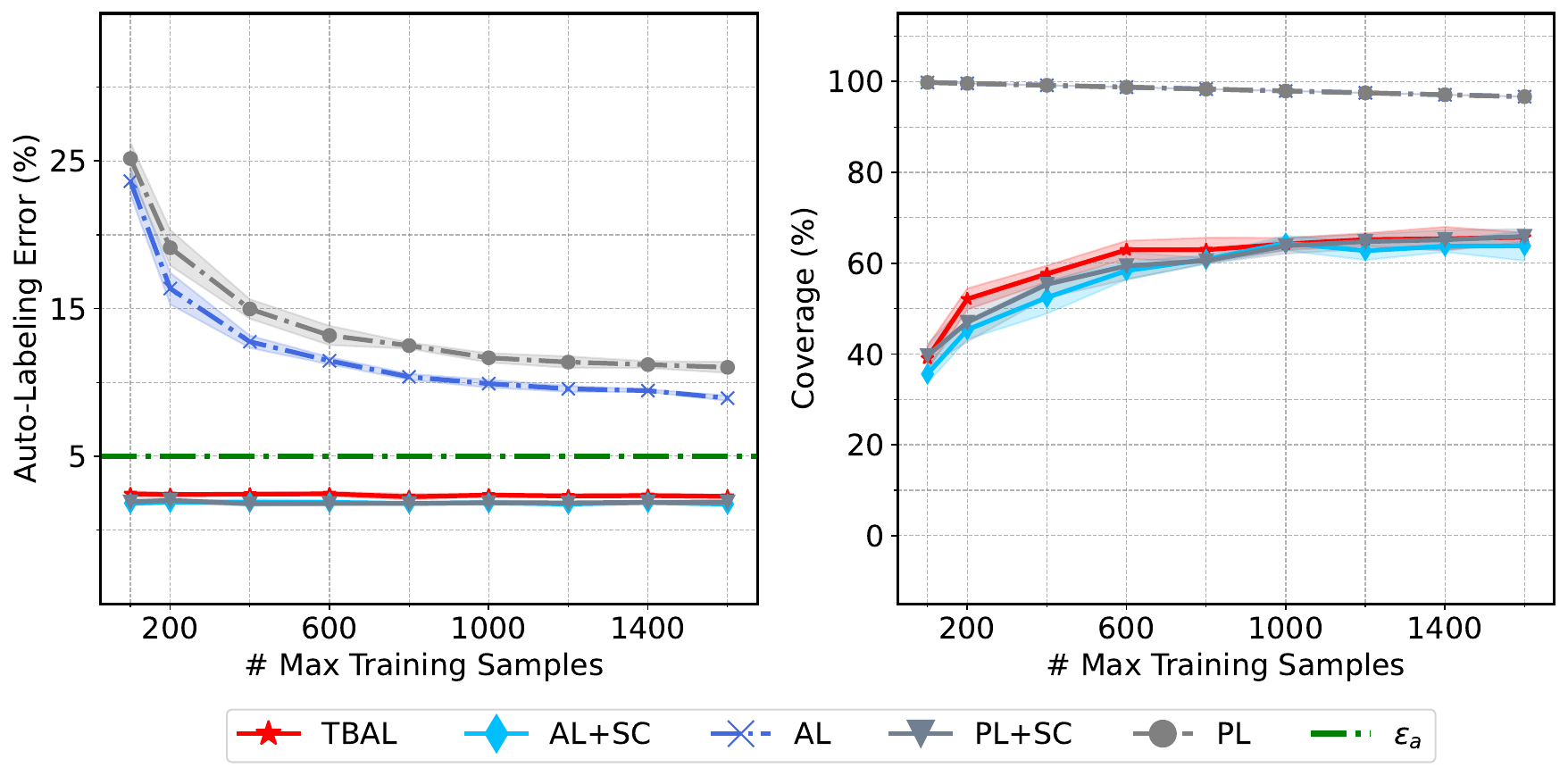}} $\quad$
    
    \subfigure[Auto-labeling MNIST data using LeNet classifier.  The validation size used here is 12k.  \label{fig:mnist-lenet}]{\includegraphics[scale=0.225]{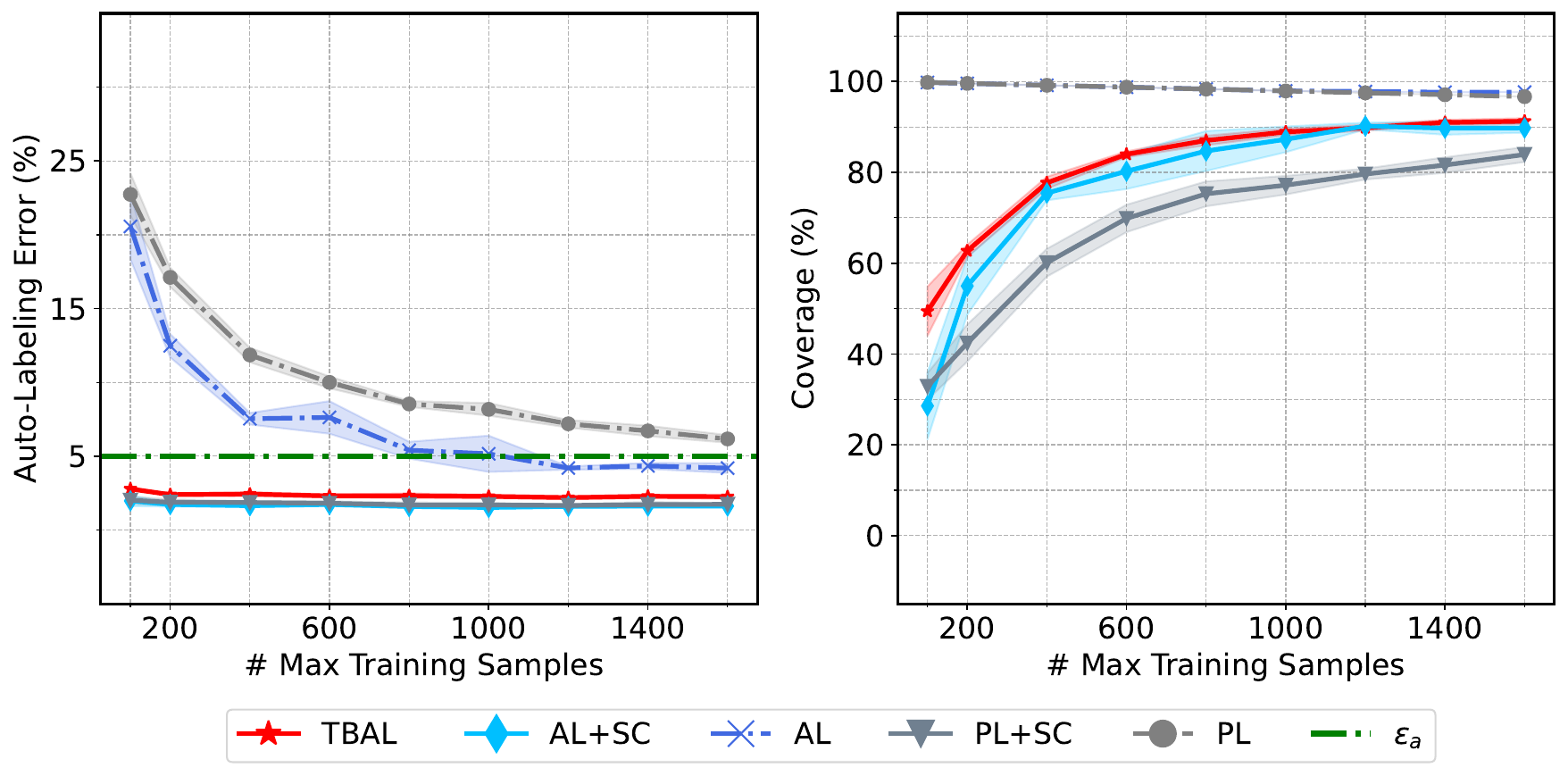}}
  }
  \caption{{Auto-labeling performance on MNIST data using different models (hypothesis classes) as a function of samples available for training. The left figure (a) shows the results with the linear classifier and the right figure (b)  shows the results with the LeNet classifier. The auto-labeling error threshold $\epsilon_a=5\%$ in both experiments and the algorithms are given the same amount of validation data. 
  The lines show the mean and the shaded region shows 1-standard deviation estimated over 5 trials with different random seeds.}}
  \label{fig:mnist-exp}
\end{figure*}

\begin{figure*}[h]
  \centering
  \mbox{
    \subfigure[Auto-labeling CIFAR-10 data using a small network and softmax scores. Validation size = 10k. \label{fig:cifar-small-net-sfmx}]{\includegraphics[scale=0.225]{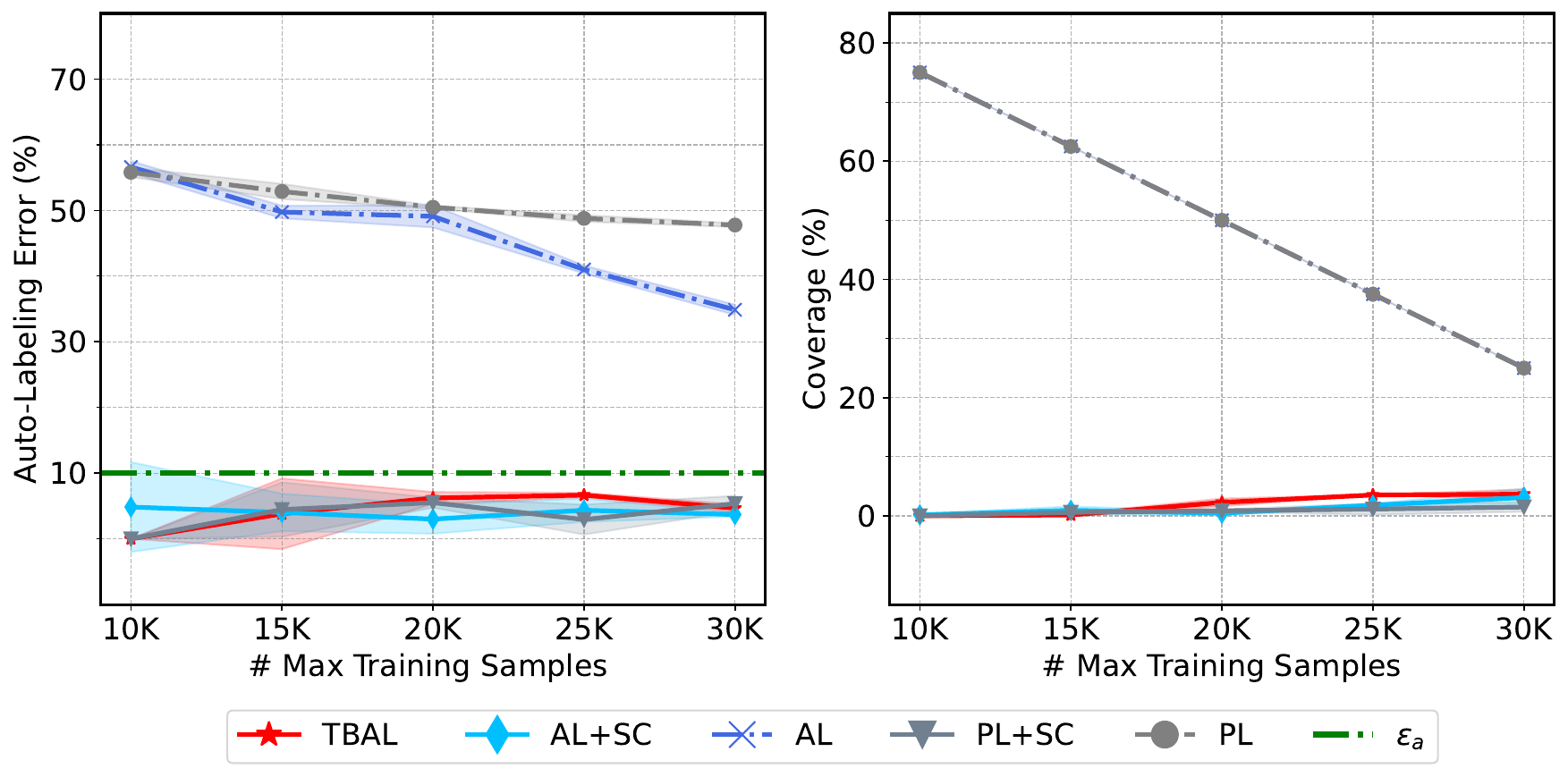}}\quad
    
    \subfigure[Auto-labeling CIFAR-10 data using a small network and energy scores. Validation size = 10k. \label{fig:cifar-small-net-energy}]{\includegraphics[scale=0.225]{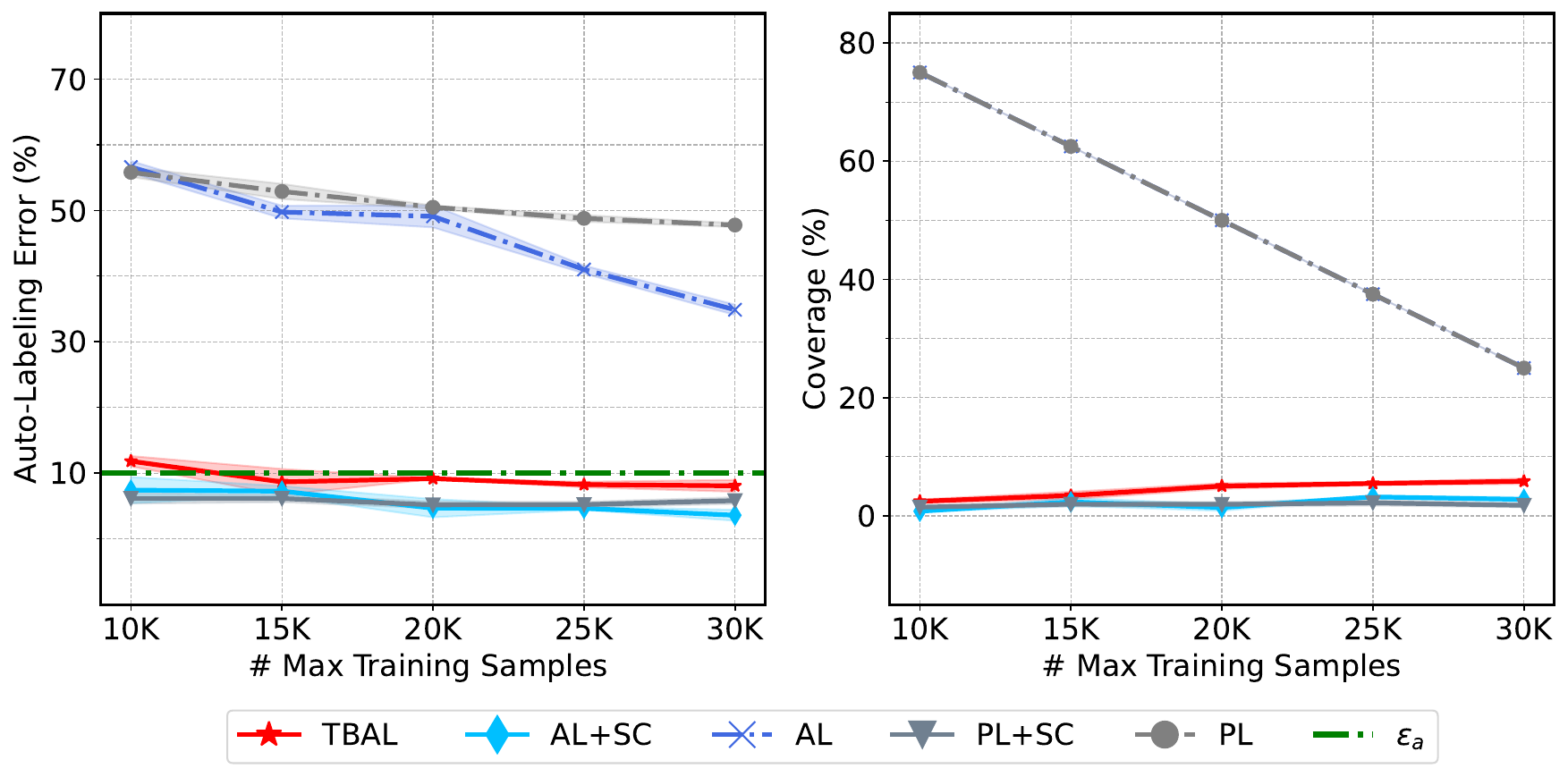}}
  }
  
  \caption{{ Auto-labeling performance on CIFAR-10 data using a small network and different scoring functions. The left figure (a) shows the results with softmax scores and the right figure (b)  shows the results with the energy score. The auto-labeling error threshold $\epsilon_a=10\%$ in both experiments and methods are given the same amount of validation data. The lines show the mean and the shaded region shows 1-standard deviation estimated over 5 trials with different random seeds.} }
  \label{fig:cifar-10-small-net-main}
\end{figure*}

\subsection{Role of Confidence Function} 
\label{subsec:confidence-exp}
The confidence function $g$ is used to obtain uncertainty scores is an important factor in auto-labeling. In particular, for threshold-based auto-labeling we expect the scores of correctly classified and incorrectly classified points to be reasonably well separated and if this is not the case then the algorithm will struggle to find a good threshold even if the given classifier has good accuracy in certain regions.

\textbf{Setup.} 
We perform auto-labeling on the CIFAR-10 dataset using a small CNN network with 2 convolution layers followed by 3 fully connected layers ~\citep{small-net}. We use two different scores for auto-labeling, a) Usual softmax output b) Energy score with temperature = 1~\citep{lecun2006tutorial-energy}.
We vary the maximum number of training samples $N_q$ and keep 20\% of $N_q$ as seed samples and query points in the batches of $10\%$ of $N_q$. The model is trained for 50 epochs, using SGD with a learning rate of 0.05, batch size = 256, weight decay = $5e^{-4}$, and momentum=0.9, and use $\epsilon_a=10\%$.

\textbf{Results.}
The results with softmax scores and energy scores used as confidence functions can be seen in Figures \ref{fig:cifar-small-net-sfmx} and \ref{fig:cifar-small-net-energy} respectively. We see that for both of these cases, TBAL does not obtain a coverage of more than $\approx 6\%$. We observe that using the energy score as the confidence function performed marginally better than using the softmax scores. 
We note that this is the case even though the test accuracies of the trained models were around  $50\%$ for most of the rounds. 
Note that CIFAR-10 has 10 classes, so an accuracy of $50\%$ is much better than random guessing and one would expect to be able to auto-label a significant chunk of the data with such a model. However, the softmax scores and energy scores are not well calibrated, and therefore, when used as confidence functions, they result in a poor separation between correct and incorrect predictions by the model. This can be seen in Figure \ref{fig:cifar-10-small-net-scores-dist} where neither of the softmax and energy scores provides a good separation between the correct and incorrect predictions. We can also see that the energy score is marginally better in terms of the separation, which allows it to achieve slightly better auto-labeling coverage in comparison to using softmax scores. This suggests that more investigation is needed to understand the properties of good confidence functions for auto-labeling which is left to future work. For a more detailed visualization of the rounds of TBAL for this experiment, see Figures~\ref{fig:cifar10-smallnet-energy} and~\ref{fig:cifar10-smallnet-softmax} in the Appendix.

\begin{figure*}[t]
  \centering
  \mbox{
    \subfigure[ Histogram of scores in round 2. \label{fig:cifar-small-net-sfmx}]{\includegraphics[scale=0.225]{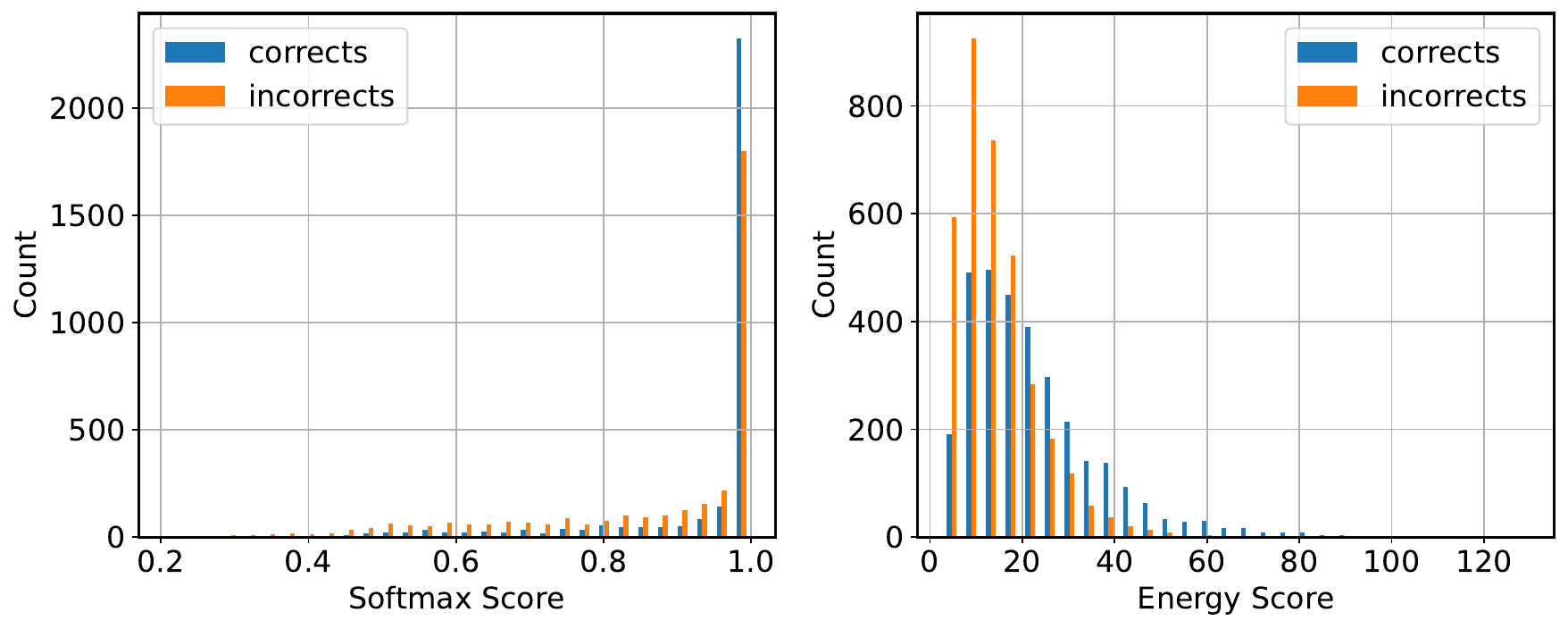}}\quad
    
    \subfigure[Histogram of scores in round 6. \label{fig:cifar-small-net-energy}]{\includegraphics[scale=0.225]{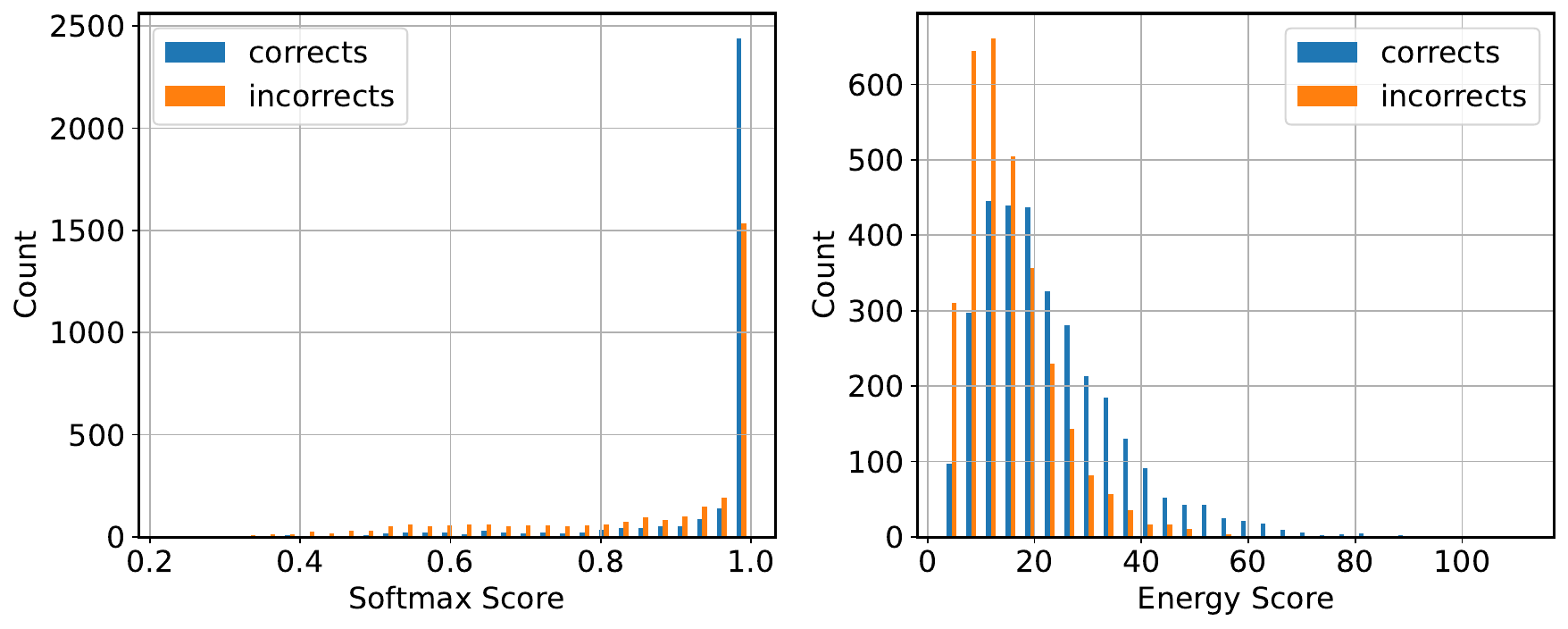}}
  }
  \vspace{-10pt}
  \caption{ {Histograms of scores computed on the validation data in a few rounds of TBAL run on CIFAR-10 with a small net. We picked two rounds where it auto-labeled the most i.e. around 800 points. }}
  \label{fig:cifar-10-small-net-scores-dist}
  \vspace{-10pt}
\end{figure*}


\subsection{Detailed Results}
In the main paper, we omitted AL, PL, and PL+SC for clarity due to lack of space. We provide results including these baselines in the following tables for IMDB, Tiny-ImageNet, and Unit-Ball datasets.
\label{sec:detailed_results}
\begin{table}[h] 
\centering
 \scalebox{0.75}{ 
 \begin{tabular}{r|ccccc|ccccc}
\toprule
\multirow{2}{*}{$\mathbf{N_v}$} & \multicolumn{5}{c}{\textbf{Error (\%)} } &  \multicolumn{5}{c}{\textbf{Coverage (\%)}} \\ 
\cmidrule{2-11}
& {\textbf{TBAL}} & {\textbf{AL+SC}} & {\textbf{PL+SC}} & {\textbf{AL}} & {\textbf{PL}} & {\textbf{TBAL}} & {\textbf{AL+SC}} & {\textbf{PL+SC}} &{\textbf{AL}} & {\textbf{PL}} \\
\toprule 
200 & 4.77 \tiny{$\pm$0.18} & 3.35 \tiny{$\pm$0.80} & 3.68 \tiny{$\pm$0.38} & 6.60 \tiny{$\pm$1.06} & 6.16 \tiny{$\pm$0.10} & 83.14 \tiny{$\pm$3.65} & 78.53 \tiny{$\pm$7.05} & 81.20 \tiny{$\pm$1.78} & 98.75 \tiny{$\pm$0.00} & 97.50 \tiny{$\pm$0.00}   \\ 
 \midrule 
400 & 4.57 \tiny{$\pm$0.26} & 3.53 \tiny{$\pm$0.73} & 3.96 \tiny{$\pm$0.33} & 6.60 \tiny{$\pm$1.06} & 6.16 \tiny{$\pm$0.10} & 90.70 \tiny{$\pm$3.11} & 86.39 \tiny{$\pm$5.11} & 87.43 \tiny{$\pm$2.01} & 98.75 \tiny{$\pm$0.00} & 97.50 \tiny{$\pm$0.00}   \\ 
 \midrule 
600 & 4.32 \tiny{$\pm$0.17} & 3.70 \tiny{$\pm$0.63} & 4.17 \tiny{$\pm$0.27} & 6.60 \tiny{$\pm$1.06} & 6.16 \tiny{$\pm$0.10} & 92.96 \tiny{$\pm$0.46} & 88.90 \tiny{$\pm$4.83} & 90.12 \tiny{$\pm$1.62} & 98.75 \tiny{$\pm$0.00} & 97.50 \tiny{$\pm$0.00}   \\ 
 \midrule 
800 & 4.66 \tiny{$\pm$0.20} & 3.84 \tiny{$\pm$0.70} & 4.15 \tiny{$\pm$0.35} & 6.60 \tiny{$\pm$1.06} & 6.16 \tiny{$\pm$0.10} & 92.42 \tiny{$\pm$0.89} & 88.67 \tiny{$\pm$3.88} & 89.37 \tiny{$\pm$1.45} & 98.75 \tiny{$\pm$0.00} & 97.50 \tiny{$\pm$0.00}   \\ 
 \midrule 
1000 & 4.67 \tiny{$\pm$0.16} & 3.90 \tiny{$\pm$0.68} & 4.21 \tiny{$\pm$0.35} & 6.60 \tiny{$\pm$1.06} & 6.16 \tiny{$\pm$0.10} & 92.89 \tiny{$\pm$0.91} & 89.79 \tiny{$\pm$3.09} & 90.18 \tiny{$\pm$1.39} & 98.75 \tiny{$\pm$0.00} & 97.50 \tiny{$\pm$0.00}\\ 
 \toprule 
 \end{tabular}
 } 
 \vspace{10pt}
  \caption{\textbf{IMDB}. Effect of variation of validation data size ($N_v$) without using a UCB (i.e., $C_1=0$) on error estimates. We keep training data size $N_q$ fixed at $500$ and use error threshold $\epsilon_a=5\%$. We report the mean and std. deviation over 10 runs with different random seeds.}
 \end{table}
\begin{table}[h] 
\centering
 \scalebox{0.75}{ 
 \begin{tabular}{r|ccccc|ccccc}
\toprule
\multirow{2}{*}{$\mathbf{N_v}$} & \multicolumn{5}{c}{\textbf{Error (\%)} } &  \multicolumn{5}{c}{\textbf{Coverage (\%)}} \\ 
\cmidrule{2-11}
& {\textbf{TBAL}} & {\textbf{AL+SC}} & {\textbf{PL+SC}} & {\textbf{AL}} & {\textbf{PL}} & {\textbf{TBAL}} & {\textbf{AL+SC}} & {\textbf{PL+SC}} &{\textbf{AL}} & {\textbf{PL}} \\
\toprule 
200 & 2.28 \tiny{$\pm$0.21} & 3.11 \tiny{$\pm$0.86} & 2.86 \tiny{$\pm$0.35} & 6.60 \tiny{$\pm$1.06} & 6.16 \tiny{$\pm$0.10} & 68.24 \tiny{$\pm$6.20} & 57.77 \tiny{$\pm$13.09} & 60.45 \tiny{$\pm$1.63} & 98.75 \tiny{$\pm$0.00} & 97.50 \tiny{$\pm$0.00}   \\ 
 \midrule 
400 & 1.29 \tiny{$\pm$0.10} & 1.98 \tiny{$\pm$0.40} & 1.54 \tiny{$\pm$0.11} & 6.60 \tiny{$\pm$1.06} & 6.16 \tiny{$\pm$0.10} & 63.81 \tiny{$\pm$4.86} & 63.06 \tiny{$\pm$10.70} & 68.32 \tiny{$\pm$6.60} & 98.75 \tiny{$\pm$0.00} & 97.50 \tiny{$\pm$0.00}   \\ 
 \midrule 
600 & 1.41 \tiny{$\pm$0.20} & 1.81 \tiny{$\pm$0.22} & 1.87 \tiny{$\pm$0.07} & 6.60 \tiny{$\pm$1.06} & 6.16 \tiny{$\pm$0.10} & 69.64 \tiny{$\pm$3.98} & 62.92 \tiny{$\pm$9.20} & 69.84 \tiny{$\pm$3.07} & 98.75 \tiny{$\pm$0.00} & 97.50 \tiny{$\pm$0.00}   \\ 
 \midrule 
800 & 1.62 \tiny{$\pm$0.30} & 2.04 \tiny{$\pm$0.35} & 2.33 \tiny{$\pm$0.35} & 6.60 \tiny{$\pm$1.06} & 6.16 \tiny{$\pm$0.10} & 67.45 \tiny{$\pm$3.72} & 63.22 \tiny{$\pm$7.89} & 72.50 \tiny{$\pm$2.28} & 98.75 \tiny{$\pm$0.00} & 97.50 \tiny{$\pm$0.00}   \\ 
 \midrule 
1000 & 1.64 \tiny{$\pm$0.23} & 1.97 \tiny{$\pm$0.26} & 1.93 \tiny{$\pm$0.13} & 6.60 \tiny{$\pm$1.06} & 6.16 \tiny{$\pm$0.10} & 70.28 \tiny{$\pm$2.82} & 66.11 \tiny{$\pm$8.00} & 73.04 \tiny{$\pm$2.06} & 98.75 \tiny{$\pm$0.00} & 97.50 \tiny{$\pm$0.00}\\ 
 \toprule 
 \end{tabular}
 } 
 \vspace{5pt}
 \caption{{\textbf{IMDB}. Effect of variation of validation data size ($N_v$), using a UCB (i.e., $C_1=0.25$) on error estimates. We keep training data size $N_q$ fixed at $500$ and use error threshold $\epsilon_a=5\%$. We report the mean and std. deviation over 10 runs with different random seeds.}}
 \end{table}
 
\begin{table}[h] 
\centering
 \scalebox{0.75}{ 
 \begin{tabular}{r|ccccc|ccccc}
\toprule
\multirow{2}{*}{$\mathbf{N_q}$} & \multicolumn{5}{c}{\textbf{Error (\%)} } &  \multicolumn{5}{c}{\textbf{Coverage (\%)}} \\ 
\cmidrule{2-11}
& {\textbf{TBAL}} & {\textbf{AL+SC}} & {\textbf{PL+SC}} & {\textbf{AL}} & {\textbf{PL}} & {\textbf{TBAL}} & {\textbf{AL+SC}} & {\textbf{PL+SC}} &{\textbf{AL}} & {\textbf{PL}} \\
\toprule 
200 & 4.57 \tiny{$\pm$0.21} & 4.30 \tiny{$\pm$0.20} & 3.97 \tiny{$\pm$0.06} & 6.44 \tiny{$\pm$0.20} & 6.20 \tiny{$\pm$0.09} & 93.46 \tiny{$\pm$1.01} & 90.60 \tiny{$\pm$2.91} & 91.97 \tiny{$\pm$0.28} & 99.50 \tiny{$\pm$0.00} & 99.00 \tiny{$\pm$0.00}   \\ 
 \midrule 
400 & 4.60 \tiny{$\pm$0.09} & 3.75 \tiny{$\pm$0.83} & 3.92 \tiny{$\pm$0.86} & 11.86 \tiny{$\pm$13.54} & 6.81 \tiny{$\pm$1.26} & 92.55 \tiny{$\pm$0.66} & 84.27 \tiny{$\pm$19.15} & 89.95 \tiny{$\pm$2.92} & 99.00 \tiny{$\pm$0.00} & 98.00 \tiny{$\pm$0.00}   \\ 
 \midrule 
600 & 4.93 \tiny{$\pm$0.10} & 3.99 \tiny{$\pm$0.91} & 4.69 \tiny{$\pm$0.09} & 6.31 \tiny{$\pm$1.28} & 6.33 \tiny{$\pm$0.10} & 92.45 \tiny{$\pm$0.84} & 91.69 \tiny{$\pm$3.99} & 91.20 \tiny{$\pm$0.42} & 98.50 \tiny{$\pm$0.00} & 97.00 \tiny{$\pm$0.00}   \\ 
 \midrule 
800 & 4.76 \tiny{$\pm$0.12} & 3.55 \tiny{$\pm$0.69} & 4.37 \tiny{$\pm$0.14} & 6.91 \tiny{$\pm$1.49} & 6.12 \tiny{$\pm$0.10} & 92.15 \tiny{$\pm$1.05} & 89.98 \tiny{$\pm$3.38} & 89.97 \tiny{$\pm$0.69} & 98.00 \tiny{$\pm$0.00} & 96.00 \tiny{$\pm$0.00}   \\ 
 \midrule 
1000 & 4.49 \tiny{$\pm$0.06} & 4.19 \tiny{$\pm$0.31} & 4.25 \tiny{$\pm$0.29} & 5.65 \tiny{$\pm$0.25} & 6.14 \tiny{$\pm$0.11} & 92.25 \tiny{$\pm$0.96} & 92.28 \tiny{$\pm$2.13} & 89.47 \tiny{$\pm$0.70} & 97.50 \tiny{$\pm$0.00} & 95.00 \tiny{$\pm$0.00}\\ 
 \toprule 
 \end{tabular}
 } 
 \vspace{5pt}
 \caption{{\textbf{IMDB}. Effect of variation of  $N_q$, the maximum number of samples the algorithm can use for training, without using a UCB (i.e., $C_1=0$) on error estimates. We keep validation data size $N_v$ fixed at $1000$ and use error threshold $\epsilon_a=5\%$. We report the mean and std. deviation over 10 runs with different random seeds.}}
 \end{table}

\begin{table}[h] 
\centering
 \scalebox{0.75}{ 
 \begin{tabular}{r|ccccc|ccccc}
\toprule
\multirow{2}{*}{$\mathbf{N_q}$} & \multicolumn{5}{c}{\textbf{Error (\%)} } &  \multicolumn{5}{c}{\textbf{Coverage (\%)}} \\ 
\cmidrule{2-11}
& {\textbf{TBAL}} & {\textbf{AL+SC}} & {\textbf{PL+SC}} & {\textbf{AL}} & {\textbf{PL}} & {\textbf{TBAL}} & {\textbf{AL+SC}} & {\textbf{PL+SC}} &{\textbf{AL}} & {\textbf{PL}} \\
\toprule 
200 & 1.67 \tiny{$\pm$0.29} & 2.15 \tiny{$\pm$0.45} & 1.59 \tiny{$\pm$0.10} & 6.44 \tiny{$\pm$0.20} & 6.20 \tiny{$\pm$0.09} & 73.30 \tiny{$\pm$3.49} & 57.17 \tiny{$\pm$11.09} & 57.39 \tiny{$\pm$4.15} & 99.50 \tiny{$\pm$0.00} & 99.00 \tiny{$\pm$0.00}   \\ 
 \midrule 
400 & 1.63 \tiny{$\pm$0.19} & 1.61 \tiny{$\pm$0.29} & 1.76 \tiny{$\pm$0.13} & 11.86 \tiny{$\pm$13.54} & 6.81 \tiny{$\pm$1.26} & 72.59 \tiny{$\pm$3.16} & 64.53 \tiny{$\pm$16.61} & 58.48 \tiny{$\pm$1.79} & 99.00 \tiny{$\pm$0.00} & 98.00 \tiny{$\pm$0.00}   \\ 
 \midrule 
600 & 1.67 \tiny{$\pm$0.21} & 1.83 \tiny{$\pm$0.30} & 1.67 \tiny{$\pm$0.08} & 6.31 \tiny{$\pm$1.28} & 6.33 \tiny{$\pm$0.10} & 71.38 \tiny{$\pm$2.15} & 70.50 \tiny{$\pm$5.68} & 65.71 \tiny{$\pm$2.14} & 98.50 \tiny{$\pm$0.00} & 97.00 \tiny{$\pm$0.00}   \\ 
 \midrule 
800 & 1.67 \tiny{$\pm$0.27} & 1.90 \tiny{$\pm$0.31} & 1.79 \tiny{$\pm$0.09} & 6.91 \tiny{$\pm$1.49} & 6.12 \tiny{$\pm$0.10} & 69.10 \tiny{$\pm$4.51} & 65.74 \tiny{$\pm$10.14} & 73.21 \tiny{$\pm$2.57} & 98.00 \tiny{$\pm$0.00} & 96.00 \tiny{$\pm$0.00}   \\ 
 \midrule 
1000 & 1.62 \tiny{$\pm$0.22} & 1.97 \tiny{$\pm$0.35} & 1.70 \tiny{$\pm$0.12} & 5.65 \tiny{$\pm$0.25} & 6.14 \tiny{$\pm$0.11} & 73.42 \tiny{$\pm$2.84} & 68.05 \tiny{$\pm$5.56} & 64.18 \tiny{$\pm$2.11} & 97.50 \tiny{$\pm$0.00} & 95.00 \tiny{$\pm$0.00}\\ 
 \toprule 
 \end{tabular}
 } 

 \vspace{5pt}
 \caption{{\textbf{IMDB}. Effect of variation of  $N_q$, the maximum number of samples the algorithm can use for training, using a UCB (i.e., $C_1=0.25$) on error estimates. We keep validation data size $N_v$ fixed at $1000$ and use error threshold $\epsilon_a=5\%$. We report the mean and std. deviation over 10 runs with different random seeds.}}
 
 \end{table}

\begin{table}[h] 
\centering
 \scalebox{0.75}{ 
 \begin{tabular}{r|ccccc|ccccc}
\toprule
\multirow{2}{*}{$\mathbf{N_v}$} & \multicolumn{5}{c}{\textbf{Error (\%)} } &  \multicolumn{5}{c}{\textbf{Coverage (\%)}} \\ 
\cmidrule{2-11}
& {\textbf{TBAL}} & {\textbf{AL+SC}} & {\textbf{PL+SC}} & {\textbf{AL}} & {\textbf{PL}} & {\textbf{TBAL}} & {\textbf{AL+SC}} & {\textbf{PL+SC}} &{\textbf{AL}} & {\textbf{PL}} \\
\toprule 
100 & 3.10 \tiny{$\pm$1.80} & 0.68 \tiny{$\pm$0.81} & 1.45 \tiny{$\pm$0.73} & 1.23 \tiny{$\pm$0.99} & 2.87 \tiny{$\pm$0.57} & 71.43 \tiny{$\pm$8.86} & 96.95 \tiny{$\pm$1.01} & 92.29 \tiny{$\pm$3.27} & 98.52 \tiny{$\pm$0.16} & 96.88 \tiny{$\pm$0.00}   \\ 
 \midrule 
400 & 1.97 \tiny{$\pm$0.76} & 0.59 \tiny{$\pm$0.18} & 1.19 \tiny{$\pm$0.53} & 0.81 \tiny{$\pm$0.26} & 2.87 \tiny{$\pm$0.57} & 93.99 \tiny{$\pm$2.39} & 97.89 \tiny{$\pm$0.50} & 91.73 \tiny{$\pm$2.86} & 98.44 \tiny{$\pm$0.00} & 96.88 \tiny{$\pm$0.00}   \\ 
 \midrule 
800 & 1.64 \tiny{$\pm$0.50} & 0.66 \tiny{$\pm$0.19} & 1.21 \tiny{$\pm$0.41} & 0.81 \tiny{$\pm$0.26} & 2.87 \tiny{$\pm$0.57} & 96.26 \tiny{$\pm$1.33} & 98.06 \tiny{$\pm$0.53} & 92.25 \tiny{$\pm$2.31} & 98.44 \tiny{$\pm$0.00} & 96.88 \tiny{$\pm$0.00}   \\ 
 \midrule 
1200 & 1.39 \tiny{$\pm$0.39} & 0.67 \tiny{$\pm$0.19} & 1.11 \tiny{$\pm$0.30} & 0.81 \tiny{$\pm$0.26} & 2.87 \tiny{$\pm$0.57} & 96.67 \tiny{$\pm$0.84} & 98.10 \tiny{$\pm$0.45} & 91.98 \tiny{$\pm$2.20} & 98.44 \tiny{$\pm$0.00} & 96.88 \tiny{$\pm$0.00}   \\ 
 \midrule 
1600 & 1.33 \tiny{$\pm$0.30} & 0.70 \tiny{$\pm$0.19} & 1.11 \tiny{$\pm$0.26} & 0.81 \tiny{$\pm$0.26} & 2.87 \tiny{$\pm$0.57} & 97.13 \tiny{$\pm$0.45} & 98.16 \tiny{$\pm$0.44} & 92.01 \tiny{$\pm$2.09} & 98.44 \tiny{$\pm$0.00} & 96.88 \tiny{$\pm$0.00}   \\ 
 \midrule 
2000 & 1.28 \tiny{$\pm$0.34} & 0.71 \tiny{$\pm$0.21} & 1.07 \tiny{$\pm$0.25} & 0.81 \tiny{$\pm$0.26} & 2.87 \tiny{$\pm$0.57} & 97.15 \tiny{$\pm$0.54} & 98.20 \tiny{$\pm$0.34} & 91.86 \tiny{$\pm$2.17} & 98.44 \tiny{$\pm$0.00} & 96.88 \tiny{$\pm$0.00}\\ 
 \toprule 
 \end{tabular}
 } 
\vspace{5pt}
 \caption{{\textbf{Unit Ball}. Effect of variation of validation data size ($N_v$), without using a UCB (i.e., $C_1=0$) on error estimates. We keep training data size $N_q$ fixed at $500$ and use error threshold $\epsilon_a=1\%$. We report the mean and std. deviation over 10 runs with different random seeds.}}
 \end{table}

\begin{table}[h] 
\centering
 \scalebox{0.75}{ 
 \begin{tabular}{r|ccccc|ccccc}
\toprule
\multirow{2}{*}{$\mathbf{N_v}$} & \multicolumn{5}{c}{\textbf{Error (\%)} } &  \multicolumn{5}{c}{\textbf{Coverage (\%)}} \\ 
\cmidrule{2-11}
& {\textbf{TBAL}} & {\textbf{AL+SC}} & {\textbf{PL+SC}} & {\textbf{AL}} & {\textbf{PL}} & {\textbf{TBAL}} & {\textbf{AL+SC}} & {\textbf{PL+SC}} &{\textbf{AL}} & {\textbf{PL}} \\
\toprule 
100 & 3.10 \tiny{$\pm$1.80} & 0.68 \tiny{$\pm$0.81} & 1.45 \tiny{$\pm$0.73} & 1.23 \tiny{$\pm$0.99} & 2.87 \tiny{$\pm$0.57} & 71.43 \tiny{$\pm$8.86} & 96.95 \tiny{$\pm$1.01} & 92.29 \tiny{$\pm$3.27} & 98.52 \tiny{$\pm$0.16} & 96.88 \tiny{$\pm$0.00}   \\ 
 \midrule 
400 & 1.65 \tiny{$\pm$0.65} & 0.32 \tiny{$\pm$0.15} & 0.52 \tiny{$\pm$0.32} & 0.81 \tiny{$\pm$0.26} & 2.87 \tiny{$\pm$0.57} & 93.27 \tiny{$\pm$2.50} & 96.91 \tiny{$\pm$0.99} & 87.86 \tiny{$\pm$3.73} & 98.44 \tiny{$\pm$0.00} & 96.88 \tiny{$\pm$0.00}   \\ 
 \midrule 
800 & 1.08 \tiny{$\pm$0.47} & 0.24 \tiny{$\pm$0.16} & 0.31 \tiny{$\pm$0.17} & 0.81 \tiny{$\pm$0.26} & 2.87 \tiny{$\pm$0.57} & 96.01 \tiny{$\pm$1.16} & 96.31 \tiny{$\pm$1.36} & 86.21 \tiny{$\pm$3.55} & 98.44 \tiny{$\pm$0.00} & 96.88 \tiny{$\pm$0.00}   \\ 
 \midrule 
1200 & 0.78 \tiny{$\pm$0.27} & 0.17 \tiny{$\pm$0.11} & 0.18 \tiny{$\pm$0.14} & 0.81 \tiny{$\pm$0.26} & 2.87 \tiny{$\pm$0.57} & 96.82 \tiny{$\pm$0.84} & 95.96 \tiny{$\pm$1.40} & 84.65 \tiny{$\pm$4.14} & 98.44 \tiny{$\pm$0.00} & 96.88 \tiny{$\pm$0.00}   \\ 
 \midrule 
1600 & 0.65 \tiny{$\pm$0.20} & 0.13 \tiny{$\pm$0.08} & 0.12 \tiny{$\pm$0.09} & 0.81 \tiny{$\pm$0.26} & 2.87 \tiny{$\pm$0.57} & 96.93 \tiny{$\pm$0.57} & 95.70 \tiny{$\pm$1.38} & 83.76 \tiny{$\pm$3.93} & 98.44 \tiny{$\pm$0.00} & 96.88 \tiny{$\pm$0.00}   \\ 
 \midrule 
2000 & 0.54 \tiny{$\pm$0.16} & 0.21 \tiny{$\pm$0.11} & 0.21 \tiny{$\pm$0.10} & 0.81 \tiny{$\pm$0.26} & 2.87 \tiny{$\pm$0.57} & 97.23 \tiny{$\pm$0.42} & 96.36 \tiny{$\pm$1.13} & 85.72 \tiny{$\pm$3.47} & 98.44 \tiny{$\pm$0.00} & 96.88 \tiny{$\pm$0.00}\\ 
 \toprule 
 \end{tabular}
 } 
   \vspace{5pt}
 \caption{{\textbf{Unit-Ball}. Effect of variation of validation data size ($N_v$), without using a UCB (i.e., $C_1=0.25$) on error estimates. We keep training data size $N_q$ fixed at $500$ and use error threshold $\epsilon_a=1\%$. We report the mean and std. deviation over 10 runs with different random seeds.}}
 \end{table}

\begin{table}[h] 
\centering
 \scalebox{0.71}{ 
 \begin{tabular}{r|ccccc|ccccc}
\toprule
\multirow{2}{*}{$\mathbf{N_q}$} & \multicolumn{5}{c}{\textbf{Error (\%)} } &  \multicolumn{5}{c}{\textbf{Coverage (\%)}} \\ 
\cmidrule{2-11}
& {\textbf{TBAL}} & {\textbf{AL+SC}} & {\textbf{PL+SC}} & {\textbf{AL}} & {\textbf{PL}} & {\textbf{TBAL}} & {\textbf{AL+SC}} & {\textbf{PL+SC}} &{\textbf{AL}} & {\textbf{PL}} \\
\toprule 
100 & 1.53 \tiny{$\pm$0.27} & 1.16 \tiny{$\pm$0.35} & 1.14 \tiny{$\pm$0.29} & 16.93 \tiny{$\pm$2.48} & 12.53 \tiny{$\pm$2.24} & 75.31 \tiny{$\pm$7.06} & 31.69 \tiny{$\pm$10.78} & 51.47 \tiny{$\pm$10.46} & 99.69 \tiny{$\pm$0.00} & 99.38 \tiny{$\pm$0.00}   \\ 
 \midrule 
200 & 1.25 \tiny{$\pm$0.21} & 1.04 \tiny{$\pm$0.25} & 0.98 \tiny{$\pm$0.17} & 7.85 \tiny{$\pm$1.69} & 7.08 \tiny{$\pm$1.82} & 96.24 \tiny{$\pm$0.88} & 73.27 \tiny{$\pm$8.68} & 76.05 \tiny{$\pm$8.19} & 99.38 \tiny{$\pm$0.00} & 98.75 \tiny{$\pm$0.00}   \\ 
 \midrule 
400 & 1.20 \tiny{$\pm$0.20} & 0.94 \tiny{$\pm$0.17} & 1.03 \tiny{$\pm$0.15} & 1.81 \tiny{$\pm$0.58} & 3.25 \tiny{$\pm$0.81} & 97.70 \tiny{$\pm$0.19} & 96.48 \tiny{$\pm$1.74} & 91.08 \tiny{$\pm$2.63} & 98.75 \tiny{$\pm$0.00} & 97.50 \tiny{$\pm$0.00}   \\ 
 \midrule 
600 & 1.21 \tiny{$\pm$0.26} & 0.44 \tiny{$\pm$0.13} & 1.09 \tiny{$\pm$0.18} & 0.44 \tiny{$\pm$0.14} & 2.25 \tiny{$\pm$0.56} & 97.56 \tiny{$\pm$0.20} & 98.11 \tiny{$\pm$0.02} & 93.19 \tiny{$\pm$1.60} & 98.12 \tiny{$\pm$0.00} & 96.25 \tiny{$\pm$0.00}   \\ 
 \midrule 
800 & 1.13 \tiny{$\pm$0.20} & 0.12 \tiny{$\pm$0.05} & 1.02 \tiny{$\pm$0.18} & 0.12 \tiny{$\pm$0.05} & 1.76 \tiny{$\pm$0.50} & 97.25 \tiny{$\pm$0.19} & 97.49 \tiny{$\pm$0.01} & 93.23 \tiny{$\pm$1.01} & 97.50 \tiny{$\pm$0.00} & 95.00 \tiny{$\pm$0.00}   \\ 
 \midrule 
1000 & 1.08 \tiny{$\pm$0.19} & 0.04 \tiny{$\pm$0.02} & 1.00 \tiny{$\pm$0.20} & 0.04 \tiny{$\pm$0.02} & 1.37 \tiny{$\pm$0.34} & 97.02 \tiny{$\pm$0.25} & 96.87 \tiny{$\pm$0.01} & 92.90 \tiny{$\pm$0.79} & 96.88 \tiny{$\pm$0.00} & 93.75 \tiny{$\pm$0.00}\\ 
 \toprule 
 \end{tabular}
 } 
 
 \vspace{5pt}
 \caption{{\textbf{Unit-Ball}. Effect of variation of  $N_q$, the maximum number of samples the algorithm can use for training, without using a UCB (i.e., $C_1=0$) on error estimates. We keep validation data size $N_v$ fixed at $4000$ and use error threshold $\epsilon_a=1\%$. We report the mean and std. deviation over 10 runs with different random seeds.}}
 \end{table}

\begin{table}[h] 
\centering
 \scalebox{0.75}{ 
 \begin{tabular}{r|ccccc|ccccc}
\toprule
\multirow{2}{*}{$\mathbf{N_q}$} & \multicolumn{5}{c}{\textbf{Error (\%)} } &  \multicolumn{5}{c}{\textbf{Coverage (\%)}} \\ 
\cmidrule{2-11}
& {\textbf{TBAL}} & {\textbf{AL+SC}} & {\textbf{PL+SC}} & {\textbf{AL}} & {\textbf{PL}} & {\textbf{TBAL}} & {\textbf{AL+SC}} & {\textbf{PL+SC}} &{\textbf{AL}} & {\textbf{PL}} \\
\toprule 
100 & 0.78 \tiny{$\pm$0.19} & 0.29 \tiny{$\pm$0.25} & 0.24 \tiny{$\pm$0.15} & 16.93 \tiny{$\pm$2.48} & 12.53 \tiny{$\pm$2.24} & 71.40 \tiny{$\pm$5.64} & 19.31 \tiny{$\pm$8.77} & 35.90 \tiny{$\pm$10.66} & 99.69 \tiny{$\pm$0.00} & 99.38 \tiny{$\pm$0.00}   \\ 
 \midrule 
200 & 0.49 \tiny{$\pm$0.13} & 0.16 \tiny{$\pm$0.09} & 0.17 \tiny{$\pm$0.08} & 7.85 \tiny{$\pm$1.69} & 7.08 \tiny{$\pm$1.82} & 95.28 \tiny{$\pm$1.29} & 59.41 \tiny{$\pm$10.32} & 63.77 \tiny{$\pm$9.15} & 99.38 \tiny{$\pm$0.00} & 98.75 \tiny{$\pm$0.00}   \\ 
 \midrule 
400 & 0.40 \tiny{$\pm$0.12} & 0.15 \tiny{$\pm$0.07} & 0.15 \tiny{$\pm$0.08} & 1.81 \tiny{$\pm$0.58} & 3.25 \tiny{$\pm$0.81} & 97.63 \tiny{$\pm$0.24} & 92.06 \tiny{$\pm$2.63} & 83.64 \tiny{$\pm$4.76} & 98.75 \tiny{$\pm$0.00} & 97.50 \tiny{$\pm$0.00}   \\ 
 \midrule 
600 & 0.34 \tiny{$\pm$0.13} & 0.12 \tiny{$\pm$0.05} & 0.17 \tiny{$\pm$0.08} & 0.44 \tiny{$\pm$0.14} & 2.25 \tiny{$\pm$0.56} & 97.36 \tiny{$\pm$0.20} & 97.07 \tiny{$\pm$0.57} & 87.92 \tiny{$\pm$2.68} & 98.12 \tiny{$\pm$0.00} & 96.25 \tiny{$\pm$0.00}   \\ 
 \midrule 
800 & 0.28 \tiny{$\pm$0.10} & 0.07 \tiny{$\pm$0.03} & 0.15 \tiny{$\pm$0.06} & 0.12 \tiny{$\pm$0.05} & 1.76 \tiny{$\pm$0.50} & 97.10 \tiny{$\pm$0.23} & 97.36 \tiny{$\pm$0.11} & 88.96 \tiny{$\pm$2.14} & 97.50 \tiny{$\pm$0.00} & 95.00 \tiny{$\pm$0.00}   \\ 
 \midrule 
1000 & 0.25 \tiny{$\pm$0.10} & 0.03 \tiny{$\pm$0.02} & 0.19 \tiny{$\pm$0.08} & 0.04 \tiny{$\pm$0.02} & 1.37 \tiny{$\pm$0.34} & 96.90 \tiny{$\pm$0.21} & 96.84 \tiny{$\pm$0.03} & 89.61 \tiny{$\pm$1.49} & 96.88 \tiny{$\pm$0.00} & 93.75 \tiny{$\pm$0.00}\\ 
 \toprule 
 \end{tabular}
 } 

 \vspace{5pt}
 \caption{{\textbf{Unit-Ball}. Effect of variation of  $N_q$, the maximum number of samples the algorithm can use for training, using a UCB (i.e., $C_1=0.25$) on error estimates. We keep validation data size $N_v$ fixed at $4000$ and use error threshold $\epsilon_a=1\%$. We report the mean and std. deviation over 10 runs with different random seeds.}}
 \end{table}

\begin{table}[h] 
\centering
 \scalebox{0.75}{ 
 \begin{tabular}{r|ccccc|ccccc}
\toprule
\multirow{2}{*}{$\mathbf{N_v}$} & \multicolumn{5}{c}{\textbf{Error (\%)} } &  \multicolumn{5}{c}{\textbf{Coverage (\%)}} \\ 
\cmidrule{2-11}
& {\textbf{TBAL}} & {\textbf{AL+SC}} & {\textbf{PL+SC}} & {\textbf{AL}} & {\textbf{PL}} & {\textbf{TBAL}} & {\textbf{AL+SC}} & {\textbf{PL+SC}} &{\textbf{AL}} & {\textbf{PL}} \\
\toprule 
2000 & 0.0 \tiny{$\pm$0.0} & 0.0 \tiny{$\pm$0.0} & 0.0 \tiny{$\pm$0.0} & 0.0 \tiny{$\pm$0.0} & 0.0 \tiny{$\pm$0.0} & 0.0 \tiny{$\pm$0.0} & 0.0 \tiny{$\pm$0.0} & 0.0 \tiny{$\pm$0.0} & 0.0 \tiny{$\pm$0.0} & 0.0 \tiny{$\pm$0.0}   \\ 
 \midrule 
4000 & 13.88 \tiny{$\pm$5.42} & 13.31 \tiny{$\pm$10.79} & 12.55 \tiny{$\pm$7.23} & 34.34 \tiny{$\pm$0.32} & 31.46 \tiny{$\pm$0.20} & 0.72 \tiny{$\pm$0.55} & 0.48 \tiny{$\pm$0.04} & 0.69 \tiny{$\pm$0.39} & 95.00 \tiny{$\pm$0.00} & 90.00 \tiny{$\pm$0.00}   \\ 
 \midrule 
6000 & 14.18 \tiny{$\pm$0.76} & 11.52 \tiny{$\pm$0.82} & 12.21 \tiny{$\pm$1.49} & 34.42 \tiny{$\pm$0.34} & 31.46 \tiny{$\pm$0.20} & 17.29 \tiny{$\pm$0.72} & 8.18 \tiny{$\pm$1.12} & 11.81 \tiny{$\pm$2.70} & 95.00 \tiny{$\pm$0.00} & 90.00 \tiny{$\pm$0.00}   \\ 
 \midrule 
8000 & 13.97 \tiny{$\pm$0.14} & 11.31 \tiny{$\pm$0.51} & 12.22 \tiny{$\pm$0.61} & 34.42 \tiny{$\pm$0.34} & 31.46 \tiny{$\pm$0.20} & 36.36 \tiny{$\pm$1.78} & 23.40 \tiny{$\pm$1.15} & 29.99 \tiny{$\pm$0.97} & 95.00 \tiny{$\pm$0.00} & 90.00 \tiny{$\pm$0.00}   \\ 
 \midrule 
10000 & 13.42 \tiny{$\pm$0.29} & 11.14 \tiny{$\pm$0.54} & 12.12 \tiny{$\pm$0.40} & 34.42 \tiny{$\pm$0.34} & 31.46 \tiny{$\pm$0.20} & 43.79 \tiny{$\pm$0.93} & 33.38 \tiny{$\pm$0.72} & 39.40 \tiny{$\pm$0.50} & 95.00 \tiny{$\pm$0.00} & 90.00 \tiny{$\pm$0.00}\\ 
 \toprule 
 \end{tabular}
 } 

 \vspace{5pt}
 \caption{{\textbf{Tiny-ImageNet}. Effect of variation of validation data size ($N_v$), without using a UCB (i.e., $C_1=0$) on error estimates. We keep training data size $N_q$ fixed at $10K$ and use error threshold $\epsilon_a=10\%$. We report the mean and std. deviation over 10 runs with different random seeds.}}
 \end{table}

\begin{table}[h] 
\centering
 \scalebox{0.75}{ 
 \begin{tabular}{r|ccccc|ccccc}
\toprule
\multirow{2}{*}{$\mathbf{N_v}$} & \multicolumn{5}{c}{\textbf{Error (\%)} } &  \multicolumn{5}{c}{\textbf{Coverage (\%)}} \\ 
\cmidrule{2-11}
& {\textbf{TBAL}} & {\textbf{AL+SC}} & {\textbf{PL+SC}} & {\textbf{AL}} & {\textbf{PL}} & {\textbf{TBAL}} & {\textbf{AL+SC}} & {\textbf{PL+SC}} &{\textbf{AL}} & {\textbf{PL}} \\
\toprule 
2000 & 0.0 \tiny{$\pm$0.0} & 0.0 \tiny{$\pm$0.0} & 0.0 \tiny{$\pm$0.0} & 0.0 \tiny{$\pm$0.0} & 0.0 \tiny{$\pm$0.0} & 0.0 \tiny{$\pm$0.0} & 0.0 \tiny{$\pm$0.0} & 0.0 \tiny{$\pm$0.0} & 0.0 \tiny{$\pm$0.0} & 0.0 \tiny{$\pm$0.0}   \\ 
 \midrule 
4000 & 10.50 \tiny{$\pm$6.01} & 7.37 \tiny{$\pm$4.57} & 6.04 \tiny{$\pm$1.85} & 34.20 \tiny{$\pm$0.32} & 31.52 \tiny{$\pm$0.27} & 0.47 \tiny{$\pm$0.05} & 0.48 \tiny{$\pm$0.06} & 0.43 \tiny{$\pm$0.01} & 95.00 \tiny{$\pm$0.00} & 90.00 \tiny{$\pm$0.00}   \\ 
 \midrule 
6000 & 10.61 \tiny{$\pm$0.62} & 7.71 \tiny{$\pm$1.03} & 8.53 \tiny{$\pm$1.70} & 34.42 \tiny{$\pm$0.34} & 31.46 \tiny{$\pm$0.20} & 10.16 \tiny{$\pm$1.10} & 4.31 \tiny{$\pm$1.10} & 7.13 \tiny{$\pm$1.18} & 95.00 \tiny{$\pm$0.00} & 90.00 \tiny{$\pm$0.00}   \\ 
 \midrule 
8000 & 9.90 \tiny{$\pm$0.63} & 6.80 \tiny{$\pm$0.77} & 7.81 \tiny{$\pm$0.85} & 34.42 \tiny{$\pm$0.34} & 31.46 \tiny{$\pm$0.20} & 25.84 \tiny{$\pm$1.57} & 14.43 \tiny{$\pm$2.01} & 19.23 \tiny{$\pm$1.43} & 95.00 \tiny{$\pm$0.00} & 90.00 \tiny{$\pm$0.00}   \\ 
 \midrule 
10000 & 8.97 \tiny{$\pm$0.36} & 6.87 \tiny{$\pm$0.48} & 7.32 \tiny{$\pm$0.49} & 34.42 \tiny{$\pm$0.34} & 31.46 \tiny{$\pm$0.20} & 32.19 \tiny{$\pm$1.34} & 21.96 \tiny{$\pm$1.35} & 27.01 \tiny{$\pm$0.98} & 95.00 \tiny{$\pm$0.00} & 90.00 \tiny{$\pm$0.00}\\ 
 \toprule 
 \end{tabular}
 } 
 \vspace{5pt}
 \caption{{\textbf{Tiny-ImageNet}. Effect of variation of validation data size ($N_v$), using a UCB (i.e., $C_1=0.25$) on error estimates. We keep training data size $N_q$ fixed at $10K$ and use error threshold $\epsilon_a=10\%$. We report the mean and std. deviation over 10 runs with different random seeds.}}
 \end{table}

\begin{table}[h]
\centering
 \scalebox{0.75}{ 
 \begin{tabular}{r|ccccc|ccccc}
\toprule
\multirow{2}{*}{$\mathbf{N_q}$} & \multicolumn{5}{c}{\textbf{Error (\%)} } &  \multicolumn{5}{c}{\textbf{Coverage (\%)}} \\ 
\cmidrule{2-11}
& {\textbf{TBAL}} & {\textbf{AL+SC}} & {\textbf{PL+SC}} & {\textbf{AL}} & {\textbf{PL}} & {\textbf{TBAL}} & {\textbf{AL+SC}} & {\textbf{PL+SC}} &{\textbf{AL}} & {\textbf{PL}} \\
\toprule 
2000 & 14.02 \tiny{$\pm$0.26} & 11.49 \tiny{$\pm$0.80} & 12.17 \tiny{$\pm$0.35} & 52.34 \tiny{$\pm$1.16} & 42.94 \tiny{$\pm$0.26} & 24.34 \tiny{$\pm$0.86} & 14.41 \tiny{$\pm$1.00} & 25.13 \tiny{$\pm$0.58} & 99.00 \tiny{$\pm$0.00} & 98.00 \tiny{$\pm$0.00}   \\ 
 \midrule 
4000 & 14.10 \tiny{$\pm$0.77} & 11.58 \tiny{$\pm$0.26} & 11.92 \tiny{$\pm$0.39} & 43.14 \tiny{$\pm$0.33} & 36.07 \tiny{$\pm$0.41} & 34.16 \tiny{$\pm$1.00} & 21.84 \tiny{$\pm$1.36} & 33.41 \tiny{$\pm$0.65} & 98.00 \tiny{$\pm$0.00} & 96.00 \tiny{$\pm$0.00}   \\ 
 \midrule 
6000 & 13.55 \tiny{$\pm$0.17} & 11.33 \tiny{$\pm$0.35} & 12.31 \tiny{$\pm$0.16} & 38.73 \tiny{$\pm$0.59} & 33.51 \tiny{$\pm$0.19} & 37.80 \tiny{$\pm$1.05} & 28.59 \tiny{$\pm$1.53} & 38.14 \tiny{$\pm$0.85} & 97.00 \tiny{$\pm$0.00} & 94.00 \tiny{$\pm$0.00}   \\ 
 \midrule 
8000 & 13.79 \tiny{$\pm$0.27} & 11.72 \tiny{$\pm$0.32} & 12.36 \tiny{$\pm$0.30} & 36.06 \tiny{$\pm$0.30} & 32.33 \tiny{$\pm$0.32} & 42.00 \tiny{$\pm$1.71} & 32.00 \tiny{$\pm$1.12} & 39.64 \tiny{$\pm$1.07} & 96.00 \tiny{$\pm$0.00} & 92.00 \tiny{$\pm$0.00}   \\ 
 \midrule 
10000 & 13.26 \tiny{$\pm$0.35} & 11.42 \tiny{$\pm$0.28} & 12.14 \tiny{$\pm$0.45} & 34.27 \tiny{$\pm$0.21} & 31.47 \tiny{$\pm$0.17} & 43.63 \tiny{$\pm$0.38} & 33.80 \tiny{$\pm$0.82} & 39.23 \tiny{$\pm$0.37} & 95.00 \tiny{$\pm$0.00} & 90.00 \tiny{$\pm$0.00}\\ 
 \toprule 
 \end{tabular}
 } 
 \vspace{5pt}
 \caption{{\textbf{Tiny-ImageNet}. Effect of variation of  $N_q$, the maximum number of samples the algorithm can use for training, without using a UCB (i.e., $C_1=0$) on error estimates. We keep validation data size $N_v$ fixed at $10K$ and use error threshold $\epsilon_a=10\%$. We report the mean and std. deviation over 5 runs with different random seeds.}}
 \end{table}
\begin{table}[h] 
\centering
 \scalebox{0.75}{ 
 \begin{tabular}{r|ccccc|ccccc}
\toprule
\multirow{2}{*}{$\mathbf{N_q}$} & \multicolumn{5}{c}{\textbf{Error (\%)} } &  \multicolumn{5}{c}{\textbf{Coverage (\%)}} \\ 
\cmidrule{2-11}
& {\textbf{TBAL}} & {\textbf{AL+SC}} & {\textbf{PL+SC}} & {\textbf{AL}} & {\textbf{PL}} & {\textbf{TBAL}} & {\textbf{AL+SC}} & {\textbf{PL+SC}} &{\textbf{AL}} & {\textbf{PL}} \\
\toprule 
2000 & 9.22 \tiny{$\pm$1.04} & 7.42 \tiny{$\pm$0.71} & 7.48 \tiny{$\pm$0.32} & 52.34 \tiny{$\pm$1.16} & 42.94 \tiny{$\pm$0.26} & 17.51 \tiny{$\pm$1.16} & 9.33 \tiny{$\pm$0.66} & 17.02 \tiny{$\pm$1.32} & 99.00 \tiny{$\pm$0.00} & 98.00 \tiny{$\pm$0.00}   \\ 
 \midrule 
4000 & 9.30 \tiny{$\pm$0.38} & 6.97 \tiny{$\pm$0.39} & 7.37 \tiny{$\pm$0.21} & 43.14 \tiny{$\pm$0.33} & 36.07 \tiny{$\pm$0.41} & 25.01 \tiny{$\pm$1.20} & 14.25 \tiny{$\pm$1.71} & 22.29 \tiny{$\pm$0.61} & 98.00 \tiny{$\pm$0.00} & 96.00 \tiny{$\pm$0.00}   \\ 
 \midrule 
6000 & 9.12 \tiny{$\pm$0.22} & 6.85 \tiny{$\pm$0.26} & 7.49 \tiny{$\pm$0.35} & 38.73 \tiny{$\pm$0.59} & 33.51 \tiny{$\pm$0.19} & 28.06 \tiny{$\pm$0.75} & 17.51 \tiny{$\pm$0.36} & 25.60 \tiny{$\pm$0.34} & 97.00 \tiny{$\pm$0.00} & 94.00 \tiny{$\pm$0.00}   \\ 
 \midrule 
8000 & 9.21 \tiny{$\pm$0.14} & 7.38 \tiny{$\pm$0.53} & 7.71 \tiny{$\pm$0.25} & 36.06 \tiny{$\pm$0.30} & 32.33 \tiny{$\pm$0.32} & 30.88 \tiny{$\pm$0.64} & 21.18 \tiny{$\pm$0.90} & 27.26 \tiny{$\pm$0.78} & 96.00 \tiny{$\pm$0.00} & 92.00 \tiny{$\pm$0.00}   \\ 
 \midrule 
10000 & 8.95 \tiny{$\pm$0.23} & 7.10 \tiny{$\pm$0.26} & 7.42 \tiny{$\pm$0.36} & 34.27 \tiny{$\pm$0.21} & 31.47 \tiny{$\pm$0.17} & 32.31 \tiny{$\pm$1.21} & 22.34 \tiny{$\pm$0.61} & 27.36 \tiny{$\pm$0.59} & 95.00 \tiny{$\pm$0.00} & 90.00 \tiny{$\pm$0.00}\\ 
 \toprule 
 \end{tabular}
 } 

 \vspace{5pt}
 \caption{{\textbf{Tiny-ImageNet}. Effect of variation of  $N_q$, the maximum number of samples the algorithm can use for training, using a UCB (i.e., $C_1=0.25$) on error estimates. We keep validation data size $N_v$ fixed at $10K$ and use error threshold $\epsilon_a=10\%$. We report the mean and std. deviation over 5 runs with different random seeds.}}
 \end{table}

\newpage 
\subsection{Auto Labeling Visualization}
\label{sec:visualizations}
In this section, we visualize the process of TBAL. We use the dimensionality reduction method, PaCMAP \cite{JMLR:v22:20-1061}, to visualize the features of the samples. For neural network models, we visualize the PaCMAP embeddings of the penultimate layer's output and for linear models, we use PaCMAP on the raw features. In these figures, each row corresponds to one TBAL round. Each figure shows a few selected rounds of auto-labeling. Each figure has four columns (left to right), which show: \textbf{a}) The samples that are labeled by TBAL in the round are shown in that row. \textbf{b}) The embeddings for training samples in that round. \textbf{c}) The embeddings for validation data points in that round. \textbf{d}) The score distribution for the validation dataset in that round. 

In Figure \ref{fig:mnist-linear-visualization} we see visualizations for auto-labeling on the MNIST data using linear models. In this setting the data exhibits clustering structure in the PaCMAP embeddings learned on the raw features and the confidence (probability) scores produced are also reasonably well calibrated which leads to good auto-labeling performance.

The visualizations for the process of TBAL on CIFAR-10 using the small network (a small CNN network with 2 convolution layers followed by 3 fully connected layers ~\cite{small-net}) with energy scores and soft-max scores for confidence functions are shown in Figures~\ref{fig:cifar10-smallnet-energy} and \ref{fig:cifar10-smallnet-softmax} respectively. We note that both the energy scores and soft-max scores do not seem to be calibrated to the correctness of the predicted labels which makes it difficult to identify subsets of unlabeled data where the current hypothesis in each round could have potentially auto-labeled. We also note that the test accuracies of the trained models were around  $50\%$ for most of the rounds of TBAL even though the small network model is not a powerful enough model class for this dataset. Note that CIFAR-10 has 10 classes, so the accuracy of $50\%$ is much better than random guessing and one would expect to be able to auto-label a reasonably large chunk of the data with such a model if accompanied by a good confidence function. This highlights the important role that the confidence function plays in a TBAL system and more investigation is needed which is left to future work.

Note that, in our auto-labeling implementation we find class specific thresholds. In these figures, we show the histograms of scores for all classes for simplicity. We want to emphasize that the visualization figures in this section are 2D representations (approximation) of the high-dimensional features (either of the penultimate layer or the raw features).

\begin{figure*}[ht]
\includegraphics[width=1\textwidth]{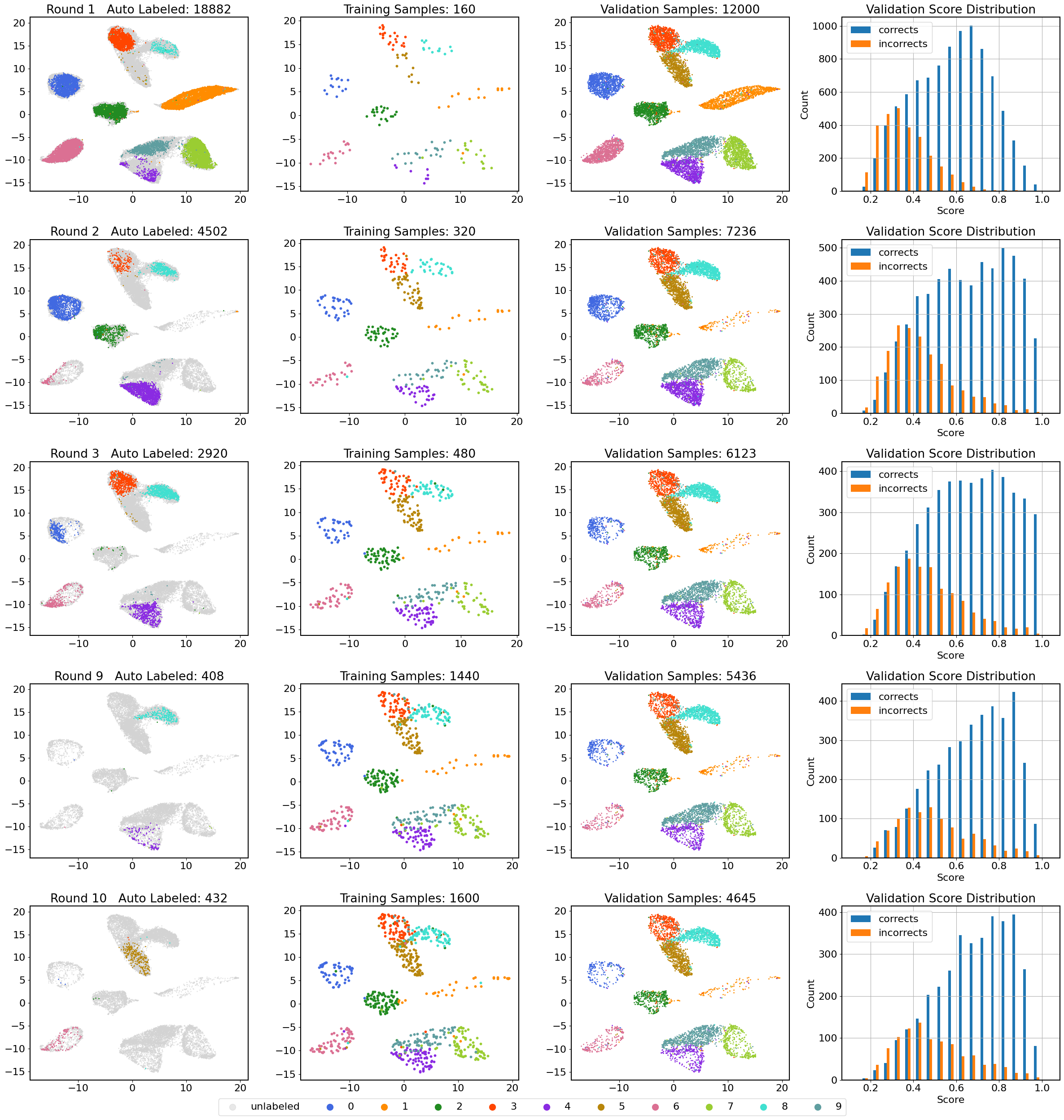}
\caption{{Auto-labeling MNIST data using linear classifiers. 
Validation size = 12k. Maximum training samples = 1600. Each round algorithm queries 160 samples. Coverage of auto-labeling is 62.9\% with 98.0\% accuracy. For the rounds we show, the test error rates are 21.4\%, 13.9\%, 12.5\%, 10.2\%, and 9.8\%, respectively. For four columns (left to right), we show: \textbf{a}) The samples that are labeled by TBAL in this round. \textbf{b}) The embeddings for training samples. \textbf{c}) The embeddings for validation data points. \textbf{d}) The score distribution for the validation dataset.
}}
\label{fig:mnist-linear-visualization}
\end{figure*}

\begin{figure*}[ht]
\includegraphics[width=1\textwidth]{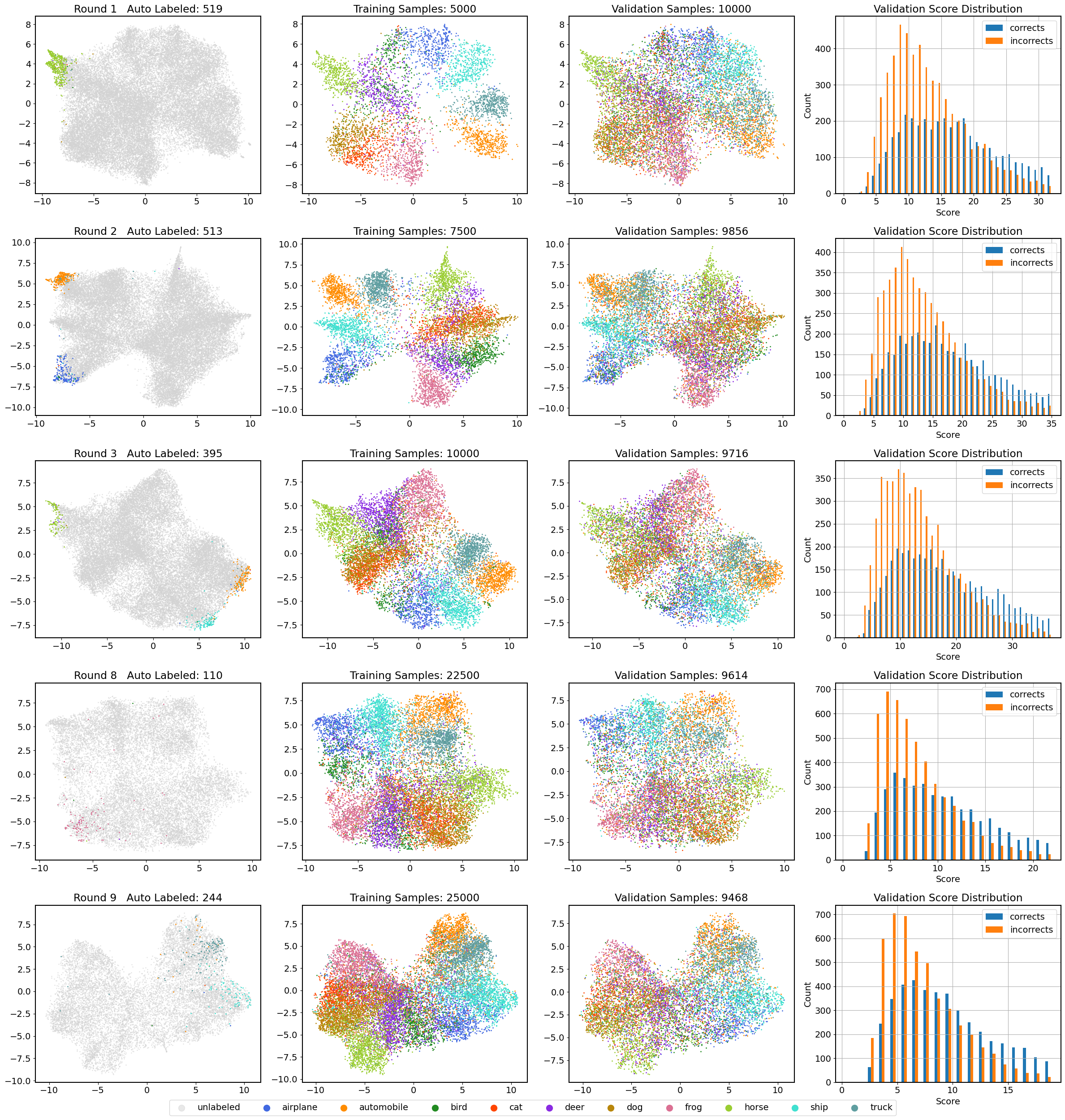}
\caption{{Auto-labeling CIFAR-10 data using a small network and energy scores. Validation size = 10k. Maximum training samples = 25k. Each round algorithm queries 2500 samples. Coverage of auto-labeling is 5.3\% with 90.0\% accuracy. For the rounds we show, the test error rates are 56.6\%, 55.2\%, 55.6\%, 53.0\%, and 49.3\% respectively. For four columns (left to right), we show: \textbf{a}) The samples that are labeled by TBAL in this round. \textbf{b}) The embeddings for training samples. \textbf{c}) The embeddings for validation data points. \textbf{d}) The score distribution for the validation dataset.}}
\label{fig:cifar10-smallnet-energy}
\end{figure*}

\begin{figure*}[ht]
\includegraphics[width=1\textwidth]{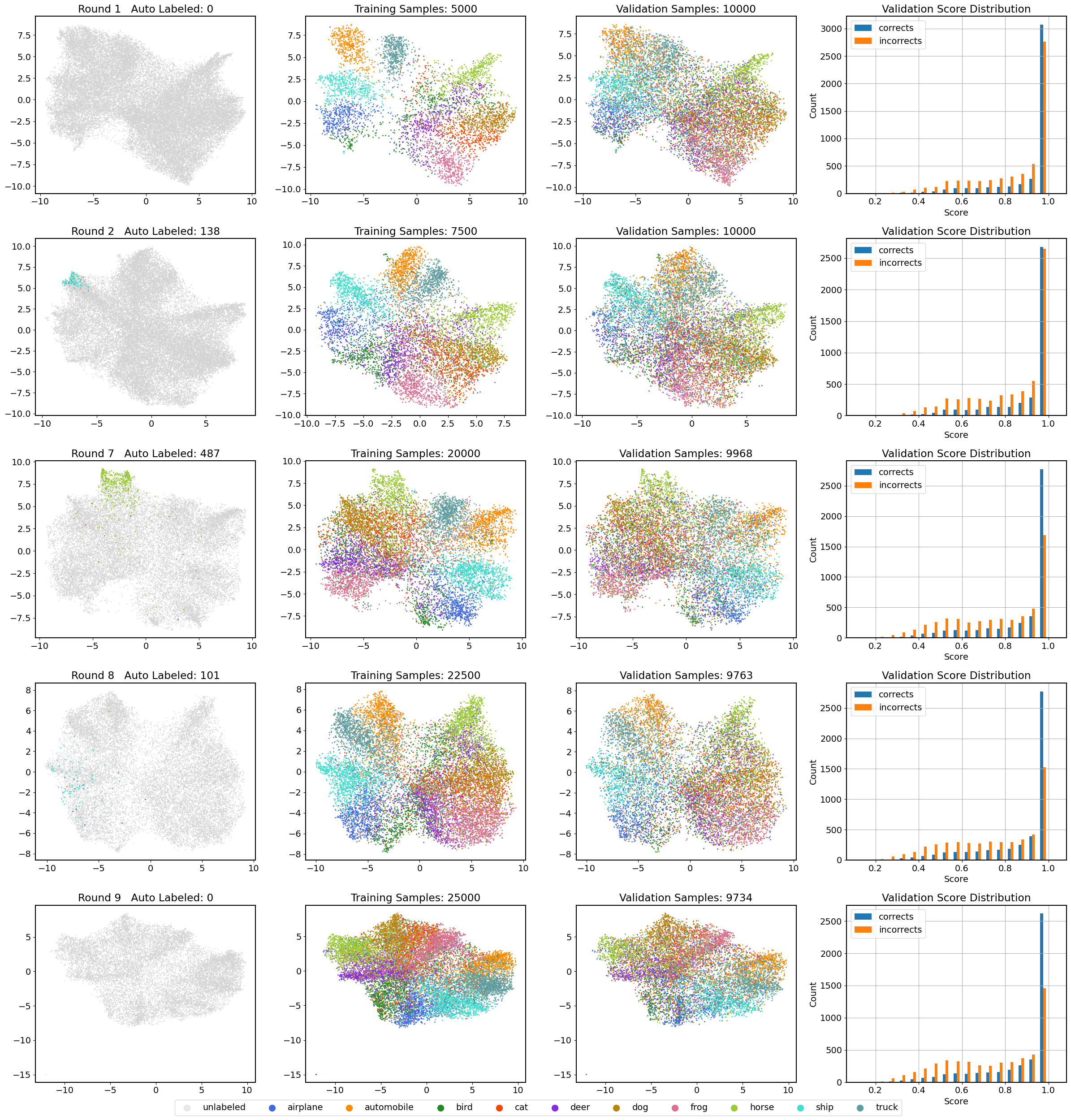}
\caption{{Auto-labeling CIFAR-10 data using a small network and softmax scores. Validation size = 10k. Maximum training samples = 25k. Each round algorithm queries 2500 samples. Coverage of auto-labeling is 2.3\% with 91.0\% accuracy. For the rounds visualized here in each row, the test error rates of the trained classifiers are 56.6\%, 59.1\%, 52.8\%, 50.5\%, and 51.7\% respectively. For four columns (left to right), we show: \textbf{a}) The samples that are labeled by TBAL in this round. \textbf{b}) The embeddings for training samples. \textbf{c}) The embeddings for validation data points. \textbf{d}) The score distribution for the validation dataset.}}
\label{fig:cifar10-smallnet-softmax}
\end{figure*}

\end{document}